\newtheorem{theorem}{Theorem}
\newtheorem{lemma}[theorem]{Lemma} %
\definecolor{lightergray}{rgb}{0.9, 0.9, 0.9}
\definecolor{evenlightergray}{rgb}{0.95, 0.95, 0.95}
\definecolor{Darkblue}{rgb}{0,0,0.4}
\definecolor{Brown}{cmyk}{0,0.81,1.,0.60}
\definecolor{Purple}{cmyk}{0.45,0.86,0,0}
\title{\textbf{Sample, Scrutinize and Scale: Effective Inference-Time Search by Scaling Verification}} %
\author{Eric Zhao\textsuperscript{1,2}
\and Pranjal Awasthi\textsuperscript{1}
\and Sreenivas Gollapudi\textsuperscript{1}}
\date{\textsuperscript{1}Google Research, \textsuperscript{2}UC Berkeley}
\newcommand\blfootnote[1]{%
  \begingroup
  \renewcommand\thefootnote{}\footnote{#1}%
  \addtocounter{footnote}{-1}%
  \endgroup
}
\begin{document}
\maketitle

\blfootnote{Corresponding author: \href{mailto:eric.zh@berkeley.edu}{eric.zh@berkeley.edu}.}
\blfootnote{Code and benchmark: \href{https://github.com/google-research/google-research/sampling_based_search}{github.com/google-research/google-research/sampling\_based\_search}.}

\newcommand{\abstractnamefont}{\normalfont\normalsize\bfseries}

\begin{abstract}
\normalsize

Sampling-based search, a simple paradigm for utilizing test-time compute, involves generating multiple candidate responses and selecting the best one---typically by having models self-verify each response for correctness.
In this paper, we study the scaling trends governing sampling-based search.
Among our findings is that simply scaling up a minimalist implementation of sampling-based search, using only random sampling and direct self-verification, provides a practical inference method that, for example, elevates the reasoning capabilities of Gemini v1.5 Pro above that of o1-Preview on popular benchmarks.
We partially attribute the scalability of sampling-based search to a phenomenon of {\em implicit scaling}, where sampling a larger pool of responses in turn improves self-verification accuracy.
We further identify two useful principles for improving self-verification capabilities with test-time compute: (1) comparing across responses provides helpful signals about the locations of errors and hallucinations, and (2) different model output styles are useful for different contexts---chains of thought are useful for reasoning but harder to verify.
We also find that, though accurate verification can be elicited,  frontier models demonstrate remarkably weak out-of-box verification capabilities and introduce a benchmark to measure progress on these deficiencies.
    
\end{abstract}

\section{Introduction}
\label{sec:introduction}
Recent advances in language models highlight the importance of test-time compute scaling wherein one uses more compute during inference to enhance reasoning capabilities~\citep{o1-preview, deepseekai2025deepseekr1incentivizingreasoningcapability, agarwal2024manyshotincontextlearning, DBLP:conf/nips/Wei0SBIXCLZ22, DBLP:conf/nips/YaoYZS00N23, akyurek2024surprisingeffectivenesstesttimetraining}.
There are many methods for increasing test-time compute usage, including implicitly encouraging longer responses via reinforcement learning~\citep{o1-preview, deepseekai2025deepseekr1incentivizingreasoningcapability} or explicitly via prompting \citep{DBLP:conf/nips/Wei0SBIXCLZ22, DBLP:conf/nips/YaoYZS00N23}.
However, \emph{sampling-based search}---an instance of the generate-and-test approach where a model generates many responses in parallel, e.g. via random sampling or delegation, and selects what the model guesses to be the best one---remains one of the most natural and fundamental paradigms.
In addition to being complementary with other test-time compute scaling strategies, 
it also has the unique advantage of being embarrassingly parallel and allowing for arbitrarily scaling: simply sample more responses \citep{cobbe_training_2021,DBLP:conf/iclr/0002WSLCNCZ23}.
As a result, sampling-based search plays an increasingly crucial role as language models are set loose on frontier mathematical and scientific problems where inference compute budgets reach thousands of dollars or more per problem.

Though recent works demonstrate the benefits of sampling-based search \citep{cobbe_training_2021,DBLP:conf/iclr/0002WSLCNCZ23,xue_rcot_2023}, many questions remain as to what scaling trends govern this fundamental test-time compute scaling strategy.
To develop this understanding, we study a  minimalist---yet remarkably effective---instantiation of sampling-based search that uses a language model \citep{geminiteam2024gemini15unlockingmultimodal} to both generate a set of candidate responses via random sampling and select the best one by attempting to verify each response with natural language.
Specifically, we consider the case where models must self-verify their responses to  select the best answer, and do not make the strong assumption that one can access ground-truth answers or symbolic systems that exactly verify correctness.
In this setup, we address the question:
{\em what test-time scaling trends emerge as we scale both the number of sampled responses and verification capabilities?}
In particular, what are the limits of scaling this simple sampling-based search paradigm and how much does one need to continuously scale verification capability as one scales up search?

\paragraph{Our findings.}
We first identify scaling trends demonstrating that reasoning performance continues to improve with sampling-based search even as test-time compute is scaled well beyond the point where the performance of self-consistency~\citep{DBLP:conf/iclr/0002WSLCNCZ23} saturates.
At sufficient scale, even our minimalist implementation provides a significant leap in reasoning accuracy, lifting Gemini v1.5 Pro performance beyond o1-Preview, and Gemini v1.5 Flash beyond Gemini v1.5 Pro, on reasoning benchmarks such as LiveBench~\citep{white2024livebenchchallengingcontaminationfreellm} and the AIME~\citep{aime2024}, exhibiting sustained power-law scaling on the latter.
This not only highlights the importance of sampling-based search for scaling capability, but also suggests the utility of sampling-based search as a simple baseline on which to compare other test-time compute scaling strategies and measure genuine improvements in models' search capabilities.

We then attribute much of the strong scaling trends of sampling-based search to an \emph{implicit scaling} phenomenon.
Contrary to the intuition that sampling more responses should impose a greater burden on the verifier and reduce verification accuracy, we observe that scaling sampling indirectly enhances verification accuracy.
At a high-level, this is because well-written responses are easier to verify than poorly written responses, and scaling sampling widens the pool of well-written candidates.

We further identify two effective strategies for scaling verification capabilities using test-time compute: (1) directly comparing candidate responses and (2) task-specific rewriting of candidate responses.
The former mitigates a core weakness of language models, which struggle to identify mistakes and hallucinations unless given their locations~\citep{tyen_llms_2024}, by leveraging the fact that differences between candidate responses provide a strong signal for where errors might be located.
The latter leverages our observation of \emph{output style suitability} where chain-of-thought output formats are beneficial when generating responses but harder to verify than more formal, mathematically conventional writing styles.
Surprisingly, while effective verification can be easily elicited from frontier models by communicating these strategies, we observe that frontier models have remarkably poor out-of-box verification capabilities and introduce a new benchmark to quantify these deficits.

\definecolor{subtleyellow}{RGB}{255, 255, 150}

\begin{table}[htbp]
\centering
\begin{tabular}{lcccc}
\toprule
\textbf{Method} & \textbf{AIME} & \textbf{MATH} & \textbf{LiveBench Math} & \textbf{LiveBench Reasoning} \\
\midrule
Pass@1 & 1 / 15 & 426 / 500 &  104 / 200 & 63 / 140 \\
Consistency@200 & 4 / 15  & 460 / 500 & 118 / 200  & 75 / 140 \\
Consistency@1,000 & 3 / 15  & 460 / 500 & 120 / 200  & 73 / 140 \\
\rowcolor{subtleyellow} Verification@200 & {\textbf{8}} / 15 & \textbf{467} / 500 & \textbf{135} / 200  & \textbf{97} / 140 \\
o1-Preview@1 & 7 / 15 & 428 / 500 & 131 / 200  & 95 / 140 \\
\bottomrule
\end{tabular}
\caption{Accuracy rates of the Gemini v1.5 Pro model using sampling-based search (Verification@200) on reasoning benchmarks, compared to other inference methods and o1-Preview performance.
Verification@200 consistently improves on Consistency@200 and surpasses o1-Preview.\protect\footnotemark{}
Each score reflects a single run, due to the high expense of search at this scale (see Section~\ref{sec:tech}).
}
\label{tab:main-sota}
\end{table}

\footnotetext{The o1-preview-2024-09-12 numbers in Table~\ref{tab:main-sota} use publicly reported figures, with MATH and AIME figures sourced from the OpenAI blog post \citep{o1-preview}, and LiveBench figures sourced from the LiveBench leaderboard (\href{https://livebench.ai/}{livebench.ai}). We found the performance of o1-Preview as accessed through the OpenAI API to slightly differ with publicly reported figures, e.g. scoring 26\% not 44\% on AIME, and scoring 77\% not 67\% on LiveBench Reasoning.}

\paragraph{Preview and outline.}
Table~\ref{tab:main-sota} summarizes our first finding: that, with effective self-verification, simply scaling sampling-based search is sufficient to approach state-of-art performance on reasoning and math benchmarks (AIME 2024 \citep{aime2024}, LiveBench Math, LiveBench Reasoning \citep{white2024livebenchchallengingcontaminationfreellm}, and the Berkeley MATH dataset \citep{DBLP:conf/nips/HendrycksBKABTS21}).
It depicts the accuracy of the Gemini v1.5 Pro model \citep{geminiteam2024gemini15unlockingmultimodal} when only one solution\footnote{As we focus on answering reasoning problems, we use ``model responses'' and ``model solutions'' interchangeably.} is attempted per question (Pass@1), when 200 solutions are attempted and the most common final answer is selected (Consistency@200, \cite{DBLP:conf/iclr/0002WSLCNCZ23}), and under sampling-based search, when 200 solutions are attempted and scored for correctness with the highest scorer selected (Verification@200, Algorithm~\ref{alg:verification-pipeline}).
With sampling-based search (Verification@200), Gemini v1.5 surpasses the performance of o1-Preview, a model explicitly trained on reasoning problems to leverage significant test-time compute and perform internal search.

The rest of this paper is devoted to studying the three key factors behind the numbers in Table~\ref{tab:main-sota}.
Section~\ref{subsec:scaling} analyzes the remarkable scalability of sampling-based search, as one varies both the compute spent on search and verification;
Section~\ref{subsec:implicit} analyzes the phenomenon of \emph{implicit scaling} and its role in driving this scalability; and
Section~\ref{section:pipeline} discusses  important principles for scaling self-verification capability, which may be of independent interest.
We also highlight deficits in the verification capabilities of frontier models with a new benchmark in Section~\ref{sec:benchmark}.
Technical details and detailed discussion of related work are found in Sections~\ref{sec:tech} and \ref{sec:related} respectively.

\begin{algorithm}[htbp]
\caption{Sampling-Based Search (Verification@$k_{\mathrm{inf}}$)}
\label{alg:verification-pipeline}
\begin{algorithmic}[1]

\Require Prompt $Q$, language model $\mathrm{LM}$, scaling parameters $k_{\mathrm{inf}}, k_{\mathrm{verif}}, k_{\mathrm{tie}}$.

\State Populate $\mathcal{S}$ with $k_{\mathrm{inf}}$ samples from $\mathrm{LM}(``\mathrm{Answer} \;Q")$. \Comment{\textbf{Stage 1: Generate Responses}}

\For{each candidate response $s_i \in \mathcal{S}$}\Comment{\textbf{Stage 2: Verify Responses}}
    \State Populate $\mathcal{V}_i$ with $k_{\mathrm{verif}}$ samples from $\mathrm{LM}(``\mathrm{Return}\;1[\mathrm{response} \; s_i \; \mathrm{to} \;Q \;\mathrm{is \;correct}]")$.
\EndFor

\State Gather the highest-scored response  $\mathcal{S}_{\mathrm{Best}} = \{s_i \mid i \in [k_{\mathrm{inf}}], \mathrm{Avg}(\mathcal{V}_i) \geq \max_{j \in [k_{\mathrm{inf}}]} \mathrm{Avg}(\mathcal{V}_j) - 0.05\}$.

\If{$|\mathcal{S}_{\mathrm{Best}}| = 1$}
    \State Return response $s_{i^*}$ where $i^* = \max_{j \in [k_{\mathrm{inf}}]} \mathrm{Avg}(\mathcal{V}_j)$.
\Else
    \For{each pair of candidate responses $(s_i, s_j) \in \binom{\mathcal{S}_{\mathrm{Best}}}{2}$}\Comment{\textbf{Tie-Break: Compare Responses}}
    \State Populate $\mathcal{C}_{i,j}$ with $k_{\mathrm{tie}}$ samples from $\mathrm{LM}(``\mathrm{Which\;of\;responses\;}\{s_i, s_j\}\mathrm{\;to\;}Q\;\mathrm{is \;correct?}")$. 
    \EndFor
    \State Return response $s_{i^*}$ where $i^*$ is the winner of the most matchups $\{\mathcal{C}_{i,j} \mid s_i, s_j \in \mathcal{S}_{\mathrm{Best}}\}$.
\EndIf

\end{algorithmic}
\end{algorithm}

\section{Scaling Trends of Sampling-Based Search}
\label{sec:allscaling}
This section examines how reasoning capability scales with two fundamental test-time compute axes:

\begin{itemize}[label=$\bullet$, left=0.1cm]
    \item \textbf{Search} refers to the compute used to discover candidate solutions. In this section, our knob for scaling search is the number of responses sampled for each reasoning problem ($k_{\mathrm{inf}}$ in Algorithm~\ref{alg:verification-pipeline}). %
    \item \textbf{Verification} refers to the compute used to scrutinize candidate solutions. Our knob for scaling verification is the number of verification scores we compute and average over per solution ($k_{\mathrm{verif}}$). %
\end{itemize}
For computational reasons, this section uses a streamlined form of Algorithm~\ref{alg:verification-pipeline} that omits tie-breaking. This, for example, results in significant underestimates of Verification@k on MATH (see Table~\ref{tab:ablation}). All figures are averaged over 20 random seeds, where each run subsamples solutions and verification scores from a primary run that sampled 200 solutions per question and 50 verification scores per solution.

\subsection{Scaling Trends}
\label{subsec:scaling}

\begin{figure}[tbp]
    \centering
    \begin{subfigure}[b]{0.49\textwidth}
        \centering
        \includegraphics[width=0.9\textwidth]{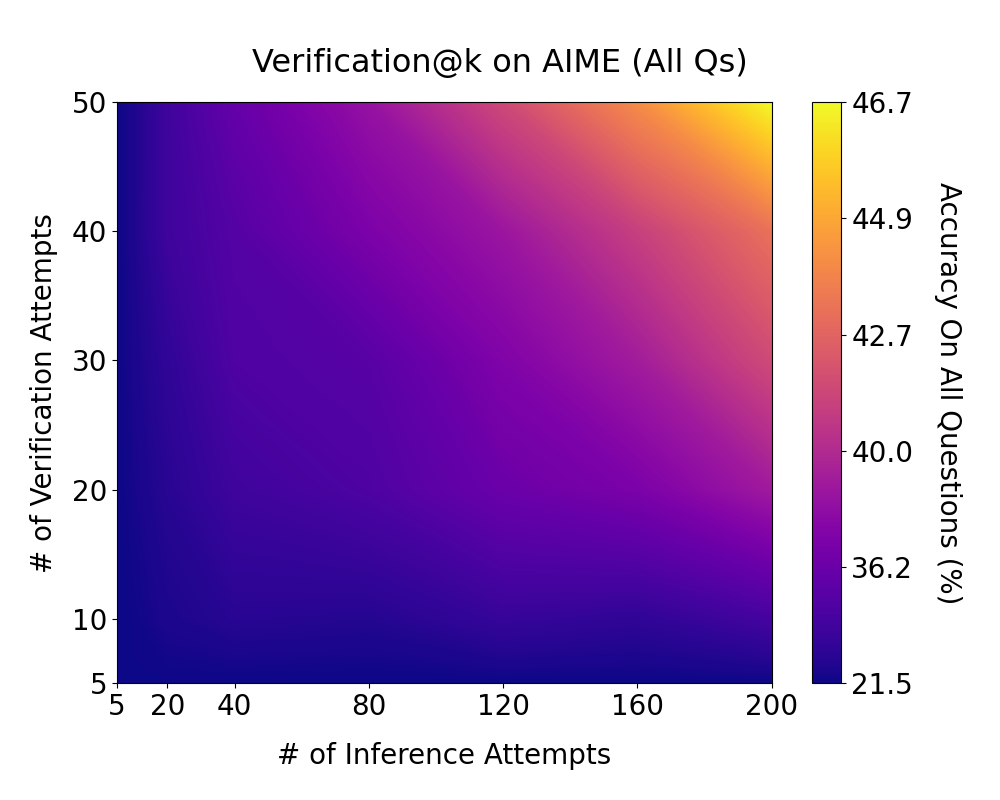}
    \end{subfigure}
    \hfill
    \begin{subfigure}[b]{0.49\textwidth}
        \centering
        \includegraphics[width=0.9\textwidth]{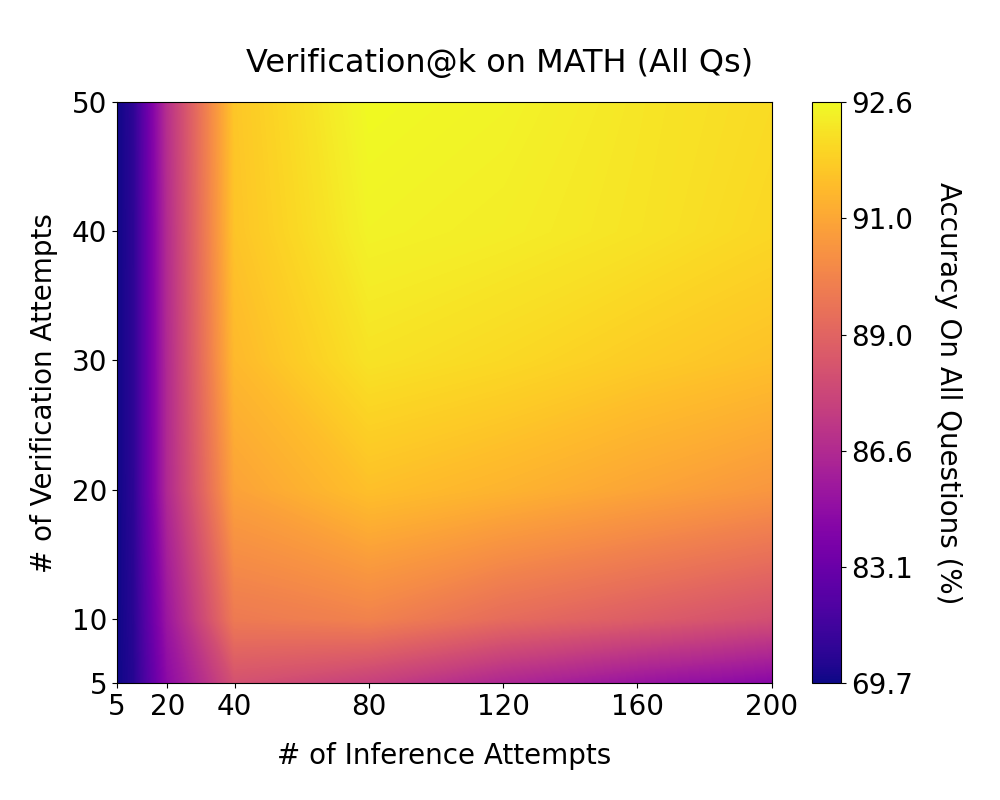}
    \end{subfigure}
    \\
    \begin{subfigure}[b]{0.49\textwidth}
        \centering
        \includegraphics[width=0.9\textwidth]{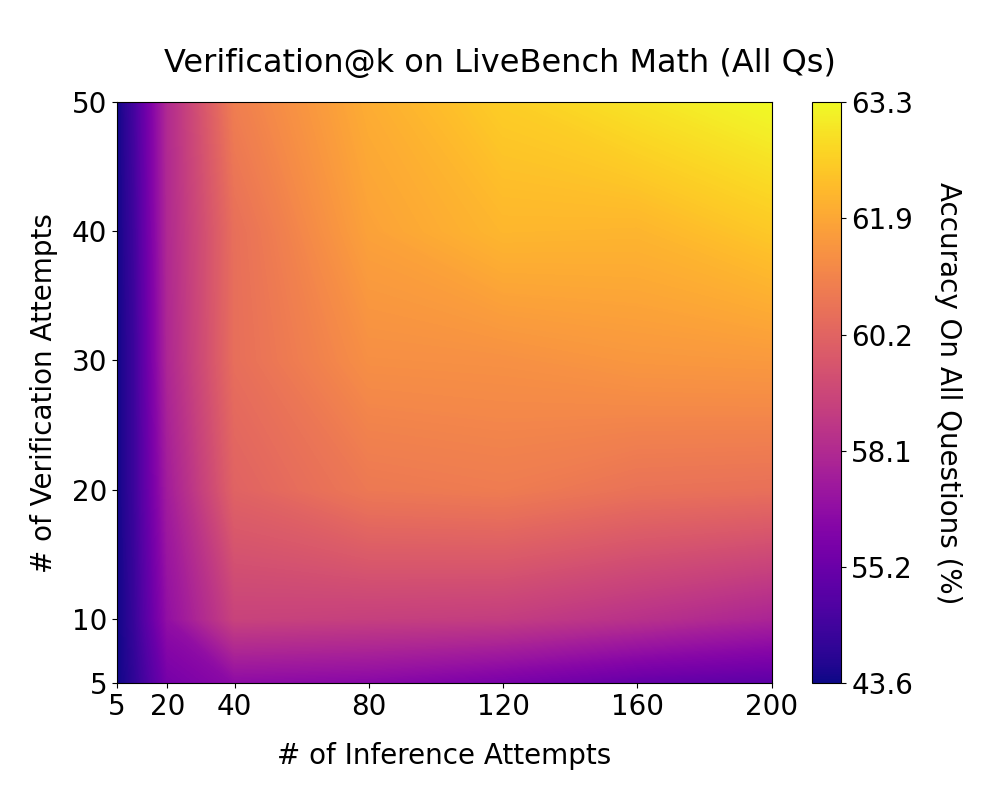}
    \end{subfigure}
    \hfill
    \begin{subfigure}[b]{0.49\textwidth}
        \centering
        \includegraphics[width=0.9\textwidth]{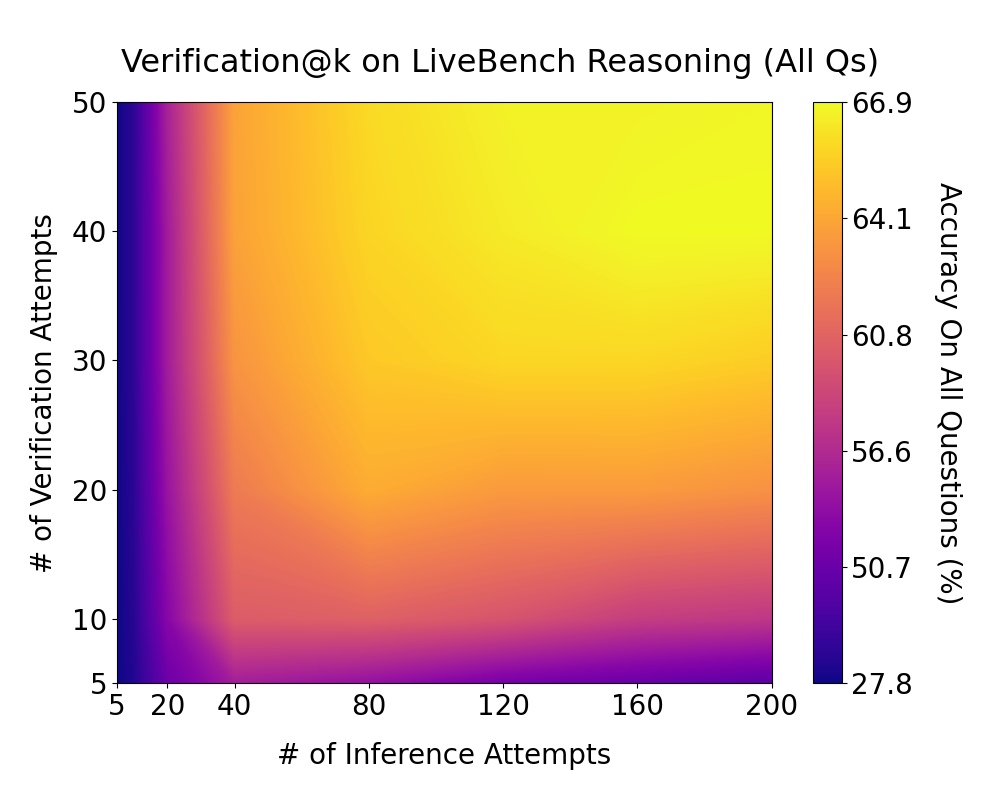}
    \end{subfigure}

    \caption{Heatmap of Gemini v1.5 Pro accuracy rates using sampling-based search (without tie-breaking) as the number of responses generated (x-axis) and verification attempts (y-axis) increase.
    Warmer colors indicate higher accuracy (cubic scale).
    The largest gains occur when scaling both search and verification, with the strongest trend on AIME.
    }
    \label{fig:four_images}
\end{figure}

Figure~\ref{fig:four_images} provides a heatmap of Verification@k on each benchmark in Table~\ref{tab:main-sota}  as we scale search and verification.
In addition to clear burn-in costs along both axes of scale, we can observe that the largest performance gains are realized when search and verification are both scaled.
These trends also indicate that the performances of sampling-based search, as reported in Table~\ref{tab:main-sota}, have not yet been scaled to saturation on these benchmarks.
This scaling trend is strongest on the AIME benchmark, where performance is bottlenecked by $k$ (search); we attribute this bottleneck to the difficulty of the AIME questions resulting in correct solutions only appearing with very low probability (see Table~\ref{tab:verifscores}).

\subsection{Implicit Scaling}
\label{subsec:implicit}

\begin{figure}[tbp]
    \centering
    \includegraphics[width=\textwidth]{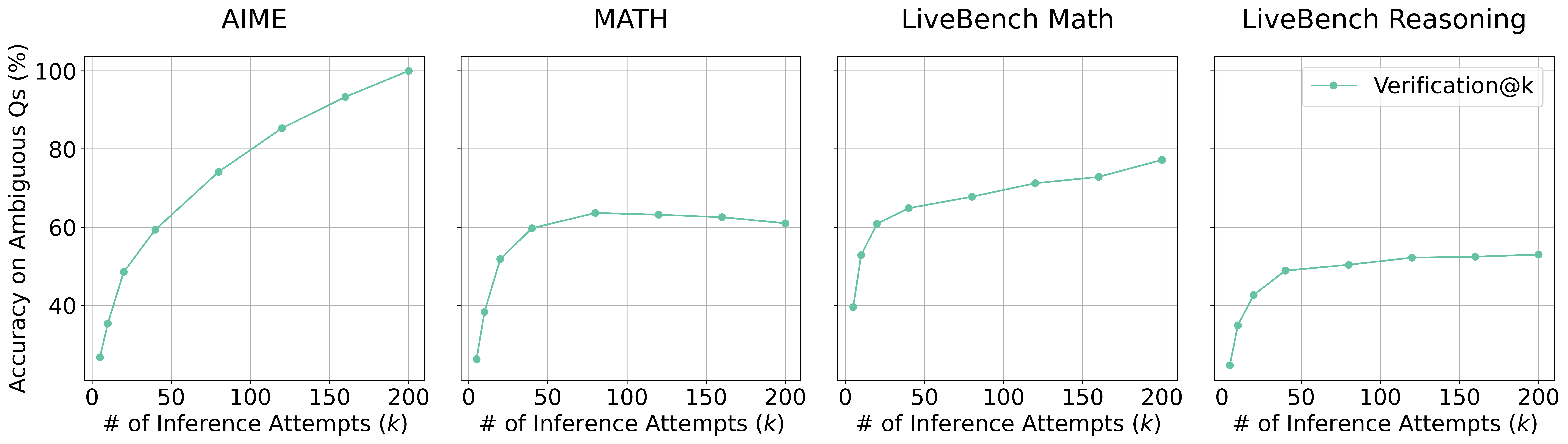}
    \caption{
    Plot of Gemini v1.5 Pro accuracy rates using sampling-based search (without tie-breaking and with $k_{\mathrm{verif}} = 50$) on \emph{ambiguous questions only} as the number of responses generated increases.
    A question is ambiguous when the model generates at least one candidate response with a correct final answer.
    Accuracy on ambiguous questions increases with search.}
    \label{fig:four_images_norm2}
\end{figure}

Scaling sampling-based search along the \emph{search} axis by sampling more solutions, i.e. increasing $k$, should have two effects on performance that partially cancel out: (1) the verifier must discriminate between more solutions, increasing the likelihood of error and (2) the generator is more likely to produce at least one solution that reaches a correct final answer, i.e. Pass@k increases.

\begin{figure}[htbp] %
    \centering
    \begin{subfigure}[b]{0.48\textwidth}
        \centering
    \hfill
        \includegraphics[width=0.9\textwidth]{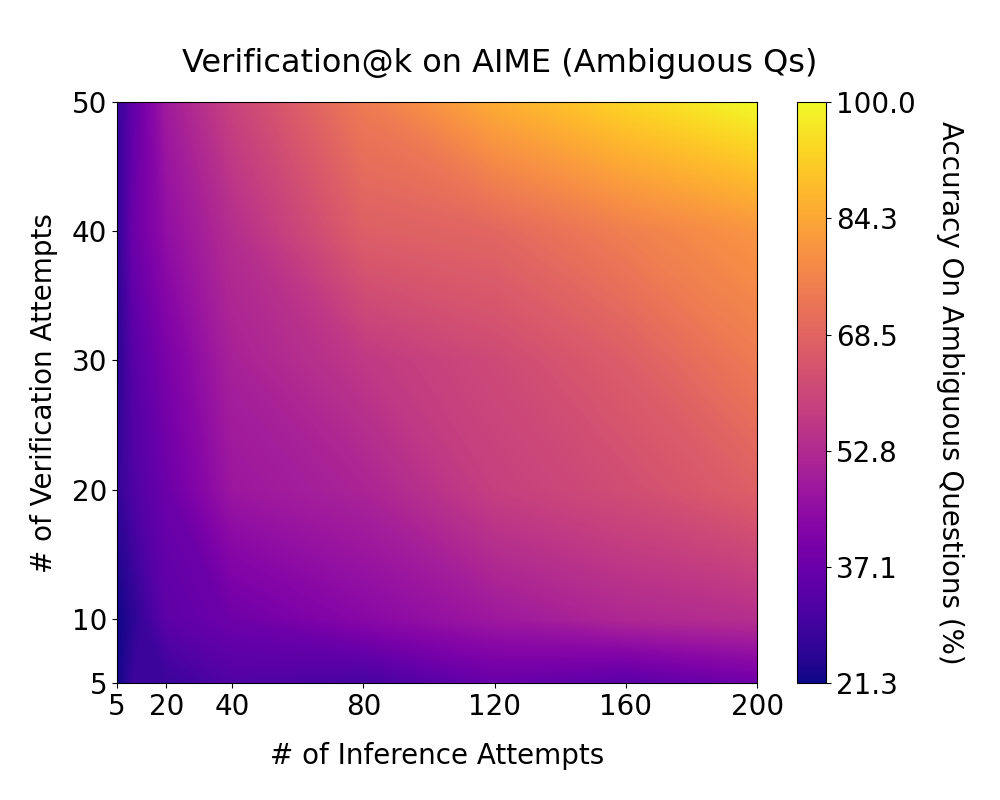}
    \end{subfigure}
    \hfill
    \begin{subfigure}[b]{0.48\textwidth}
        \centering
        \includegraphics[width=0.9\textwidth]{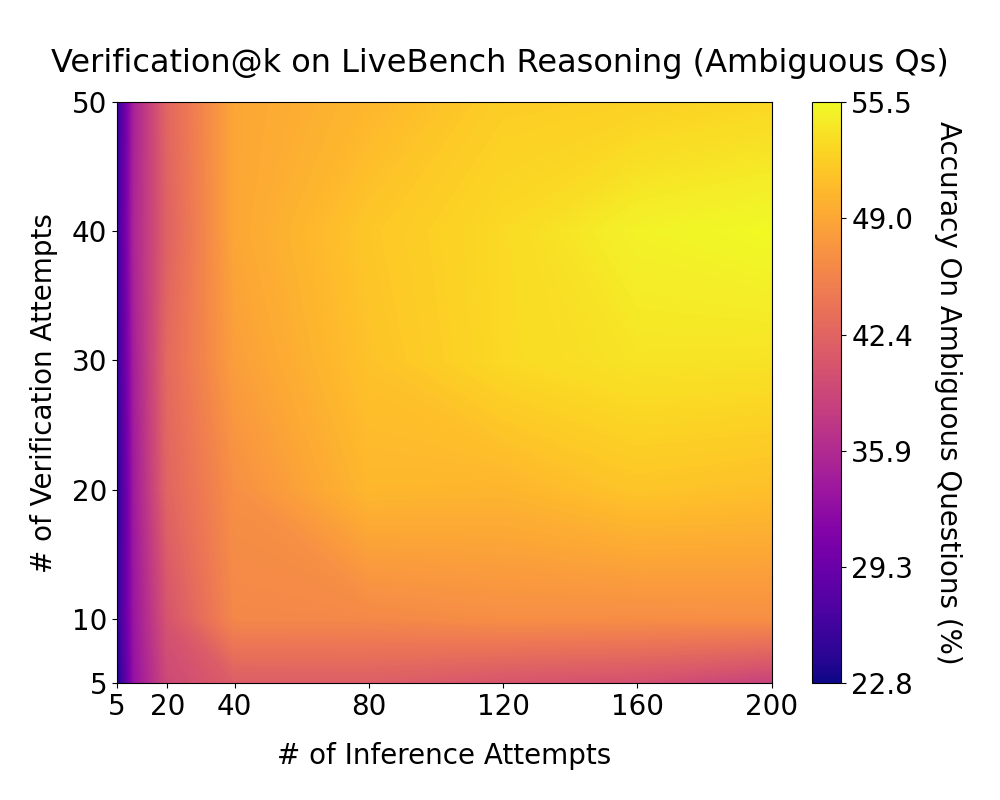}
    \hfill
    \end{subfigure}
    \\
    \begin{subfigure}[b]{0.48\textwidth}
        \centering
    \hfill
        \includegraphics[width=0.9\textwidth]{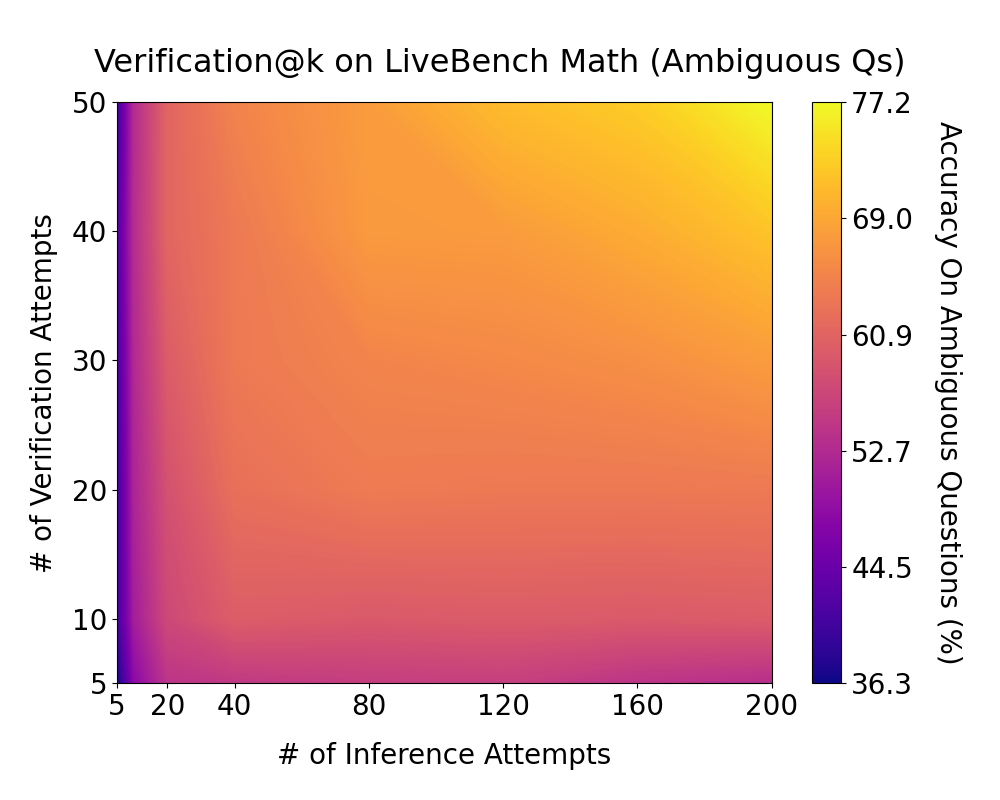}
    \end{subfigure}
    \hfill
    \begin{subfigure}[b]{0.48\textwidth}
        \centering
        \includegraphics[width=0.9\textwidth]{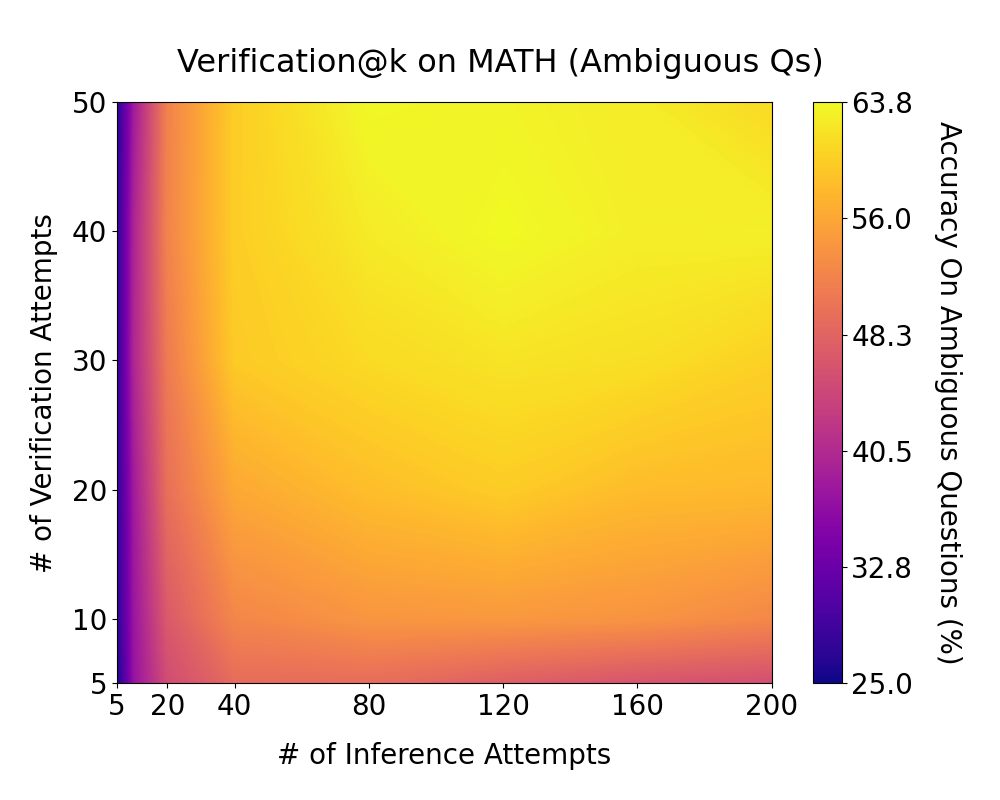}
    \hfill
    \end{subfigure}
    
    \caption{
    Heatmap of Gemini v1.5 Pro accuracy rates using sampling-based search (without tie-breaking) on \emph{ambiguous questions only} as the number of responses generated (x-axis) and verification attempts (y-axis) increase.
    Warmer colors indicate higher accuracy (linear scale).
    A question is ambiguous when the model generates at least one candidate response with a correct final answer.
    Accuracy on ambiguous questions increases with search (x-axis).
    }
    \label{fig:four_images_norm}
\end{figure}
To isolate the first effect, we study the model's Verification@k accuracy on ``ambiguous'' questions: questions where at least one of the model’s \(k\) candidate solutions reaches the correct final answer (note that Pass@k equals the number of ambiguous questions).
Figure~\ref{fig:four_images_norm2} and Figure~\ref{fig:four_images_norm} do exactly this, plotting Verification@k accuracy measured only on ambiguous questions from each benchmark.
To reduce noise in these figures, we deterministically omit benchmark questions that Consistency@200 answers correctly or where, with high probability, 50 random responses result in either all correct or all incorrect final answers.

After controlling for the growth of Pass@k, we should expect a trend of decreasing accuracy if we increase $k$ but keep the number of verification attempts constant.
However, Figure~\ref{fig:four_images_norm2} shows the reverse trend: accuracy increases with $k$.
This demonstrates an \emph{implicit scaling} of verification accuracy, where increasing the number of generated responses increases not only the chance that at least one response is correct (Pass@k) but also the chance that at least one of the correct responses is of higher quality.
Here, quality can be understood as the rigour or flawlessness of a response; a lower quality solution may be generally correct but fail to justify a non-trivial step or err in a non-critical step of its reasoning.

Implicit scaling suggests that verification should become more accurate, and sampling-based search should become more effective, with the use of more capable base models that produce more sound reasoning and compelling proofs of correctness.
Because the number of ambiguous questions strictly increases with more candidate solutions, the implicit scaling effect also explains the overall accuracy scaling gains in Figure~\ref{fig:four_images}: larger $k$ increases both the number of ambiguous questions (Pass@k) and accuracy on the set of ambiguous questions.

\subsection{The Long Tail of Response Distributions}
\label{subsec:tail}

\begin{figure}[htbp] %
    \centering
    \begin{subfigure}[b]{0.24\textwidth}
        \centering
        \includegraphics[width=\textwidth]{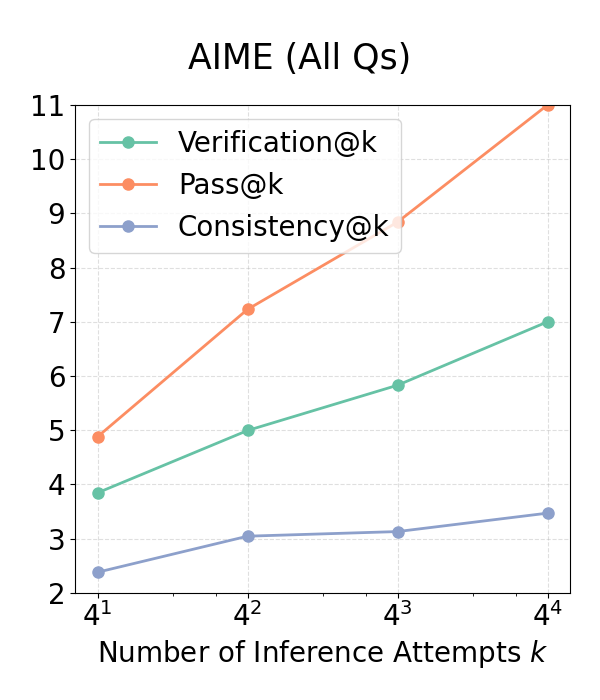}
    \end{subfigure}
    \hfill
    \begin{subfigure}[b]{0.24\textwidth}
        \centering
        \includegraphics[width=\textwidth]{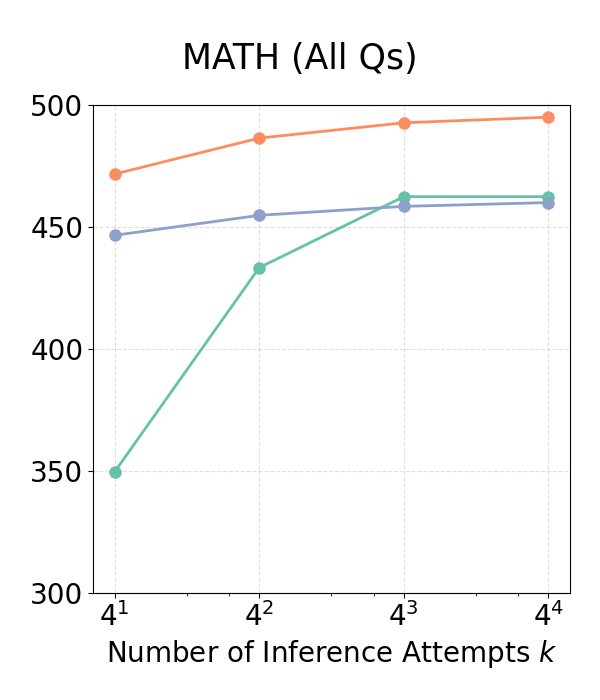}
    \end{subfigure}
    \hfill
    \begin{subfigure}[b]{0.24\textwidth}
        \centering
        \includegraphics[width=\textwidth]{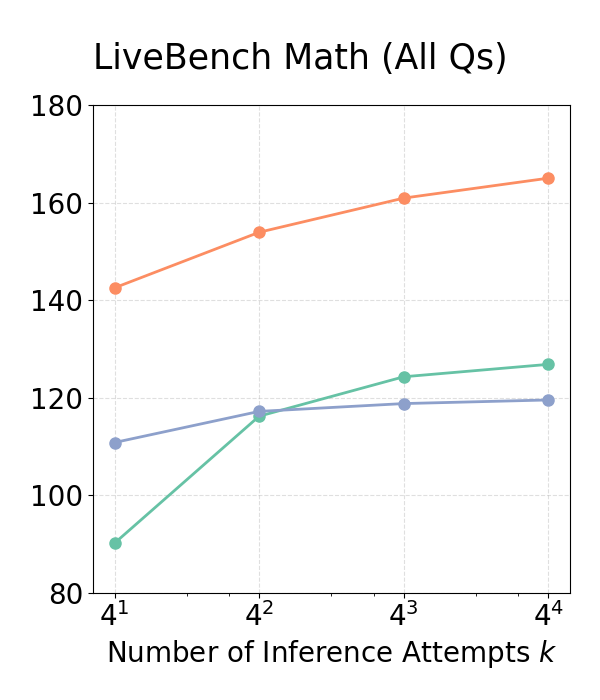}
    \end{subfigure}
    \hfill
    \begin{subfigure}[b]{0.24\textwidth}
        \centering
        \includegraphics[width=\textwidth]{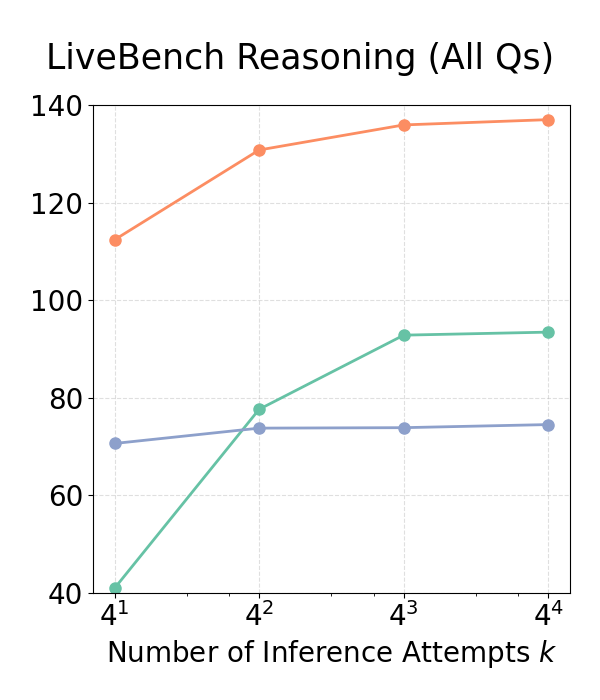}
    \end{subfigure}

    \caption{Line graph depicting the accuracy rates of the Gemini v1.5 Pro model using sampling-based search as the number of candidate responses generated is scaled upwards.
    The number of verification attempts is fixed at 50 for all plots.
    The depicted accuracies are obtained without tie-breaking and may be lower than reported elsewhere.
    Verification@k improves with $k$ even when Consistency@k stagnates on AIME and LiveBench Reasoning.
    }
    \label{fig:four_images2}
\end{figure}

\noindent
We can directly observe Verification@k scaling beyond the saturation point of Consistency@k in Figure~\ref{fig:four_images2}, where we plot their performance after fixing the number of verification attempts at 50.
On AIME, the most technically challenging benchmark, Verification@k demonstrates power law scaling even as Consistency@k begins to plateau.
The rapid saturation of Consistency@k can be attributed to the fact that, while it is effective at small scales in averaging out noisy mistakes, it necessarily plateaus as it converges on the most probable response;  for example, Consistency@50 has the same accuracy as Consistency@10,000 on AIME.
Consider cheaply sampling a vast set of solutions from a weak but ergodic model: Consistency@k is unlikely to return a correct solution, but an effective verifier should still be expected to detect rare but correct solutions in the long-tail of the response distribution.
We find an example of this on the AIME 2024 exam, where the Gemini v1.5 model struggles to identify the correct answer to Problem 11 on Exam II.
Table~\ref{tab:verifscores} shows the final answers from 200 randomly sampled Gemini v1.5 solutions, of which only one is correct (``601,'' in green).
Consistency returns the incorrect answer of ``1'' (in red), which appears in over half the responses. In contrast, Verification successfully identifies the solution reaching the correct answer from the response distribution's long-tail, assigning a $\le$36\% score to each solution reaching a final answer of ``1'' but a 98\% score to the single solution reaching ``601''.
\begin{table}[htbp] %
\centering
\begin{minipage}[t]{0.42\textwidth}
\vspace{0pt}
\begin{tcolorbox}[title={Problem 11, AIME 2024}]
\raggedright
    Find the number of triples of nonnegative integers $(a, b, c)$ satisfying $a + b + c = 300$ and $a^2 b + a^2 c + b^2 a + b^2 c + c^2 a + c^2 b = 6,000,000$.
\end{tcolorbox}

\includegraphics[width=\textwidth]{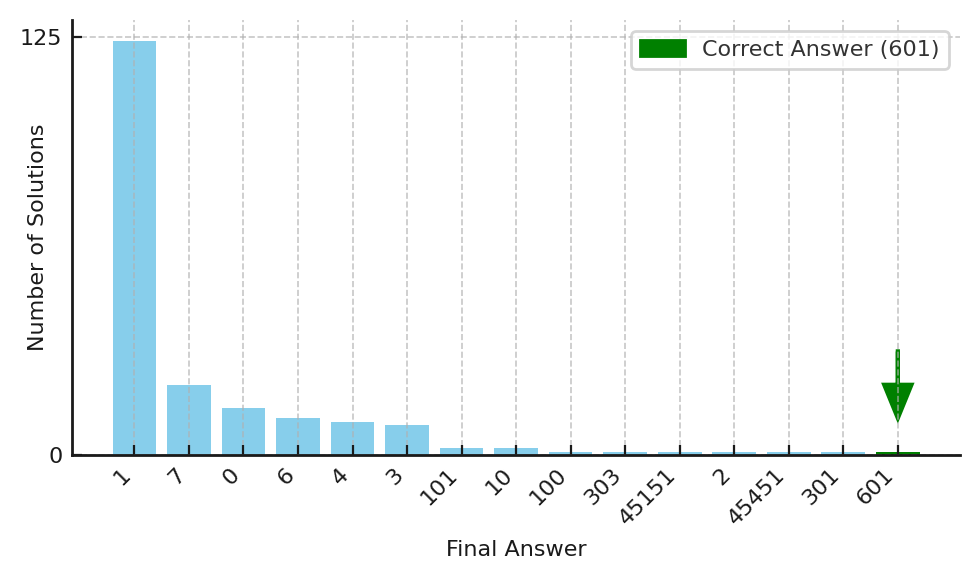}
\end{minipage}
\hfill
\begin{minipage}[t]{0.56\textwidth}
\vspace{0pt}
\begin{tabular}{cccc}
\toprule
\textbf{Verification Score} & \multicolumn{2}{c}{\textbf{Final Answer}} & \textbf{\# Solutions} \\
\midrule
\rowcolor{green}
0.98 & 601    & (Correct) & \textbf{1}   \\
0.76 & 6      & (Wrong)   & 11  \\
0.52 & 0      & (Wrong)   & 14  \\
0.40 & 7      & (Wrong)   & 21  \\
0.38 & 4      & (Wrong)   & 10  \\
\rowcolor{red} 0.36 & 1      & (Wrong)   & \textbf{124} \\
0.22 & 10     & (Wrong)   & 2   \\
0.20 & 3      & (Wrong)   & 9   \\
0.18 & 301    & (Wrong)   & 1   \\
0.16 & 45451  & (Wrong)   & 1   \\
0.14 & 101    & (Wrong)   & 2   \\
0.06 & 2      & (Wrong)   & 1   \\
0.04 & 45151  & (Wrong)   & 1   \\
0.04 & 303    & (Wrong)   & 1   \\
0.00 & 100    & (Wrong)   & 1   \\
\bottomrule
\end{tabular}
\end{minipage}

\raggedright
\caption{The final answers identified by the Gemini v1.5 Pro model to Problem 11 on AIME 2024, sorted by verification score and annotated with their multiplicity in 200 solution generations.
The correct final answer (green) is only found by 1 generated response whereas Consistency@200 selects an incorrect final answer (red) that is found by 124 generated responses.} %
\label{tab:verifscores}
\end{table}
Scaling verification capability is key to driving improved search, allowing for discerning between answers that appear correct with 98\% vs. 76\% confidence.
The fact that verification can be used to so effectively leverage the long-tail of model response distributions also suggests that Pass@k, not Pass@1, should be the key performance metric for search applications.
Existing post-training techniques (e.g., reinforcement learning from human feedback (RLHF) \citep{DBLP:conf/nips/Ouyang0JAWMZASR22}) which explicitly optimize for Pass@1 may potentially be doing so at the expense of Pass@k and inhibiting search capability.

\section{Effective Self-Verification in  Natural Language}
\label{section:pipeline}
\noindent
In the process of scaling sampling-based search, we identified two general principles for eliciting more accurate language model self-verification, that may be of independent interest.
\begin{enumerate}
    \item \emph{Compare responses to localize errors.} 
    Disagreements between candidate solutions strongly signal the potential locations of their errors.
    This can be leveraged to combat the fact that language models have low recall (i.e., often overlook) when asked to identify mistakes and hallucinations \citep{tyen_llms_2024,kamoi_evaluating_2024}, as models are able to identify errors when provided their locations  \citep{tyen_llms_2024}.
    Specifically, we can improve the self-verification of a candidate response by providing the verifier with other responses to compare the candidate against---an instance of implicit scaling.

    \item \emph{Rewrite responses for output style suitability.}
    The optimal output style of a language model should depend on the task.
    Writing in a linear chain of thought--- which includes detailing reasoning before committing to a claim---is effective when generating responses (search) \citep{DBLP:conf/nips/Wei0SBIXCLZ22}.
    However, responses are easier to verify when written rigorously, hierarchically, and modularly.
    This can be leveraged by having verifiers first rewrite candidate responses in, e.g., an expanded mathematically conventional theorem-lemma-proof format rather than directly evaluating chains-of-thought.
\end{enumerate}

\noindent
These principles also provide levers for scaling self-verification capability with test-time compute, including by (1) sampling and providing verifiers with more responses to compare between and (2) rewriting responses with increasing rigour and structure.

\subsection{Sampling-Based Search Implementation}
We now detail our minimalist implementation of sampling-based search (summarized in Algorithm~\ref{alg:verification-pipeline}) that uses only parallelizable blackbox queries to a language model.
It generates candidate responses by randomly sampling from models and select responses by asking models to self-verify; prompts are identical across all benchmarks and provided in the source code.

\paragraph{Step 1: Generate Candidate Responses.}
A language model generates $k_{\mathrm{inf}}$ candidate responses (candidate solutions) in parallel to each question, using temperature $\sigma_{\mathrm{inf}}$.

\paragraph{Step 2: Verify Candidate Responses.}
A language model generates $k_{\mathrm{verif}}$ binary ``verification scores'' for each candidate in parallel, indicating whether its final answer is correct.
Each scoring attempt is a single conversation thread that rewrites the response as a theorem, supporting lemmas, and proofs (examples in Appendix~\ref{app:rewrite}) and systematically scans for errors.
The highest scoring response is selected.

\paragraph{Tie-Break: Compare Candidate Responses.}
When the three highest scoring candidates score within 5\% of one another and disagree on the final answer, a language model directly compares the responses in pairwise matchups.
Each matchup is a single conversation thread that identifies where responses diverge and, at each such point, determines which side is correct.
Each matchup is repeated \(k_{\mathrm{tie}} = 100\) times.
The response with the most wins in the round-robin tournament is selected.

\subsection{Ablation Studies}
We can individually ablate the practices of comparing and rewriting candidate responses to confirm their role in eliciting greater verification capability.

\paragraph{Ablating comparisons.}
The step of asking models to directly compare candidate solutions with similar verification scores significantly increases sampling-based search performance.
This is demonstrated in Table~\ref{tab:ablation}, where we depict the accuracy rates from Table~\ref{tab:main-sota} alongside the accuracy rates after ablating the tie-breaking step.
These comparisons have the greatest impact when models struggle from low recall and excessively assign high verification scores.
On the MATH benchmark, which sees the greatest lift from comparisons, the average verification score of the top 3 candidate responses is nearly 90\%.
Recall that, as a result, the figures reported in Section~\ref{sec:allscaling} that omit tiebreaking significantly underestimate sampling-based search performances (Verification@k).

\begin{table}[htbp]
\centering
\begin{minipage}[c]{0.68\textwidth}
\vspace{0cm}
\begin{tabular}{lccccc}
\toprule
\textbf{Dataset} & \textbf{Cons@200} & \multicolumn{2}{c}{\textbf{Verification@200}} \\
                 &                   & \textbf{Without} & \textbf{With Tie-Break} \\
\midrule
MATH              & {460} / {500}   & {457} / {500}   & {467} / {500} \\
LiveBench Math        & {118} / {200}   & {125} / {200}   & {135} / {200} \\
LiveBench Reasoning & {75} / {140}    & {94} / {140}    & {97} / {140} \\
AIME              & {4} / {14}      & {7} / {14}      & {8} / {14}   \\
\bottomrule
\end{tabular}
\end{minipage}
\hfill
\begin{minipage}[c]{0.31\textwidth}
\vspace{0cm}
\includegraphics[width=\textwidth]{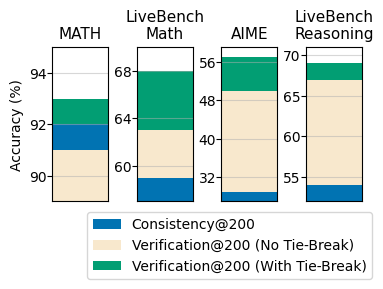}
\end{minipage}

\caption{Accuracy rates of Gemini v1.5 Pro using sampling-based search, with and without tie-breaking.
Tie-breaking provides most of Verification@200's gains on Consistency@200 (Cons@200) on MATH and LiveBench Math, and smaller gains on AIME and LiveBench Reasoning.
}
\label{tab:ablation}
\end{table}

\paragraph{Ablating rewritings.}
We explored a limited number of prompts for self-verification, including prompts which omit instructing the model to rewrite responses.
We did not perform further prompt optimization and expect refinements would boost accuracy.
Table~\ref{tab:prompt_tuning_results} shows each prompt’s probability of mislabeling correct solutions (false positive) and incorrect solutions (false negative), with the former generally having a more severe impact on downstream performance.
We evaluated these prompts on 1,080 candidate responses to 54 level-5 questions from the MATH training split, and 120 candidate responses to 6 questions from AIME 2023.
A response is marked as incorrect if, of 20 verification attempts, the number finding an error in the solution exceeds the equal error rate threshold.

\emph{Main} refers to manually written prompts used in our experiments.
\emph{Shortened} refers to a shorter variant of ``Main'' that omits, e.g., instructions to avoid truncation.
\emph{Without Rewrite} refers to a variant of ``Main'' that omits instructing the verifier to first rewrite responses.
\emph{Split-Context} refers to a variant of ``Main'' that creates separate conversation threads to individually verify pieces of the response.

The gap between the performance of ``Main'' and ``Without Rewrite'' demonstrates that ablating the rewriting of solutions negatively impacts verification performance. Similarly, the gap with ``Split-Context'' demonstrates that splitting the verification process into separate conversation threads sharply decreases performance due to low precision, which we attribute to miscalibration.

\begin{table}[htbp]
\centering
    \hfill
\begin{minipage}[t]{0.5\textwidth}
\vspace{0.4cm}
    \begin{tabular}{lcc|cc}
        \toprule
        \textbf{Prompt Style} & \multicolumn{2}{c}{\textbf{MATH}} & \multicolumn{2}{c}{\textbf{AIME}} \\
        \cmidrule(lr){2-3} \cmidrule(lr){4-5}
        & \textbf{FPR} & \textbf{FNR} & \textbf{FPR} & \textbf{FNR} \\
        \midrule
        Main            & 14\% & 17\% & 7\%  & 7\%  \\
        Shortened     & 17\% & 17\% & 7\%  & 7\%  \\
        Without Rewrite & 16\% & 18\% & 11\% & 12\% \\
        Split-Context   & 19\% & 23\% & 11\% & 14\% \\
        \bottomrule
    \end{tabular}
    \end{minipage}
    \hfill
\begin{minipage}[t]{0.45\textwidth}
\vspace{0cm}
\includegraphics[width=\textwidth]{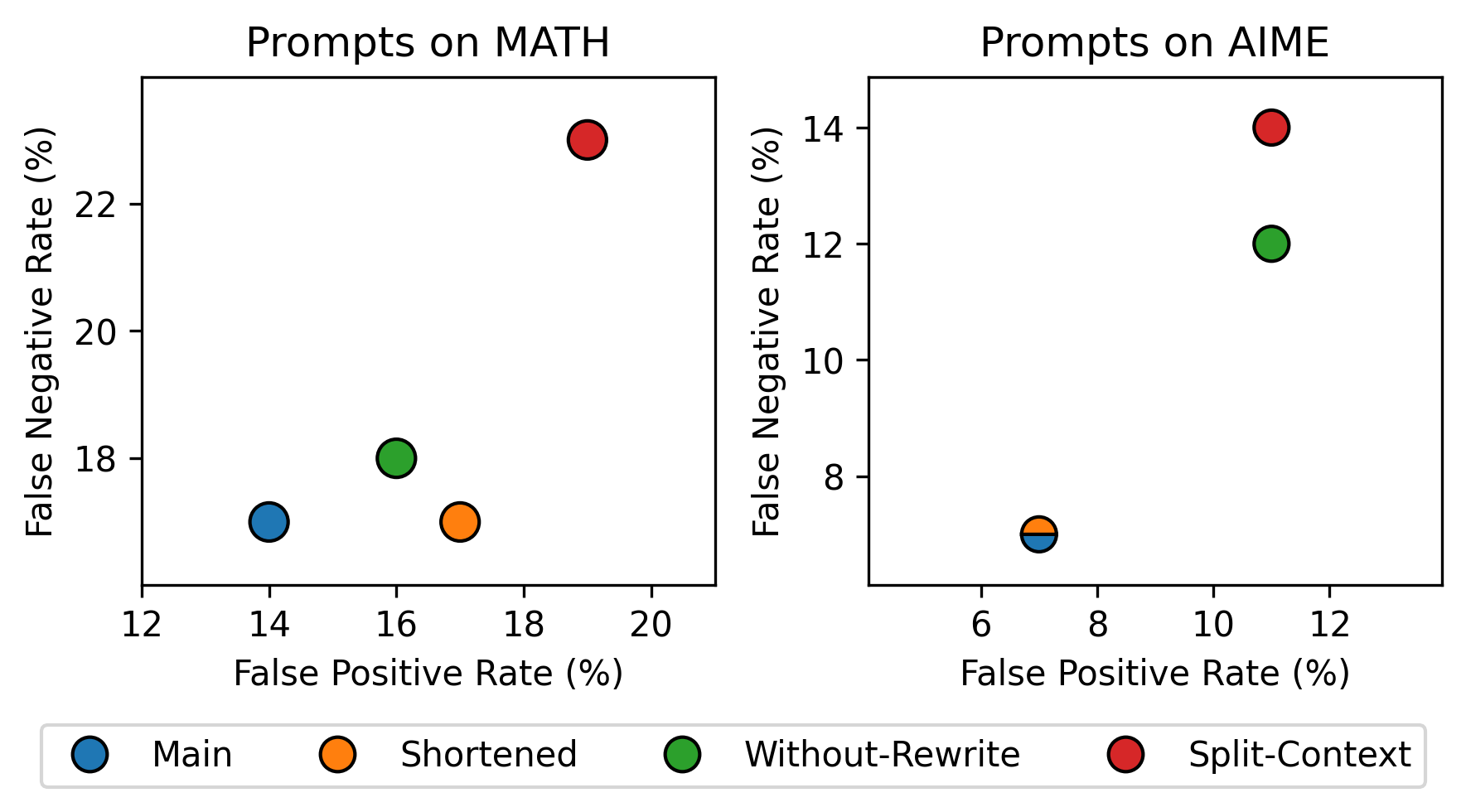}
\end{minipage}
    \caption{Verification scoring accuracy rates of the Gemini v1.5 Pro model for various prompts. False positive rate (FPR) refers to how often a correct response is labeled as incorrect; false negative rate (FNR) refers to how often an incorrect response is labeled as correct. %
    }
    \label{tab:prompt_tuning_results}
\end{table}

\section{Additional Experiments}
\label{sec:additional}
\subsection{Smaller Models}

\begin{table}[htbp]
\centering
\begin{tabular}{lccccc}
\toprule
\textbf{Model} & \textbf{Method} & \textbf{AIME} & \textbf{MATH} & \textbf{LiveBench Math} & \textbf{LiveBench Reasoning} \\
\midrule
\multirow{3}{*}{Pro v1.5} 
& Pass@1 & 1 / 15 & 426 / 500 &  104 / 200 & 63 / 140 \\
& Consistency@200 & 4 / 15  & 460 / 500 & 118 / 200  & 75 / 140 \\
& Verification@200 & 8 / 15 & 467 / 500 & 135 / 200  & 97 / 140 \\
\midrule
\multirow{3}{*}{Flash v1.5} 
& Pass@1 & 2 / 15 & 407 / 500 &  96 / 200 & 65 / 140 \\
& Consistency@200 & 3 / 15  & 440 / 500 & 92 / 200  & 84 / 140 \\
& Verification@200 & 5 / 15 & 445 / 500 & 104 / 200  & 84 / 140 \\
\midrule
\multirow{1}{*}{Pro+Flash v1.5} 
& Verification@200 & 7 / 15 &  456 / 500 & 119 / 200  & 84 / 140 \\
\bottomrule
\end{tabular}
\begin{minipage}[t]{0.57\textwidth}
\vspace{0.6cm}
\centering
\includegraphics[width=\textwidth]{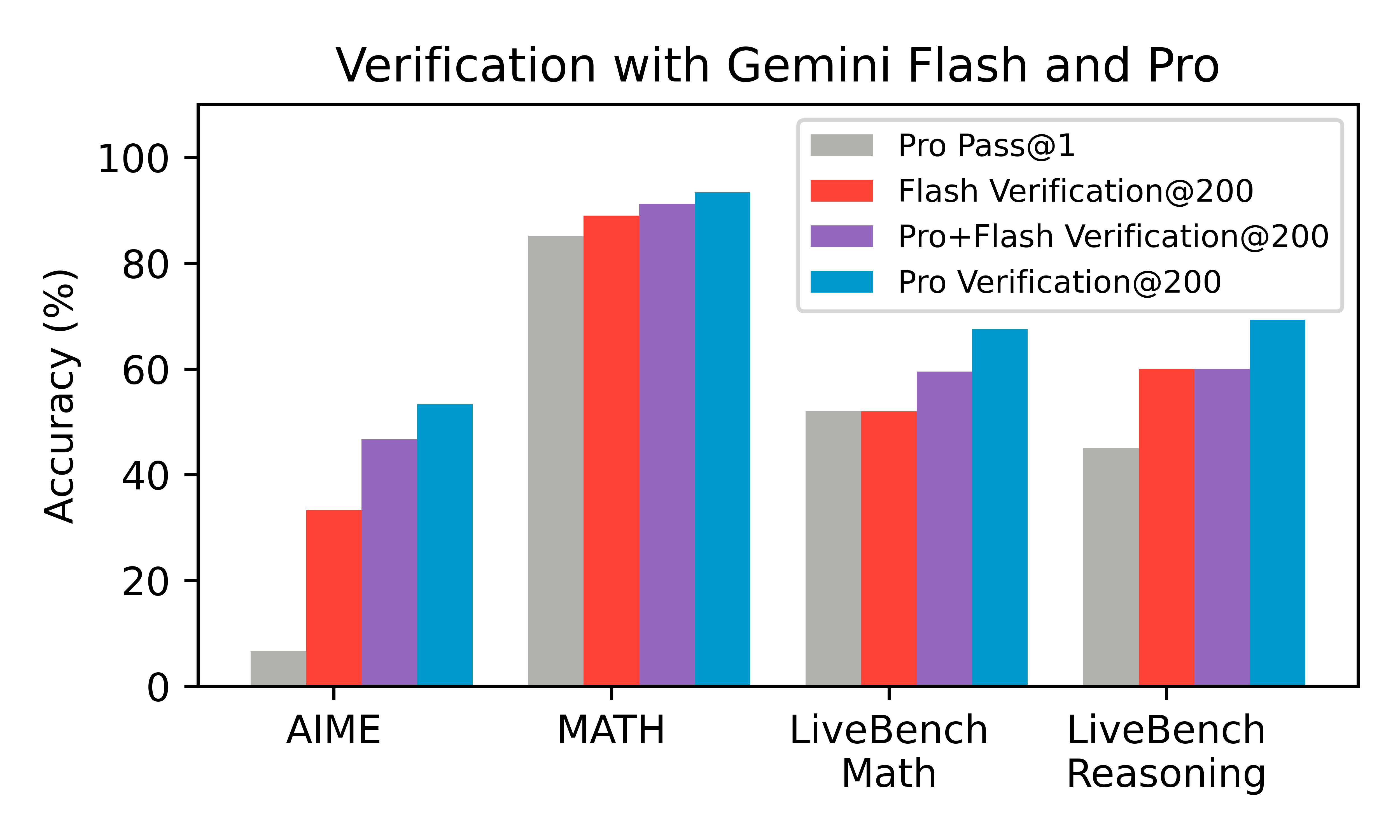}
\end{minipage}
\hfill
\begin{minipage}[t]{0.38\textwidth}
\vspace{0.2cm}
\includegraphics[width=\textwidth]{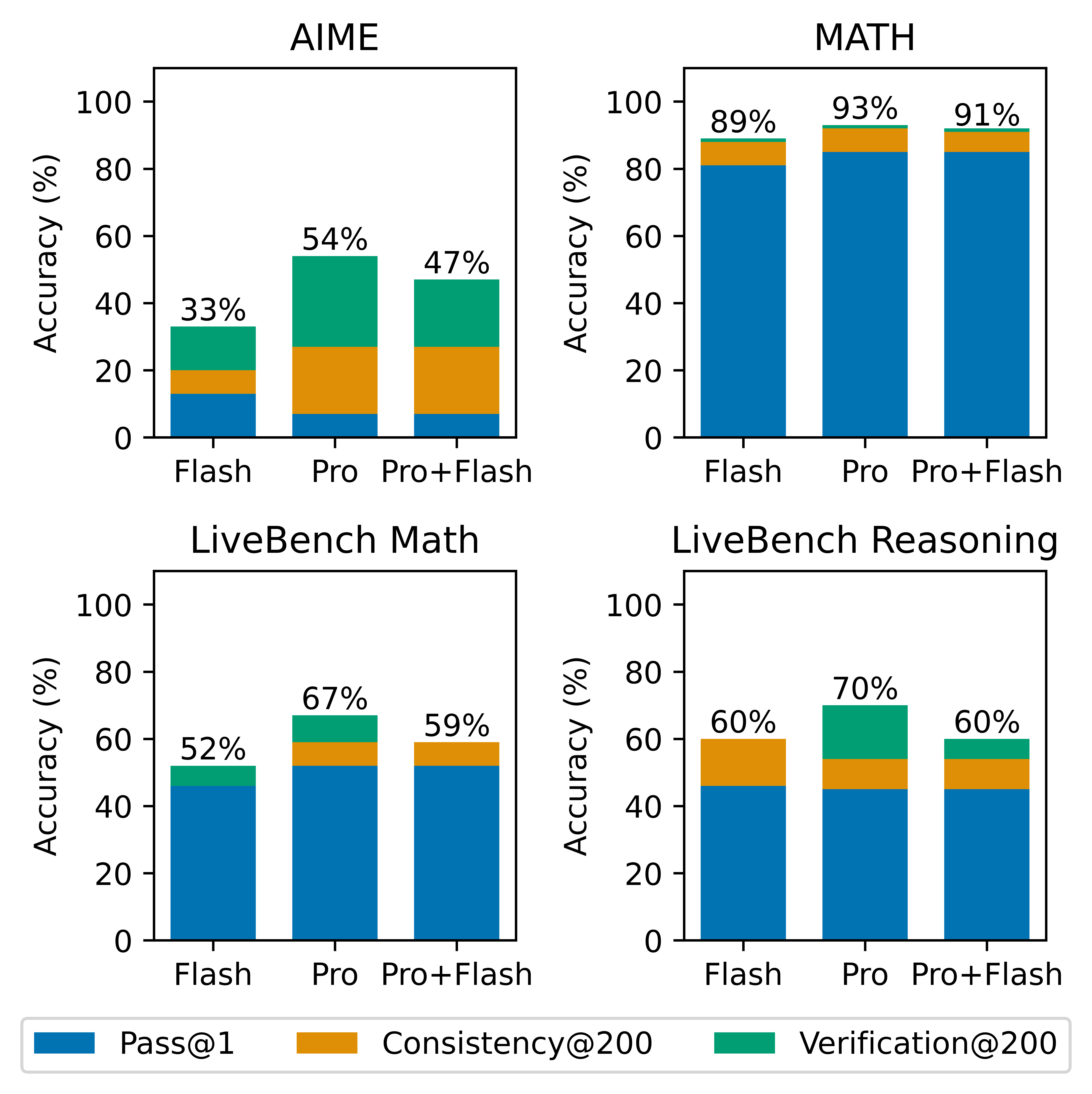}
\end{minipage}
\caption{Accuracy rates with sampling-based search using either the Gemini v1.5 Pro model to both generate and verify responses (Pro), Gemini v1.5 Flash to both generate and verify responses (Flash), or Gemini v1.5 Pro model to generate responses and v1.5 Flash to verify responses (Pro+Flash). Verification@200 exceeds Consistency@200 for all model choices, while Pro+Flash Verification@200 matches or exceeds Pro Consistency@200.}
\label{tab:flash-sota}
\end{table}

\noindent
We also observe sampling-based search to be a powerful tool for enhancing smaller, lower-cost models.
Here, we apply sampling-based search to Gemini v1.5 Flash model, which has a nearly 20x lower inference cost than Gemini v1.5 Pro.
Table~\ref{tab:flash-sota} lists the performance of using the Flash model to evaluate candidate responses generated by the Pro model (Pro+Flash), and the performance of using the Flash model end-to-end for sampling-based search (Flash).
Sampling-based search still provides a significant improvement in performance for both Flash and Pro+Flash.
Moreover, Verification@200 still provides significant improvements over Consistency@200, albeit lesser in magnitude than for end-to-end use of Gemini Pro.
In addition, Flash Verification@200 using Gemini Flash is competitive with Pro Consistency@200, while Pro+Flash Verification@200 exceeds Pro Consistency@200.
We highlight that Pro+Flash Verification@200 has roughly the compute cost of Consistency@500---as our sampling-based search implementation is minimally optimized for efficiency, we expect costs to further decrease.

\subsection{Performance by Subtask}

\begin{table}[htbp]
\centering
\begin{tabular}{lcccccc}
\toprule
\textbf{Dataset} & \textbf{Cons@200} & \textbf{Verif@200} & \multicolumn{2}{c}{\textbf{Improvement (\%)}} & \textbf{Pass@200} & \textbf{\# Questions} \\
\cmidrule(r){4-5}
& & & \textbf{(Abs)} & \textbf{(Rel)} & & \\
\midrule
Berkeley MATH & 92.0\% & 93.4\% & 2\% $\uparrow$ & 20.0\% & 99.0\% & 500 \\
AIME 2024 & 26.7\% & 53.3\% & 100\% $\uparrow$ & 57.1\% & 73.3\% & 15 \\
Web-of-Lies-v2* & 75.5\% & 91.8\% & 22\% $\uparrow$ & 66.5\% & 100.0\% & 49 \\
Spatial* & 33.3\% & 46.7\% & 40\% $\uparrow$ & 21.5\% & 95.6\% & 45 \\
Zebra Puzzle* & 50.0\% & 67.4\% & 35\% $\uparrow$ & 36.4\% & 97.8\% & 46 \\
Competition\textsuperscript{\textdagger} & 66.2\% & 83.1\% & 26\% $\uparrow$ & 63.1\% & 93.0\% & 71 \\
AMPS Hard\textsuperscript{\textdagger} & 70.6\% & 77.7\% & 10\% $\uparrow$ & 33.5\% & 91.8\% & 85 \\
Olympiad\textsuperscript{\textdagger} & 25.0\% & 22.7\% & 9\% $\downarrow$ & -11.2\% & 45.5\% & 44 \\
\bottomrule
\end{tabular}

\caption{
The Pass@200, Consistency@200 (Cons@200), and Verification@200 (Verif@200) accuracy rates of the Gemini v1.5 Pro model using sampling-based search.
LiveBench Math\textsuperscript{\textdagger{}} and LiveBench Reasoning\textsuperscript{*} numbers are divided per task.
Absolute \% Increase (Abs) is the percentage improvement of Verification@200 over Consistency@200.
Relative \% Increase (Rel) is (Verification@200 - Consistency@200) / (Pass@200 - Consistency@200).
}
\label{tab:performance_metrics}
\end{table}

\noindent
The LiveBench benchmarks each consist of multiple subtasks.
In Table~\ref{tab:performance_metrics}, we break down the numbers reported in Table~\ref{tab:main-sota} for each of these subtasks.
We also provide in Table~\ref{tab:performance_metrics} the Pass@200 scores of the Gemini Pro model, which measure the probability that of 200 attempted responses to a question at least one is correct.
Pass@200 upper bounds what one can hope to achieve through Verification or Consistency.
Verification provides the greatest gains on AIME 2024, Web-of-Lies, Competition, and Zebra Puzzle.
In contrast, Verification does not improve on Consistency on the Olympiad task of the LiveBench Math benchmark.
We attribute this to the unique question design of LiveBench Olympiad task questions, which is incompatible with our implementation of Verification (see Appendix~\ref{app:oddity}).

\section{Technical Details}
\label{sec:tech}
All experiments are run on Google Cloud with Gemini v1.5-Pro-002 and Gemini v1.5-Flash-002 models dated to September 2024.
Unless otherwise specified, the default parameters for our implementation of sampling-based search (Section~\ref{section:pipeline}) are $k_{\mathrm{inf}} = 200$, $\sigma_{\mathrm{inf}} = 1.5$, $k_{\mathrm{verif}} = 50$, $\sigma_{\mathrm{verif}} = 1$, and a maximum of 8,192 output tokens per query.
For all benchmarks, the scoring of candidate responses is performed using a language model rather than literal string comparison; details are in Appendix~\ref{app:scoring}.

\paragraph{Preliminary scoring.}
When generating $k_{\mathrm{verif}} = 50$ verification scores per candidate solution is too expensive, we first generate $k_{\mathrm{verif}} = 10$ preliminary verification scores and discard candidate solutions with an average score below $0.2$.
If a final answer is represented by more than 15 candidate responses, only the top 15--as measured by average preliminary score, tie-breaking randomly--are kept.
This results in a smaller pool of candidate solutions for which we compute all $k_{\mathrm{verif}} = 50$ verification scores.
Preliminary scoring is used on all datasets except AIME, which consists of 15 questions. 

\paragraph{Compute.}
On AIME, the verification process involves 32,000 
characters (roughly 13,000 tokens) of model output. %
Extrapolating from these figures, running the full sampling-based search pipeline on a question for $k_{\mathrm{inf}} = 200$ and $k_{\mathrm{verif}} = 50$ requires $200 \cdot 50 \cdot 13,000 \approx 130$M output tokens.
At around \$5/1M output tokens (public pricing of Gemini v1.5 Pro), this evaluates to approximately \$650 in cost.
Preliminary scoring reduces usage of output tokens by roughly 70\%, resulting in a per-question cost of \$200.
The use of Gemini Flash for verification further decreases cost to \$12 per question.

\paragraph{Datasets.}
Our MATH benchmark consists of 500 questions from the PRM800K \citep{DBLP:conf/iclr/LightmanKBEBLLS24} test split of Berkeley MATH \citep{DBLP:conf/nips/HendrycksBKABTS21}.
Our LiveBench Math benchmark consists of 200 randomly subsampled questions from the 368 available as of October 21st 2024, including AMC12 2023, AIME 2024, SMC 2023, USAMO 2023, IMO 2023, and synthetic math questions \citep{white2024livebenchchallengingcontaminationfreellm}.
Our LiveBench Reasoning benchmark consists of 140 questions from the 150 available as of October 21st 2024, including Zebra puzzles, Web-Of-Lies, and Spatial reasoning questions \citep{white2024livebenchchallengingcontaminationfreellm}.
Our AIME benchmark consists of the 15 questions in Exam II of AIME 2024 \citep{aime2024}.

\section{A Verification Benchmark}
\label{sec:benchmark}
Frontier language models demonstrate a remarkable mismatch between their problem-solving capabilities and poor out-of-box verification capabilities.
These limitations have largely been attributed to the inability of current language models to self-diagnose hallucinations or enforce rigour \citep{zhang_how_2023,orgad_llms_2024,snyder_early_2024,kamoi_evaluating_2024, tyen_llms_2024, DBLP:conf/iclr/0009CMZYSZ24}.
However, our findings that models can be directed to accurately perform verifications at scale suggest that these out-of-box limitations can be addressed with standard methods like instruction tuning.
We compiled a set of challenging reasoning problems and candidate solutions to provide a benchmark for these deficits.

Each entry in this benchmark consists of a question, a correct candidate response, and an incorrect candidate response, and is manually curated from the residuals of our sampling-based search experiments (Section~\ref{section:pipeline}).
An example entry from this benchmark can be found below (see Appendix~\ref{app:examplebenchmark} for more).

\vspace{0.4cm}
\begin{tcolorbox}[title=Question from LiveBench Reasoning (Web-of-Lies Puzzle), breakable]
In this question, assume each person either always tells the truth or always lies. The person at the campground thinks their friend is lying. Mateo is at the aquarium. The person at the restaurant says the person at the hotel lies. Farid is at the movie theater. The person at the movie theater says the person at the campground lies. Ryan is at the shopping mall. The person at the cafe says the person at the campground lies. The person at the observatory says the person at the museum lies. The person at the museum says the person at the restaurant tells the truth. The person at the campground says the person at the airport tells the truth. The person at the campground tells the truth. Luna is at the restaurant. The person at the movie theater saw a firetruck. Isabella is at the observatory. The person at the amusement park says the person at the museum tells the truth. The person at the restaurant says the person at the shopping mall tells the truth. Liam is at the beach. Ayaan is at the airport. The person at the restaurant says the person at the beach tells the truth. Maya is at the amusement park. The person at the restaurant says the person at the aquarium lies. The person at the restaurant says the person at the movie theater lies. The person at the library says the person at the campground tells the truth. Elowen is at the hotel. Priya is at the library. The person at the gym says the person at the restaurant tells the truth. Kehinde is at the cafe. Anika is at the museum. Quan is at the campground. Soren is at the gym. Does the person at the museum tell the truth? Does the person at the restaurant tell the truth? Does the person at the movie theater tell the truth? Think step by step, and then put your answer in **bold** as a list of three words, yes or no (for example, **yes, no, yes**). If you don't know, guess.
\end{tcolorbox}
\definecolor{mydarkgreen}{rgb}{0.0, 0.5, 0.0} %

\begin{paracol}{2}

\switchcolumn[0]
\begin{tcolorbox}[breakable, colframe=mydarkgreen!70, colback=mydarkgreen!10, title=Correct response]
Here's how we can break down this logic puzzle:

1. **Quan is at the campground and tells the truth.** This is a given fact.

...

4. **The person at the restaurant (Luna) says the person at the hotel (Elowen) lies.**  We don't know yet if Luna is lying or telling the truth.

...

10. **Since Luna tells the truth about Farid, Luna *always* tells the truth.**  This unlocks several other pieces of information.

...

13. **The person at the cafe (Kehinde) says the person at the campground lies.** Since Quan tells the truth, Kehinde is lying.

Therefore:
* **Anika (museum):** Tells the truth.
* **Luna (restaurant):** Tells the truth.
* **Farid (movie theater):** Lies.

**yes, yes, no**
\end{tcolorbox}

\switchcolumn
\begin{tcolorbox}[breakable, colframe=red!70, colback=red!10, title=Incorrect Response Due to Hallucination]
Here's how we can solve this logic puzzle:

1. **Quan is at the campground and tells the truth.** This is given.

2. **The person at the airport (Ayaan) tells the truth.** Quan (campground) says the person at the airport tells the truth, and Quan tells the truth.

...

4. **Luna (restaurant) lies.**  The person at the restaurant says the person at the hotel (Elowen) lies. \textbf{\color{red} Later, the person at the restaurant says the person at the campground (Quan) lies.} Since Quan tells the truth, Luna (restaurant) must be lying.

...

Therefore:
* **Anika (museum): lies**
* **Luna (restaurant): lies**
* **Farid (movie theater): lies**

**Answer: no, no, no** 
\end{tcolorbox}
\end{paracol}

\noindent\begin{minipage}{\textwidth}
\captionof{figure}{Example of an entry in our verification benchmark. The question is sourced from the LiveBench Reasoning benchmark, and the two responses are generated by Gemini v1.5 Pro. The green response has the correct final answer; the red response has the wrong final answer due to hallucinating a non-existent clause.}
\label{fig:example}
\vspace{0.4cm}
\end{minipage}

\noindent
On each entry, our benchmark studies verification accuracy on two tasks:
\begin{enumerate}
    \item \textbf{Scoring task.} When given only the question and one of the responses, is the model able to discern the correctness of the response?
    \item \textbf{Comparison task.} When provided the whole tuple with the correctness labels of the responses masked and a guarantee that at least one response is correct, is the model able to discern which response is correct and which is incorrect?
\end{enumerate}

\noindent
The scoring task is also evaluated over a separate set of (question, response) pairs where the response reaches the correct final answer by coincidence but contains fatal errors and should be labeled by a reasonable verifier as being incorrect; an example can be found in Appendix~\ref{app:examplebenchmark}.
In the scoring task, models are provided only with the task description; in the comparison task, models are provided only with the task description and a suggestion to identify disagreements between responses in its reasoning.

Table~\ref{tab:benchmark} lists the baseline performances of current commercial model offerings on this benchmark.
Gemini v1.5 Pro is omitted from the benchmark as the entries in the benchmark are curated from the residuals of Gemini v1.5 Pro.
The prompts used in Table~\ref{tab:benchmark} are provided in Appendix~\ref{app:benchmarkprompts}.

As we previously observed, and has been noted in prior works \citep{tyen_llms_2024, kamoi_evaluating_2024}, verification errors are typically due to low recall.
Even the easier comparison task, models perform only marginally better---and often worse---than random chance.
In many cases, Consistency@5 performs worse than one-shot inference because Consistency simply averages out noise from an output distribution, meaning that a model biased towards producing an incorrect answer will do so with higher probability under Consistency.
Addressing these deficits in verification capabilities---which we see as low-hanging fruit for post-training---would enable not only better sampling-based search, but also other downstream applications of verification including reinforcement learning \citep[e.g.][]{o1-preview,deepseekai2025deepseekr1incentivizingreasoningcapability}, data flywheeling \citep[e.g.,][]{welleck_generating_2022}, and end-user experience (see Section~\ref{sec:related} for further discussion).

\begin{table}[htbp]
\centering
\begin{tabular}{llcccccc}
\toprule
\textbf{Model} & \textbf{Metric} & \multicolumn{3}{c}{\textbf{Scoring Accuracy}} & \multicolumn{1}{c}{\textbf{Comparison Accuracy}} \\
\cmidrule(lr){3-5} \cmidrule(lr){6-6}
 &  & \textbf{Correct} & \textbf{Wrong} & \textbf{Flawed} &  \\
\midrule
\multirow{2}{*}{GPT-4o} & Pass@1    & 76.5\%  & 31.0\% & 22.2\% & 43.2\%\\
 & Consistency@5 & 77.4\% & 30.0\% & 11.1\% & 35.4\% \\
\midrule
\multirow{2}{*}{Claude 3.5 Sonnet} & Pass@1 & 89.6\% & 22.5\% & 33.3\% & 56.1\% \\
 & Consistency@5 & 90.3\% & 17.5\% & 33.3\% & 61.2\% \\
\midrule
\multirow{2}{*}{o1-preview} & Pass@1 & 100\% & 68.8\% & 80.0\% & 84.5\% \\
 & Consistency@5 & 100\% & 79.4\% & 88.8\% & 92\% \\
\midrule
\multirow{2}{*}{Gemini 2.0 Flash} & Pass@1 & 73.5\% & 44.5\% & 60\% & 58\%  \\
 & Consistency@5 & 77.4\% & 42.5\% & 66.6\% & 58.7\% \\
\midrule
\multirow{2}{*}{Gemini 2.0 Thinking Flash} & Pass@1 & 75.4\% & 56.5\% & 53.3\%  & 80\%  \\
 & Consistency@5 & 77.4\%  & 55\% & 55.5\%  & 89.1\% \\
\midrule
\multicolumn{2}{c}{Random guessing}  & 80\% & 20\% & 20\% & 50\% \\
\bottomrule
\end{tabular}
\caption{Accuracy rates of commercial language models on our verification benchmark. For the task of response scoring (Scoring Accuracy), accuracy rates are broken down for entries that require identifying a correct response as being correct (Correct), entries that require identifying a wrong response as being wrong (Wrong), and entries that require identifying a wrong response that coincidentally reaches the correct answer as being wrong (Flawed).
GPT-4o and Claude 3.5 Sonnet only perform marginally better than random guessing across all tasks. o1-Preview performs better, but still fails to identify 20-30\% of wrong responses.
}
\label{tab:benchmark}
\end{table}

\section{Related Work}
\label{sec:related}
\paragraph{Test-time compute.}
Many of the recent advances in language model reasoning capabilities can be traced to increasing use of \textit{test-time compute}.
Inference strategies like chain-of-thought reasoning \citep{DBLP:conf/nips/Wei0SBIXCLZ22}, tree-of-thoughts \citep{DBLP:conf/nips/YaoYZS00N23} and self-critique  \citep{valmeekam2023largelanguagemodelsreally} result in improved reasoning performance at the cost of forming longer responses.
Reinforcement learning has emerged as a particularly successful strategy for effectively leveraging more test-time compute, wherein models learn from exploration to form lengthy chain-of-thought outputs that incorporate backtracking and search, despite not being explicitly taught to do so \citep{o1-preview, deepseekai2025deepseekr1incentivizingreasoningcapability}.
Inference-time model adaptation, whether through many-shot learning~\citep{agarwal2024manyshotincontextlearning, anil2024jailbreaking} or finetuning~\citep{akyurek2024surprisingeffectivenesstesttimetraining}, provides another avenue when training data is available.
We study sampling-based search: obtain a set of candidate responses from a model and apply an aggregation method to select a response, such as self-consistency/plurality voting \citep{DBLP:conf/iclr/0002WSLCNCZ23} or selecting a response with a reward/verifier model \citep{cobbe_training_2021}.
These various methods for scaling test-time compute are complementary; for example, sampling-based search can also be used on models trained to produce longer outputs.
We note that it is possible for models trained to produce long chains of thought to perform something resembling sampling-based search internally, in which case we still expect our observed scaling trends to hold.
However, we also expect explicit sampling-based search will remain indispensable, due to its greater parallelism and robustness than internally implemented search.

\paragraph{Scaling sampling-based search.}
The paradigm of sampling-based search provides three main knobs for scaling: generation, sampling, and selection.
While the cost of \emph{generating} each individual response can be scaled with previously mentioned interventions, such as chain-of-thought \citep[e.g.][]{DBLP:conf/nips/Wei0SBIXCLZ22}, reinforcement learning \citep[e.g.][]{o1-preview}, or inference-time adaptation \citep[e.g.][]{anil2024jailbreaking}, the cost of \emph{sampling} a set of responses can be scaled by increasing the number of responses generated~\citep{DBLP:conf/iclr/0002WSLCNCZ23,snell2024scalingllmtesttimecompute}.
We use random sampling to generate each set of candidate responses, which means the latter corresponds to simply taking more random draws.
However, this sampling can also be implemented in an agentic fashion, with a central model delegating the generation of responses so as to perform search more systematically.
The process of \emph{selecting} a response can be scaled by using more expensive rules: self-consistency provides a simple plurality voting rule at the lowest-cost end of the spectrum \citep{DBLP:conf/iclr/0002WSLCNCZ23}, while language model self-verification \citep[e.g.][see below]{xue_rcot_2023} and learned verification/reward models \citep[e.g.][see below]{cobbe_training_2021} provide a range of selection strategies that vary in cost and capability.
For more fine-grained control over the scaling of self-verification in our experiments, we apply plurality voting~\citep{DBLP:conf/iclr/0002WSLCNCZ23} to self-verification and vary our number of verification attempts per response.

\paragraph{Verification of language model outputs.}
A large body of recent work has studied the self-verification capabilities of large language models \citep[e.g.,][]{cobbe_training_2021,kadavath_language_2022,saunders_self-critiquing_2022,DBLP:conf/nips/KimBM23,DBLP:conf/nips/XieKZZKHX23,DBLP:conf/emnlp/WengZX0HLSLZ23,zhang_how_2023,xue_rcot_2023,li_making_2023,liu_large_2024,chow_inference-aware_2024,DBLP:conf/acl/JiangSYL0LK24,DBLP:conf/acl/DhuliawalaKXRLC24,snyder_early_2024,wu_large_2024,DBLP:conf/iclr/0009CMZYSZ24,kamoi_evaluating_2024,kamoi_when_2024,orgad_llms_2024,wen_language_2024,tyen_llms_2024,chen_simple_2024,kumar_training_2024,qu_recursive_2024,zhang_generative_2024,ko_real-time_2025,DBLP:conf/icml/HavrillaRNDZHR24}.
While some works---including ours---simply ask models to perform verification and parse the response, others have proposed custom methods of performing self-verification, including: recreating the problem from the response \citep{xue_rcot_2023,wu_large_2024}, masking and re-filling parts of the response \citep{DBLP:conf/emnlp/WengZX0HLSLZ23,DBLP:conf/acl/JiangSYL0LK24}, creating a rubric \citep{DBLP:conf/acl/DhuliawalaKXRLC24}, or asking models to choose from options \citep{DBLP:conf/nips/XieKZZKHX23,chen_simple_2024}.
Our work does not focus on optimizing for self-verification or advocate for any particular strategy.
However, in the course of performing our scaling study, we did identify several previously unstudied principles of self-verification that only arise at sufficiently large scale and may be of independent interest, including implicit scaling, output style suitability, and the importance of directly comparing responses.
Other related bodies of work study the learning of verifiers, often on top of a pretrained large language model~\citep[e.g.][]{cobbe_training_2021, saunders_self-critiquing_2022, li_making_2023, DBLP:conf/icml/HavrillaRNDZHR24, kumar_training_2024, qu_recursive_2024, chow_inference-aware_2024, zhang_generative_2024}, and the use of external tools for verification~\citep[e.g.][]{min_factscore_2023, gou_critic_2024, gao_embedding_2024,DBLP:conf/nips/KimBM23}.
We did not train customized verification models or permit verifier use of external tools in the listed experiments, as we found blackbox model access to be sufficient for effective verification at scale.
The limitations of model self-verification capabilities are also well-studied~\citep{kamoi_evaluating_2024, tyen_llms_2024, DBLP:conf/iclr/0009CMZYSZ24}, and can be remedied with external information \citep{DBLP:conf/iclr/0009CMZYSZ24} or hints for localizing errors \citep{tyen_llms_2024}.
Models especially struggle with self-diagnosing hallucinations \citep{zhang_how_2023,orgad_llms_2024,snyder_early_2024}, despite awareness of their own limitations \citep{kadavath_language_2022}, and are often incentivized to obfuscate errors \citep{wen_language_2024}.

\paragraph{Applications of verification.}
In addition to being used to select from candidate responses~\citep{cobbe_training_2021,li_making_2023,DBLP:conf/emnlp/WengZX0HLSLZ23,DBLP:conf/acl/JiangSYL0LK24,chen_simple_2024,DBLP:conf/nips/XieKZZKHX23}, verifiers can be used to guide iterative improvements to a model's output by providing feedback to the generating model~\citep{DBLP:conf/nips/KimBM23,xue_rcot_2023,valmeekam2023largelanguagemodelsreally,wu_large_2024,DBLP:conf/iclr/0009CMZYSZ24,DBLP:conf/acl/DhuliawalaKXRLC24, stechly2024chainthoughtlessnessanalysiscot, stechly2024selfverificationlimitationslargelanguage,qu_recursive_2024,DBLP:conf/icml/HavrillaRNDZHR24,ko_real-time_2025}. 
Another important application of verification is in enhancing model capabilities.
For example, verification results for model outputs can be fed back into models as feedback via in-context reinforcement learning~\citep{shinn_reflexion_2023}, reinforcement learning~\citep{uesato_solving_2022,peng_check_2023,madaan_self-refine_2023,kumar_training_2024, chow_inference-aware_2024}, or finetuning~\citep{ welleck_generating_2022,paul_refiner_2024,an_learning_2024,DBLP:journals/tmlr/SinghCAAPGLH0XP24}, in an approach known as data flywheeling.
Verification has also been explored as a means of encouraging models to produce better written responses~\citep{anil_learning_2021,kirchner2024proververifiergamesimprovelegibility}.
From a product perspective, verification capabilities are also important to the workflow of end users \citep{collins_evaluating_nodate}.

\section{Conclusion}
\label{sec:conclusion}
This paper studied the scaling trends governing sampling-based search, finding that (1) it scales remarkably well even with simple implementations, (2) \emph{implicit scaling} plays a big role in this scalability, and (3) self-verification capability can be scaled with test-time compute using two key principles: comparisons localize errors, and responses should be rewritten for output style suitability.
To this end, we scaled a minimalist, embarrassingly parallel implementation of sampling-based search that, with sufficient test-time compute, is sufficient to attain state-of-art performance on a range of reasoning benchmarks.

Our results underscore the importance of the sampling-based search paradigm. 
Given that it complements other test-time compute scaling strategies, is parallelizable and allows for arbitrarily scaling, and admits simple implementations that are demonstrably effective, we expect sampling-based search to play a crucial role as language models are tasked with solving increasingly complex problems with increasingly large compute budgets.
We also see the performance of sampling-based search as providing both a strong baseline scaling trend that any non-trivial inference strategy should exceed, and a meaningful measure of a model's search capability when Pass@k is uninformative (e.g. on multiple choice exams).
We anticipate model self-verification capabilities to rapidly improve in the short term, as models learn to leverage the principles of implicit scaling and output style suitability, and drive improved scaling rates for sampling-based search.
Finally, our results also highlight the importance of being able to effectively sample massive and diverse sets of solutions for search.
This calls for more systematic inference alternatives to random sampling, such as agentic approaches that delegate search, and inference-aware optimization methods that maximize, e.g., Pass@k performance rather than Pass@1.

\newpage

\bibliographystyle{unsrtnat}

\newpage
\appendix

\section{Additional Technical Details}
\label{app:details}
\subsection{Inference Prompts}
For questions from the MATH and AIME benchmarks, we use the following prompt.
\begin{tcolorbox}[title=MATH and AIME Prompt]
Please answer the following question. Think carefully and in a step-by-step fashion. At the end of your solution, put your final result in a boxed environment, e.g. $\boxed{42}$.

\textcolor{blue}{The question would be here.}
\end{tcolorbox}

For questions from the LiveBench Math and LiveBench Reasoning benchmarks, which already come with their own instructions and formatting requests, we do not provide any accompanying prompt and simply submit the model the question verbatim.
\begin{tcolorbox}[title=LiveBench Prompt]
\textcolor{blue}{The question would be here.}
\end{tcolorbox}

\subsection{LM-Based Scoring}
\label{app:scoring}

Given a tuple consisting of a question, ground-truth solution, and candidate response, we grade the correctness of the candidate response by querying a Gemini-v1.5-Pro-002 model to compare the candidate and ground-truth solutions.
This involves repeating the following process five times: (1) send a prompt to the model that provides the question, the correct ground-truth solution, and the candidate response, and asks the model to deliberate on the correctness of the candidate response; and (2) send a followup prompt to the model to obtain a correctness ruling in a structured format.
If a strict majority of (valid) responses to the second prompt evaluate to a JSON object with the key-value pair \tcbox[on line,  boxrule=0.5pt, top=0pt, bottom=0pt, left=1pt, right=1pt]{``student\_final\_answer\_is\_correct'' = True} rather than \tcbox[on line,  boxrule=0.5pt, top=0pt, bottom=0pt, left=1pt, right=1pt]{``student\_final\_answer\_is\_correct'' = False}, the candidate response is labeled correct. Otherwise, the candidate response is labeled incorrect.
These queries are all processed with temperature zero.
The prompts, which can be found at the end of this subsection,
 ask the language model to
(1) identify the final answer of the given response, (2) identify the final answer of the reference (ground truth) response, and (3) determine whether the final answer of the given response satisfactorily matches that of the reference response, ignoring
any non-substantive formatting disagreements.
In line with convention, we instruct our scoring system to ignore the correctness of the logic used to reach the final answer and rather only judge the correctness of the final answer.
The model is asked to label all non-sensical and incomplete responses as being incorrect.

As a form of quality assurance, every scoring output for the Consistency@200 and Verification@200 figures depicted in Table~\ref{tab:main-sota} was manually compared against human scoring.
No discrepancies between automated and human scoring were found on the MATH and AIME datasets for both Consistency@200 and Verification@200.
No discrepancies were found on LiveBench Reasoning for Consistency@200.
For Verification@200, one false positive (answer labeled by automated system as being incorrect but labeled by human as being correct) and one false negative  (answer labeled by automated system as being correct but labeled by human as being incorrect) were identified on LiveBench Reasoning; three false positives and four false negatives were identified on LiveBench Math.
For Consistency@200, two false negatives were identified on LiveBench Math.
This means that LM scoring matched human scoring 99\% of the time, and the choice of human versus automated scoring matters little to our results.

\begin{tcolorbox}[title=Prompt 1, breakable]
You are an accurate and reliable automated grading system. Below are two solutions to a math exam problem: a solution written by a student and the solution from the answer key. Your task is to check if the student's solution reaches a correct final answer. 

Your response should consist of three parts. First, after reading the question carefully, identify the final answer of the answer key's solution. Second, identify the final answer of the student's solution. Third, identify whether the student's final answer is correct by comparing it to the answer key's final answer.

\# The question, answer key, and student solution

The math exam question:

\verb|```|

\textcolor{blue}{The question would be here.}

\verb|```|

The answer key solution:

\verb|```|

\textcolor{blue}{The reference solution would be here.}

\verb|```|

The student's solution:

\verb|```|

\textcolor{blue}{The candidate solution would be here.}

\verb|```|

\# Your response format

Please structure your response as follows. PROVIDE A COMPLETE RESPONSE.

\verb|```|

\# Answer Key Final Answer

Identify the final answer of the answer key solution. That's all you need to do here: just identify the final answer.

A "final answer" can take many forms, depending on what the question is asking for; it can be a number (e.g., "37"), a string (e.g., "ABCDE"), a sequence (e.g., "2,3,4,5"), a letter (e.g., "Y"), a multiple choice option (e.g. "C"), a word (e.g., "Apple"), an algebraic expression (e.g. "$x^2 + 37$"), a quantity with units (e.g. "4 miles"), or any of a number of other options. If a solution concludes that the question is not answerable with the information provided or otherwise claims that there is no solution to the problem, let the final answer be "None". If the solution does not produce any final answer because it appears to be cut off partway or is otherwise non-sensical, let the solution's final answer be "Incomplete solution" (this could only ever possibly happen with the student solution).

YOUR RESPONSE HERE SHOULD BE BRIEF. JUST IDENTIFY WHAT THE QUESTION IS ASKING FOR, AND IDENTIFY THE ANSWER KEY'S FINAL ANSWER. DO NOT ATTEMPT TO ANSWER THE QUESTION OR EVALUATE INTERMEDIATE STEPS.

\# Student Solution Final Answer

Identify the final answer of the student solution.

YOUR RESPONSE HERE SHOULD BE BRIEF. JUST IDENTIFY WHAT THE QUESTION IS ASKING FOR, AND IDENTIFY THE STUDENT'S FINAL ANSWER. DO NOT ATTEMPT TO ANSWER THE QUESTION OR EVALUATE INTERMEDIATE STEPS.

\# Correctness

Simply evaluate whether the student's final answer is correct by comparing it to the answer key's final answer.

Compare the student's final answer against the answer key's final answer to determine if the student's final answer is correct.

* It does not matter how the student reached their final answer, so long as their final answer itself is correct.

* It does not matter how the student formatted their final answer; for example, if the correct final answer is \boxed{7 / 2}, the student may write ***3.5*** or \boxed{\mathrm{three\; and\; a\; half}} or $\boxed{\frac{14}{4}}$. It does not matter if the student's final answer uses the same specific formatting that the question asks for, such as writing multiple choice options in the form "(E)" rather than "***E***".

* It does not matter if the student omitted units such as dollar signs.

* If the student solution appears to be truncated or otherwise incoherent, e.g. due to a technical glitch, then it should be treated as being incorrect.

ONCE AGAIN, DO NOT EVALUATE INTERMEDIATE STEPS OR TRY TO SOLVE THE PROBLEM YOURSELF. THE ANSWER KEY IS ALWAYS RIGHT. JUST COMPARE THE FINAL ANSWERS. IF THEY MATCH, THE STUDENT ANSWER IS CORRECT. IF THEY DO NOT MATCH, THE STUDENT ANSWER IS INCORRECT.

\# Summary

* Answer key final answer: (The final answer of the answer key solution. Please remove any unnecessary formatting, e.g. provide "3" rather than "\boxed{3}", provide "E" rather than "***E***", provide "1, 2, 3" rather than "[1, 2, 3]", provide "4 ounces" rather than "4oz".)

* Student final answer: (The final answer of the student's solution. Please remove any unnecessary formatting, e.g. provide "3" rather than "\boxed{3}", provide "E" rather than "***E***", provide "1, 2, 3" rather than "[1, 2, 3]", provide "4 ounces" rather than "4oz".)

* Student final answer is correct?: (Does the student final answer match the answer key final answer? Please provide "true" or "false".)

\verb|```|

\end{tcolorbox}

\begin{tcolorbox}[breakable, title=Prompt 2]
Please structure your output now as JSON, saying nothing else. Use the following format:
\verb|```|
\{
    "answer\_key\_final\_answer": str (the final answer of the answer key solution; please remove any formatting"),
    "student\_final\_answer": str (the final answer of the student's solution; please remove any formatting"),
    "student\_final\_answer\_is\_correct": true/false,
\}
\end{tcolorbox}

\subsection{Implementation of Consistency@k}
Consistency@k measures the performance of a model by evaluating the correctness of the most common answer reached by the model after being run $k$ times.
An important consideration with implementing consistency@k is that there are many choices for the equivalence relation one can use to define ``the most common answer''.
We define two candidate responses as reaching the same answer if their final answer is the same.
We determine a candidate response's final answer by prompting a language model to identify the final answer from the candidate response; we then strip the extracted final answer of leading and trailing whitespace.
We determine equivalence with a literal string match.
After determining the most common final answer to a question, we use the string \tcbox[on line,  boxrule=0.5pt, top=0pt, bottom=0pt, left=1pt, right=1pt]{``The final answer is \textcolor{blue}{\{final answer\}}''} as the consistency@k response.
Note that we could have instead randomly chosen a candidate response corresponding to the most common final answer, and used that selected response as the consistency@k response---we have found that, because our LM-based scoring system evaluates correctness using only the final answer, this alternative results in the same consistency@k metrics.

\subsection{Benchmark Evaluation Prompts}
\label{app:benchmarkprompts}

The benchmark performances reported in Table~\ref{tab:benchmark} are obtained with the following prompts.
The following prompt is used for the comparison task.
\begin{tcolorbox}[title=Comparison Task Prompt Part 1]
\textcolor{blue}{Question here.}

  Here are two solutions to the above question. You must determine which one is correct. Please think extremely carefully. Do not leap to conclusions. Find out where the solutions disagree, trace them back to the source of their disagreement, and figure out which one is right.

  Solution 1:
  
\textcolor{blue}{First solution here.}

  Solution 2:
  
\textcolor{blue}{Second solution here.}
\end{tcolorbox}
\begin{tcolorbox}[title=Comparison Task Prompt Part 2]
Now summarize your response in a JSON format. Respond in the following format saying nothing else:

  \{
  
     "correct\_solution": 1 or 2
     
  \}
\end{tcolorbox}

The following prompt is used for the scoring task.
\begin{tcolorbox}[title=Scoring Task Prompt Part 1]
\textcolor{blue}{Question here.}

 I include below a student solution to the above question. Determine whether the student solution reaches the correct final answer in a correct fashion; e.g., whether the solution makes two major errors that still coincidentally cancel out. Please be careful and do not leap to conclusions without first reasoning them through.

Solution:

\textcolor{blue}{Solution here.}
\end{tcolorbox}
\begin{tcolorbox}[title=Scoring Task Prompt Part 2]
Now summarize your response in a JSON format. Respond in the following format saying nothing else:

\{
 
 "is\_solution\_correct": 'yes' or 'no'

\}
\end{tcolorbox}

\section{Additional Experiments}
\label{app:results}
\subsection{Temperature Tuning}
\label{app:temp}

After attempting temperature tuning on the training split of the MATH dataset, we found that the choices of temperatures $\sigma_{\mathrm{inf}}$ and $\sigma_{\mathrm{verif}}$ did not significantly affect performance.
In Table~\ref{tab:inference_temp_modified} and Table~\ref{tab:verification_temp_modified}, we compare the post-verification accuracy of our pipeline for various temperature choices.
The Verification@20 figures are obtained without running tie-breaking and with only 20 verification attempts.

\begin{table}[H]
\centering
\begin{tabular}{cccc}
\toprule
\textbf{Inference Temp \(\sigma_{\mathrm{inf}}\)} & \textbf{Pass@20} & \textbf{Consistency@20} & \textbf{Verification@20} \\
\midrule
0.2 & 89/100 & 76/100 & 82/100 \\
1.0 & 94/100 & 73/100 & 80/100 \\
1.5 & 89/100 & 73/100 & 79/100 \\
2.0 & 89/100 & 75/100 & 82/100 \\
\bottomrule
\end{tabular}
\caption{Accuracy rates of the Gemini v1.5 Pro model on the training split of MATH using different methods of selecting from 20 generated responses and varying the temperature used to generate the responses. The temperature used for verification attempts is fixed at \(\sigma_{\mathrm{verif}} = 1.0\). Inference temperature does not significantly affect downstream performance.}
\label{tab:inference_temp_modified}
\end{table}

\begin{table}[H]
\centering
\begin{tabular}{cccc}
\toprule
\textbf{Verification Temp \(\sigma_{\mathrm{verif}}\)} & \textbf{Pass@20} & \textbf{Consistency@20} & \textbf{Verification@20} \\
\midrule
0.2 & 89/100 & 73/100 & 76/100 \\
1.0 & 89/100 & 73/100 & 79/100 \\
2.0 & 89/100 & 73/100 & 79/100 \\
\bottomrule
\end{tabular}
\caption{Accuracy rates of the Gemini v1.5 Pro model on the training split of MATH using different methods of selecting from 20 generated responses and varying the temperature used to for verification attempts. The temperature used for generating responses is fixed at \(\sigma_{\mathrm{inf}} = 1.5\). Verification temperature does not significantly affect downstream performance.}
\label{tab:verification_temp_modified}
\end{table}

\subsection{Olympiad LiveBench Math Subtask}
\label{app:oddity}

The one task for which we saw no lift from verification is the Olympiad questions from LiveBench MATH.
These questions are not formatted as open-ended problems.
Rather, they take a very specific form of asking one to fill in a pre-written proof from a menu of expression options, and to output a specific sequence of indices corresponding to these options.
This is incompatible with our verification pipeline, which asks the verification model to rewrite candidate responses in a theorem-lemma format where the theorem states the final answer.
For example, the final answer to the Olympiad question at the bottom of this section is the following sequence:

\tcbox[on line,  boxrule=0.5pt, top=2pt, bottom=0pt, left=1pt, right=1pt]{
19,32,20,2,14,1,27,21,31,36,3,30,5,16,29,34,7,4,6,18,15,22,9,25,28,35,26,8,13,24,23,17,33,11,10,12}.

\begin{tcolorbox}[breakable, title=Representative example of Olympiad task from LiveBench MATH]
You are given a question and its solution. The solution however has its formulae masked out using the tag <missing X> where X indicates the identifier for the missing tag. You are also given a list of formulae in latex in the format "<expression Y> = $\text{latex code}$" where Y is the identifier for the formula. Your task is to match the formulae to the missing tags in the solution. Think step by step out loud as to what the answer should be. If you are not sure, give your best guess. Your answer should be in the form of a list of numbers, e.g., 5, 22, 3, ..., corresponding to the expression identifiers that fill the missing parts. For example, if your answer starts as 5, 22, 3, ..., then that means expression 5 fills <missing 1>, expression 22 fills <missing 2>, and expression 3 fills <missing 3>.

The question is:
Find all integers $n \geq 3$ such that the following property holds: if we list the divisors of $n !$ in increasing order as $1=d_1<d_2<\cdots<d_k=n!$, then we have 
\[
d_2-d_1 \leq d_3-d_2 \leq \cdots \leq d_k-d_{k-1} .
\]

Find all integers $n \geq 3$ such that the following property holds: if we list the divisors of $n !$ in increasing order as $1=d_1<d_2<\cdots<d_k=n!$, then we have 
\[
d_2-d_1 \leq d_3-d_2 \leq \cdots \leq d_k-d_{k-1} .
\]

The solution is:
We can start by verifying that <missing 1> and $n=4$ work by listing out the factors of <missing 2> and <missing 3>. We can also see that <missing 4> does not work because the terms $15, 20$, and $24$ are consecutive factors of <missing 5>. Also, <missing 6> does not work because the terms <missing 7>, and $9$ appear consecutively in the factors of <missing 8>.

We can start by verifying that <missing 9> and <missing 10> work by listing out the factors of <missing 11> and $4!$. We can also see that <missing 12> does not work because the terms $15, 20$, and $24$ are consecutive factors of $5!$. Also, <missing 13> does not work because the terms <missing 14>, and $9$ appear consecutively in the factors of <missing 15>.

Note that if we have a prime number <missing 16> and an integer <missing 17> such that both $k$ and <missing 18> are factors of <missing 19>, then the condition cannot be satisfied.

If <missing 20> is odd, then <missing 21> is a factor of <missing 22>. Also, <missing 23> is a factor of $n!$. Since <missing 24> for all <missing 25>, we can use Bertrand's Postulate to show that there is at least one prime number $p$ such that <missing 26>. Since we have two consecutive factors of <missing 27> and a prime number between the smaller of these factors and $n$, the condition will not be satisfied for all odd <missing 28>.

If <missing 29> is even, then $(2)(\frac{n-2}{2})(n-2)=n^2-4n+4$ is a factor of <missing 30>. Also, $(n-3)(n-1)=n^2-4n+3$ is a factor of <missing 31>. Since <missing 32> for all $n\geq8$, we can use Bertrand's Postulate again to show that there is at least one prime number $p$ such that <missing 33>. Since we have two consecutive factors of <missing 34> and a prime number between the smaller of these factors and $n$, the condition will not be satisfied for all even <missing 35>.

Therefore, the only numbers that work are $n=3$ and <missing 36>.

The formulae are:
\begin{itemize}
    \item <expression 1> $n=6$ 
    \item <expression 2> $n=5$ 
    \item <expression 3> $3!$ 
    \item <expression 4> $k+1$ 
    
    \textcolor{blue}{... (omitted for brevity)}
    
    \item <expression 33> $n<p<n^2-4n+3$ 
    \item <expression 34> $p>n$ 
    \item <expression 35> $n<p<n^2-2n$ 
    \item <expression 36> $n=4$ 
\end{itemize}

Your final answer should be STRICTLY in the format:

\textbf{Detailed reasoning}

Answer: <comma separated list of numbers representing expression identifiers>

\end{tcolorbox}

\section{Self-Verification Prompts}
\label{app:prompts}
In this section we present the prompts used to query models to self-verify candidate responses.
In \Cref{tab:prompt_tuning_results}, this corresponds to the ``Main'' prompt style.
We also provide the prompts used for tie-breaking comparisons.
These prompts are broken into multiple parts to encourage longer responses.

\subsection{Verification Scoring Prompts}

\begin{tcolorbox}[title=Verification Prompt 1,breakable]
[Commandments] Over the coming interactions, you must fulfill the following commandments:
1. *Be excruciatingly detailed and exhaustive in your analyses.* This will often mean that your responses will be long. Do not cut your responses short. When you are asked to fulfill a list of tasks, you must fulfill each and every task to completion.

2. *Be structured and systematic in your responses*. Organize your thoughts in a clear hierarchical format. Use neutral, rigorous mathematical language to write things in your own words and avoid subjective descriptions.

3. *Never put the cart before the horse.* Before making a claim or statements, always verbally reason out your chain of thought and convince yourself of it in an exhaustive fashion. Instead of saying "X is Y because Z", say "Consider Z. Therefore ..., meaning that X is Y.". Avoid premature conclusions.

[Background] An examiner has presented a math problem along with a "candidate solution". Your purpose is to assist in evaluating whether the *final answer* in the candidate solution is correct. We will guide you through a series of exercises to fulfill this purpose. The question is a real exam problem, which means there is always one unique correct answer.

Here is the math question.
\{\}
Here is the candidate solution.
\{\}

** Current Task **
Your task is to rewrite the candidate solution into a rigorous, structured format consistent with mathematical convention. You should rewrite the solution in your own words, and work out every step in exhaustive detail, allowing us to more easily check for errors down the line. As a starting point, you will first decompose the candidate solution into a list of self-contained *lemmas*. Later on, we will have you use your lemmas to rewrite the candidate solution in the following format:
\begin{Verbatim}[breaklines,breaksymbolleft=]
\begin{theorem} ... is \boxed{...}. \end{theorem}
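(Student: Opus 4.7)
The final statement is the schema ``$X$ is $\boxed{Y}$'', with the quantity being identified ($X$) and the claimed value ($Y$) both left as placeholders inside the verifier's prompt template. Since the statement is an equality of two unspecified expressions, the only plan available at this level of generality is the three-step skeleton that any instantiation would have to follow. First, parse whatever question is supplied to the verifier and extract the precise mathematical object $X$ --- a count, an extremal value, a truth value, a closed form, or whichever kind of answer the question determines. Second, develop the auxiliary lemmas needed to pin $X$ down unambiguously; this is the main obstacle, and its difficulty depends entirely on the particular instantiation. Third, evaluate the expression produced in step two and verify that it coincides with the boxed value $Y$.

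No sharpening of this plan is possible without substituting something for the placeholders: the choice of inductive parameter, the algebraic identities invoked, and the case decomposition all hinge on what $X$ actually denotes and what $Y$ is claimed to be. If, for example, the template were instantiated by the AIME problem of Table~\ref{tab:verifscores} to read ``the number of nonnegative integer triples $(a,b,c)$ with $a+b+c=300$ and $a^2b+a^2c+b^2a+b^2c+c^2a+c^2b=6{,}000{,}000$ is $\boxed{601}$'', step two would consist of rewriting the symmetric constraint as $(a+b+c)(ab+bc+ca)-3abc=6{,}000{,}000$, substituting $a+b+c=300$ to get $100(ab+bc+ca)-abc=2{,}000{,}000$, and then fixing one variable to reduce to counting integer solutions of a quadratic; a different instantiation would demand a wholly different argument.

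At the level of the unfilled template, however, there is no content beyond the equality schema, and the three-step skeleton above is the furthest a proof proposal can be carried.
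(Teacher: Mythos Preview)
Your reading is correct: the ``statement'' is not a theorem at all but a formatting schema that appears inside the paper's verification prompt (Appendix~\ref{app:prompts}, Verification Prompt~1), instructing the language model how to structure its rewritten candidate solution. The paper's own ``proof'' is literally \texttt{\textbackslash begin\{proof\}[Proof of Theorem] ... \textbackslash end\{proof\}} --- another placeholder in the same template --- so there is no mathematical content to compare against, and your observation that nothing more than a generic three-step skeleton can be offered at this level of abstraction is exactly right.
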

\begin{proof}[Proof of Theorem] ... \end{proof}
\begin{lemma}\label{...} ... \end{lemma}
\begin{proof}[Proof of Lemma~\ref{...}] ... \end{proof}
...
\end{Verbatim}

For now, focus only on identifying, structuring, and clearly defining each lemma without proofs. We will write proofs for these lemmas, and a proof for the final answer using these lemmas, at a later point in time.

[Steelmanning]
If you encounter any part of the candidate solution that seems incorrect or unclear, reason about it in your scratchpad. Explain your thought process to clarify your understanding. Consider whether there might be an alternative interpretation or if you could be overlooking or overcomplicating something. If, after reasoning through it, you still find it confusing, note in your scratchpad that this part may need to be revisited and proceed while treating the potentially problematic step as temporarily correct. At the current stage, we want to try to presume the candidate solution as being correct. Try your best to *steel man* the candidate solution.

[Task Specifics]
To ensure accuracy, you will decompose the candidate solution into self-contained *lemmas*. These lemmas will serve as building blocks, allowing us to reconstruct the candidate solution in a more rigorous and conventional mathematical format. 

[Steps to Follow]
1. **Initial Analysis**: Carefully read through the candidate solution and break it down into its primary steps.
2. **Lemma Identification**: Identify each logical segment of the candidate solution that could serve as a lemma. For each lemma:
   - **Scratchpad**: Before writing the lemma formally, record a brief scratchpad of thoughts outlining assumptions, conditions, and any definitions necessary to ensure the lemma is self-contained. This should help you catch potential errors or misapplications that might be subtle. Reason about how to write the lemma in a fashion that allows for easy verification of both the lemma, and how the lemma is used in the final proof.
   - **Lemma Statement**: After organizing your thoughts, write each lemma in a clear and rigorous manner. Ensure each lemma is isolated from others, so it does not depend on assumptions not explicitly stated within it.

[Response Format] Structure your response as follows.
\begin{Verbatim}[breaklines,breaksymbolleft=]
# Analyzing the solution
(Study the language of the question, keeping an eye out for important details.)
(Identify the candidate solution's final answer. Work backwards through the solution to identify the intermediate claims that led to the final answer.)
(Proceed from the beginning of the candidate solution, working forwards to identify the main steps of the solution.)
# Identifying lemmas
Scratchpad: [Reason about what lemma you should write. Reason about how to word the statement of the lemma to make it easy to verify when the lemma is being used correctly. Reason about how to write the lemma in a self-contained fashion, and what definitions and assumptions are needed.]
\begin{lemma}
\label{lem:lemma_name}
[State the lemma formally here]
\end{lemma}
Scratchpad: ...
\begin{lemma}
...
\end{lemma}
...
\end{Verbatim}
\end{tcolorbox}

\begin{tcolorbox}[breakable,title=Verification Prompt 2]
Now that you have structured the candidate solution into lemmas, you will proceed to rewrite the candidate solution in your own words. We will first have you write the *main proof for the final answer*. At the next round, I will ask you to write proofs for each lemma you identified.

[Details on Writing a **Main Proof for Final Answer**]
Write a rigorous "main proof" that directly addresses the candidate solution's final answer. This proof should reference each lemma sequentially to logically reach the final answer. Ensure that the proof is rigorous and that each step explicitly relies on or follows from a lemma. Your goal is to work out every baby step of the proof in exhaustive detail, spelling out every tiny step, including even tiny details about arithmetic and algebraic manipulations.

[Objective] 
Your rewrite of the candidate solution should be comprehensive and precise. Your goal is to write the candidate solution so that it is easy to verify whether it is correct or identify any errors at both the high and low levels.

[Steelmanning] 
Once again, try to "steel man" the candidate solution. If you encounter any part of the candidate solution that seems incorrect or unclear, reason about it in your scratchpad and try to find an explanation; then proceed treating the potentially problematic step as correct. We want to try to presume the candidate solution as being correct for now.

[Response Format] 
Structure your response as follows to ensure clarity.
\begin{Verbatim}[breaklines,breaksymbolleft=]
\begin{theorem}[Main Claim]
(Write the math question and final answer as a statement. For example, "The smallest non-zero integers is \boxed{1}.". This statement should contain all information from the question, and be written in your own words. If it is not obvious how to write the student solution's final answer as a theorem, you can simply let the theorem take the form: "{Question}? The answer is {Answer}.")
\end{theorem}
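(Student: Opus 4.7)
The plan is to assemble the previously identified lemmas into a single linear deduction that terminates precisely at the boxed quantity named in the theorem. Since the theorem statement encodes both the problem data and the claimed final answer, the job of the main proof is not to search for an answer but to exhibit an end-to-end chain of implications, each step discharged by reference to a named lemma or by an elementary arithmetic or logical transition, whose composition forces the claimed value.

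The steps I would carry out, in order, are as follows. First, I would restate the problem data with notation aligned to the variables appearing in the lemma statements, so that each lemma's hypotheses can be discharged by direct substitution rather than by implicit renaming. Second, I would walk through the lemmas in the order matching the candidate's intended logical flow, and for each invocation record (i) the instantiation of its hypotheses against the concrete quantities at hand, (ii) the conclusion it yields, and (iii) the elementary manipulation --- addition, substitution, case-split, definitional expansion --- that bridges its conclusion to the input of the next lemma. Third, I would compose the terminal intermediate quantity into the exact syntactic form of the boxed answer, making the arithmetic explicit rather than leaving it to the reader. Where the candidate used a case analysis or a symmetry argument, I would reproduce the cases explicitly and verify that they partition the relevant set, rather than rely on a blanket \emph{``without loss of generality''}.

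The principal obstacle will be preserving fidelity to the candidate solution without silently papering over its gaps. In particular, two failure modes must be guarded against: a lemma whose statement is broader than what the candidate actually justified (so that the lemma appears to cover a step it really does not), and a transition between two lemmas that the candidate left implicit and that the lemmas themselves do not jointly imply. Any such gap should manifest as a visible missing sub-step in the main proof rather than be absorbed into a \emph{``therefore''}, since the downstream verification stage can only flag errors that are actually written down. A secondary difficulty is the steelmanning requirement: the proof must be presented in its most favorable light even when a step looks dubious, so I would introduce each questionable transition with its strongest available justification and leave skeptical examination to the lemma-level proofs that follow.
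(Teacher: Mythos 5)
Your plan matches the paper's own prescription in Verification Prompt~2 almost point for point: reference each lemma sequentially by its label, work out every arithmetic and algebraic step in exhaustive ``baby-step'' detail, compose the terminal quantity into the boxed final answer, and steelman the candidate's reasoning rather than hunt for errors at this stage. The only small wrinkle is that your instinct to leave a gap ``visible'' as a missing sub-step in the main proof is mildly at odds with the paper's explicit instruction to note concerns only in the scratchpad and then \emph{proceed treating the potentially problematic step as correct} --- the pipeline defers all error-hunting to the later scanning prompts rather than to the main proof or even the lemma proofs --- but you acknowledge this tension and resolve it essentially as the paper does.
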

Scratchpad: (Reason about how you will write the main proof.)
\begin{proof}
(Provide a proof of the final result. Remember to explicitly reference lemmas via \ref{lem:lemma_name}.)
\end{proof}
\end{Verbatim}
\end{tcolorbox}

\begin{tcolorbox}[breakable,title=Verification Prompt 3]
Your task is now to provide proofs for each and every lemma you identified in the candidate solution.

[Details on Writing **Lemma Proofs**]
For each lemma, write a detailed and thorough proof of the lemma. Each proof should methodically use the assumptions, definitions, and conditions in the lemma's statement to verify the conclusion.
This proof should be rigorous and work out every baby step in exhaustive detail, spelling out every tiny step, including even tiny details about arithmetic and algebraic manipulations.
Remember that you may encounter a lemma that "builds" on other lemmas, in that it uses the other lemmas to prove its claim. In these cases, you should have already written the lemma so that it incorporates those other lemmas as "assumptions", so that you do not need to replicate the work. If this is not the case, please revise the lemma statement accordingly.

[Objective] Your rewrite of the candidate solution should be comprehensive and precise. Your goal is to write the candidate solution so that it is easy to verify whether it is correct or identify any errors at both the high and low levels.

[Steelmanning] Once again, try to "steel man" the candidate solution. If you encounter any part of the candidate solution that seems incorrect or unclear, reason about it in your scratchpad and try to find an explanation; then proceed treating the potentially problematic step as correct. We want to try to presume the candidate solution as being correct for now.

[Response Format] Structure your response as follows to ensure clarity and rigor. You must provide proofs for all lemmas. Do not skip any lemmas, defer any writing to the future, or leave any proofs incomplete. You must return full proofs for every lemma; I am an automated system and unable to handle incomplete responses. You will not have a chance to finish your response later. Provide a complete response.
\begin{Verbatim}[breaklines,breaksymbolleft=]
\begin{lemma}
\label{...}
...
\end{lemma}
Scratchpad: (Reason about how you will write the lemma proof.)
\begin{proof}
(Provide a proof of the lemma.)
\end{proof}
... (repeat for EVERY lemma)
\end{Verbatim}
\end{tcolorbox}

\begin{tcolorbox}[breakable,title=Verification Prompt 4]
Now we will proceed to analyzing the correctness of each suspicious claim. For each item in the claim:
1. Check if the candidate solution's final answer indeed depends on the claim. This step is important to rule out false positives where the candidate solution makes a False claim, but the claim does not really affect the final answer so it's Falsity does not result in an actual error in the final answer. If this check is failed, i.e. the final answer does not depend on the claim, then discard this claim and continue to the next suspicious claim.
2. Check that you are unable to prove the suspicious claim is True. Do this by trying to prove the suspicious claim. You can make multiple attempts. If you are able to prove the suspicious claim, meaning that the suspicious claim is correct, then discard this claim and continue to the next suspicious claim.
3. Check that the claim is False by proving a corrected alternative version of the claim. If you are unable to do this, continue to the next suspicious claim.
4. Correct this suspicious claim in the candidate solution and determine the corrected final answer in a step-by-step manner. If the final answer changes or correcting this error invalidates the candidate solution's proof approach, mark this suspicious claim as a "fatal error".
At the end, say **Yes** if you found a fatal error. Otherwise, say **No, final answer is correct**.
\end{tcolorbox}

\begin{tcolorbox}[breakable,title=Verification Prompt 5]
Now we will proceed to analyzing the correctness of the candidate solution. We have two goals: identify if there is an error, and if there is an error, repair the candidate solution and identify whether the final answer has changed. For now, we will focus on methodically combing through the candidate solution and checking each step for a potential error.

[Steps to Follow]
You will proceed through the candidate solution, one baby step at a time. In your exhaustive investigation of the solution, you will first form a short list of potential errors. Each entry in this list should be written as a self-contained, standalone mathematical claim that the candidate solution relies on being correct, but which you find suspicious upon first inspection. After forming this list, you will then proceed to validate each potential error by performing a detailed error validation check.

* Main Proof *
For now, focus on the main proof. Assume that every lemma, as written, is correct. Step through the main proof of the candidate solution in exhaustive detail to find potential errors.
Do this by first proceeding through the main proof one sentence (or other reasonable unit of content) at a time. Quote the part of the proof you are at. Then add a discussion where you try to elaborate on that part and verify every baby step made in that unit of the proof. Then repeat this for the next unit of the proof until you have gone through (quoted and analyzed) each part of the proof. You should structure your output like
\begin{Verbatim}[breaklines,breaksymbolleft=]
> Quoted part of the proof
Your analysis and detailed discussion verifying every small thing, e.g. redoing every baby step, checking assumptions, rephrasing things to make sure they sound correct to you, etc. Make sure you double check the exact language of the question and the exact language of the lemmas used.
> Next quoted part of the proof
...
\end{Verbatim}

Then, write down in a bullet point list any potential errors you find in the main proof. Before each bullet point, write a scratchpad that verbalizes: your thought process and---upon finding an error---your thought process for writing the erroneous step as a self-contained mathematical claim. Try to be selective and only include things that you consider likely to be errors.
You should structure your output like
\begin{Verbatim}[breaklines,breaksymbolleft=]
Scratchpad: ...
* ...
Scratchpad: ...
* ...
...
\end{Verbatim}
\end{tcolorbox}

\begin{tcolorbox}[breakable,title=Verification Prompt 6]
You are now to verify the proof of each and every lemma. You must verify all lemmas in this conversation turn; you will not have another chance.

* Lemma ... Proof * (repeat this for each lemma)
Follow the same procedure as before. First, step through the lemma proof one sentence (or other reasonable unit of content) at a time. Quote the part of the proof you are at. Then add a discussion where you try to elaborate on that part and verify every baby step made in that unit of the proof. Then repeat this for the next unit of the proof until you have gone through (quoted and analyzed) each part of the proof.
Then, write down in a bullet point list any potential errors you find in the lemma proof. Before each bullet point, write a scratchpad that verbalizes: your thought process and---upon finding an error---your thought process for writing the erroneous step as a self-contained mathematical claim. Try to be selective and only include things that you consider likely to be errors.
Structure your response as follows.
\begin{Verbatim}[breaklines,breaksymbolleft=]
# Review of Lemma ...
## Stepping through
> Quoted part of the proof
Your analysis and detailed discussion verifying every small thing, e.g. redoing every baby step, checking assumptions, rephrasing things to make sure they sound correct to you, etc. Make sure you double check the exact language of the lemma's claim.
> Next quoted part of the proof
## Potential errors
Scratchpad: ...
* ...
Scratchpad: ...
* ...
...
\end{Verbatim}
\end{tcolorbox}

\begin{tcolorbox}[breakable,title=Verification Prompt 7]
Now we will proceed to analyzing the correctness of each suspicious claim. For each item in the claim:
1. Check if the candidate solution's final answer indeed depends on the claim. This step is important to rule out false positives where the candidate solution makes a False claim, but the claim does not really affect the final answer so it's Falsity does not result in an actual error in the final answer. If this check is failed, i.e. the final answer does not depend on the claim, then discard this claim and continue to the next suspicious claim.
2. Check that you are unable to prove the suspicious claim is True. Do this by trying to prove the suspicious claim. You can make multiple attempts. If you are able to prove the suspicious claim, meaning that the suspicious claim is correct, then discard this claim and continue to the next suspicious claim.
3. Check that the claim is False by proving a corrected alternative version of the claim. If you are unable to do this, continue to the next suspicious claim.
4. Correct this suspicious claim in the candidate solution and determine the corrected final answer in a step-by-step manner. If the final answer changes or correcting this error invalidates the candidate solution's proof approach, mark this suspicious claim as a "fatal error".
At the end, say **Yes** if you found a fatal error. Otherwise, say **No, final answer is correct**.
\end{tcolorbox}

\begin{tcolorbox}[breakable,title=Verification Prompt 8]
I want you to now, in the style of the *rewritten solution* (i.e., in the theorem-lemma format I had you rewrite the original solution in), write an improved solution that does the following:
1) It corrects any errors you found in the original solution (only if you ended up finding an error). If you do not know how to fix the error, then just write the step you are unsure about as an "Assumption (to be revisited later)", making it clear that the proof is not complete and needs that step to be filled in.
2) It clarifies ambiguities or fills in gaps in the original solution that led you to suspect errors (even if the suspicion was not ultimately substantiated). We want your improved solution to be clear enough that future readers would not have to work through the same worries you did.
3) It adopts the thoroughness, structured format, exhaustive detail, and rigor of your rewritten version of the solution.

I have included again the original question and original solution below for your reference. \{\}\{\}
For now, I want you to brainstorm how to write the revised solution. Do not yet proceed to writing the solution. Structure your response as follows. Do not cut your response short. You have unlimited space.
\begin{Verbatim}[breaklines,breaksymbolleft=]
# What errors need to be corrected?
Discuss in detail, in a step-by-step manner, what errors you found in the original solution and how you plan to correct them.
If you found no errors previously, note so and continue to the next section.
# Which ambiguities need to be clarified or gaps need to be filled?
Discuss in detail, in a step-by-step manner, what ambiguities or gaps you plan to clarify.
Draw on our previous exchanges explicitly. Recall what were potential points of confusion during this verification process.
# Action Plan
Prepare a rough battle plan for how you will write the improved solution. Note that it will probably be longer than you original solution, given all of the revisions you have planned. So this is just to plan out the main parts, and things to keep an eye out for.
\end{Verbatim}
\end{tcolorbox}

\begin{tcolorbox}[breakable,title=Verification Prompt 9]
Now, I want you to write your improvement of the original solution. You must provide the entirety of your solution and nothing else. This message will be copy pasted to an external system, so your response must be complete and self-contained. Do not shorten anything. Provide it to me exactly and in its entirety. Remember to say nothing else and structure your output as follows, wrapped in tripe quotes and saying nothing else.
\end{tcolorbox}

\subsection{Comparison Prompts}

\begin{tcolorbox}[breakable,title=Comparison Prompt 1]
Your job is to answer a difficult math question. I will provide you with two solutions written by my students. These two solutions disagree over what the answer should be. Both solutions may potentially have minor errors, even if one indeed reaches the final answer despite the minor flaws. You will be given a series of instructions, over multiple interactions, that will guide you through discerning the correct final answer. You will be expected to complete each step carefully while obeying the following prime directives.

1. Always provide complete responses. Never shorten your responses. You are allocated 10,000 tokens per-response. Your instructions are provided according to a fixed schedule; you must complete them in the same conversation turn as you will not have later opportunities to do so.

2. Speak carefully. Always reason in a step-by-step chain-of-thought manner. Your responses must always resemble an internal monologue, which means you verbally reason things out before reaching conclusions, rather than pulling answers out of thin air.

3. Be rigorous. Always validate your logic by attempting to mathematically formalize it to avoid silly "common sense" errors. Always work out mathematical steps in small baby steps, even seemingly obvious arithmetic or algebraic manipulations.

4. Backtrack when you have made a mistake. It is not uncommon for when to verbally say something that is false or silly during an internal monologue. Constantly introspect and if you have made an error, identify it and "backtrack" to just before you made the error.

5. Never claim that anything is incorrect or wrong or right or correct. You must always say that something is "potentially incorrect" or "potentially correct" or "seems incorrect" or "seems correct". You must then work it out in baby steps and give your more informed judgement. But you will never say that something is correct or right or wrong.

\# The Question
\{\}

\# The First Solution
\{\}

\# The Second Solution
\{\}

\# Your current Task
Examine the first solution. Focus for now on just reading through the main proof and each lemma's statement. You must read through the main proofs and lemma statements in a meticulous sentence-by-sentence manner. For now, you do not need to read through the lemma proofs in detail. Though later on, if we have questions arise about particular lemmas, you will need to review the lemma proof carefully. In addition, I want you to narrate your process of reading through each proof. This means that, while you read through a solution, you must always quote the part of the solution that you are currently reading. Then, under your quote, provide your mental process. Here are some questions that you should keep top of mind: What is the approach being taken by the solution? What are the main leaps in logic? How does the solution try to be rigorous?
Structure your response as follows:

\begin{Verbatim}[breaklines,breaksymbolleft=]
# Main Proof
> "Quote of the sentence you are reading."
Your thought process. Discuss the quote. Perhaps compare it to a sentence in another solution that is doing the same exact thing. Perhaps note that it does a fairly complex algebraic manipulation, and spend a minute to double-check it by working it out in baby-steps. If it references a lemma, discuss the lemma's statement, how it's being used, and why it's allowed to be used in this way.
> "Quote of next sentence..."
...
# Lemma 1
...
(continue through the ENTIRE REST OF THE SOLUTION)
\end{Verbatim}

Remember you only need to do this for the first solution for now.
You must provide a complete response. You must go through the entire solution line by line.
Ignore character limits and do not cut your response short. Do not cut your response short, you will not get another chance.
\end{tcolorbox}

\begin{tcolorbox}[breakable,title=Comparison Prompt 2]
Do the same for the second solution. Again, be meticulous and adopt the same output format.
\end{tcolorbox}

\begin{tcolorbox}[breakable,title=Comparison Prompt 3]
Identify similarities and disagreements between the first solution and the second solution. Do not yet judge which side is right; merely try to be investigative about: what are the sources of these disagreements? You must continue to "quote" any solution parts that you reference.
Remember your mandates:

1. Always provide complete responses. Never shorten your responses. You are allocated 10,000 tokens per-response. Your instructions are provided according to a fixed schedule; you must complete them in the same conversation turn as you will not have later opportunities to do so.

2. Speak carefully. Always reason in a step-by-step chain-of-thought manner. Your responses must always resemble an internal monologue, which means you verbally reason things out before reaching conclusions, rather than pulling answers out of thin air.

3. Be rigorous. Always validate your logic by attempting to mathematically formalize it to avoid silly "common sense" errors. Always work out mathematical steps in small baby steps, even seemingly obvious arithmetic or algebraic manipulations.

4. Backtrack when you have made a mistake. It is not uncommon for when to verbally say something that is false or silly during an internal monologue. Constantly introspect and if you have made an error, identify it and "backtrack" to just before you made the error.

5. Never claim that anything is incorrect or wrong or right or correct. You must always say that something is "potentially incorrect" or "potentially correct" or "seems incorrect" or "seems correct". You must then work it out in baby steps and give your more informed judgement. But you will Never say that something is correct or right or wrong..

Be extremely careful with respect to the exact wording of the question: \{\}

Remember: you must be humble and careful in your judgements. Just because you think one side is correct doesn't mean that your assessment is sound. Always keep an open mind. Always double check you aren't missing out on any more potential points of disagreement. Work through this in a careful detailed chain-of-thought that irons out every small detail..

Structure your responses into the following sections. Each section must be a lengthy, detailed, well-structured investigation.
\begin{Verbatim}[breaklines,breaksymbolleft=]
# Identify Disagreements
Reading through each solution, identify as many as places as possible where the solutions differ. Be meticulous and form a detailed list.
# Attribute Disagreements
For each disagreement, try to trace the disagreement back to earlier places in the respective proofs. What are the points at which the solutions begin to diverge? Can we guess at the root causes of the disagreements? Could it be because the solutions disagree over how to interpret the question? Could it be because they disagree about what approach to take? Could it be because the solutions reach a different calculation? CAREFULLY pour over solutions from both sides. This detective work must be careful and methodical and detailed and exhaustive.
\end{Verbatim}

You must provide your full response. Ignore character limits. Do not cut your response short. Obey my instructions exactly.
\end{tcolorbox}

\begin{tcolorbox}[breakable,title=Comparison Prompt 4]

Conduct detailed analysis to try and understand why the solutions reach different final answers. Do not yet judge who is right. Do they disagree on how to read the question? Do they just happen to take very different mathematical approaches? Do they diverge at a particular logical step or calculation? You must continue to "quote" any solution parts that you reference.
Remember your mandates:

1. Always provide complete responses. Never shorten your responses. You are allocated 10,000 tokens per-response. Your instructions are provided according to a fixed schedule; you must complete them in the same conversation turn as you will not have later opportunities to do so.

2. Speak carefully. Always reason in a step-by-step chain-of-thought manner. Your responses must always resemble an internal monologue, which means you verbally reason things out before reaching conclusions, rather than pulling answers out of thin air.

3. Be rigorous. Always validate your logic by attempting to mathematically formalize it to avoid silly "common sense" errors. Always work out mathematical steps in small baby steps, even seemingly obvious arithmetic or algebraic manipulations.

4. Backtrack when you have made a mistake. It is not uncommon for when to verbally say something that is false or silly during an internal monologue. Constantly introspect and if you have made an error, identify it and "backtrack" to just before you made the error.

5. Never claim that anything is incorrect or wrong or right or correct. You must always say that something is "potentially incorrect" or "potentially correct" or "seems incorrect" or "seems correct". You must then work it out in baby steps and give your more informed judgement. But you will never say that something is correct or right or wrong.

Be extremely careful with respect to the exact wording of the question:
\{\}

Remember: you must be humble and careful in your judgements. Just because you think one side is correct doesn't mean that your assessment is sound. Always keep an open mind. Always double check you aren't missing out on any more potential points of disagreement. Work through this in a careful detailed chain-of-thought that irons out every small detail.. You are NOT to judge which side is correct, for now.

Structure your responses into the following sections. Each section must be a lengthy, detailed, well-structured investigation.
\begin{Verbatim}[breaklines,breaksymbolleft=]
# Crystallize Approaches
If the two sets of solutions differ in approach, it can be hard to judge which side is correct about the final answer since you cannot easily compare the solutions of each. However, even if you cannot directly compare the steps, you can try to find "intermediate" points of disagreement. That is, as part of their respective arguments, two solutions that use completely different approaches may make contradicting subclaims that you can investigate in more detail. These disagreements may point you to where one approach has a "mistake" in its proof. Your end goal: identify the meaningful differences (if any) in the approaches taken by each side and speculate as to why these differences resulted in them reaching different final answers. Remember: you must be humble and careful in your judgements. Just because you think one side is correct doesn't mean that your assessment is sound. Always keep an open mind. Work through this in a careful detailed chain-of-thought that irons out every small detail..
# Crystallize Disagreements
For each disagreement, you must now collect evidence from both sides. You are a careful mathematically rigorous agent with good common-sense reasoning. Try to use your common sense and rigour to work through each side's evidence. Work through this in a careful detailed chain-of-thought that irons out every small detail..
\end{Verbatim}
You must provide your full response. Ignore character limits. Do not cut your response short. Obey my instructions exactly.
\end{tcolorbox}

\begin{tcolorbox}[breakable,title=Comparison Prompt 5]
You must now determine which of the two solutions reaches the correct final answer. Do not leap to judgement.
First, carefully consider the sources of the solutions disagreement over the final answer.
Then, for each of these, almost axiomatic, root disagreements:
1) Carefully consider the arguments that both sides make and the evidence that both sides present. For each side of each disagreement, try to *steelman* their argument: write it in a clear, detailed convincing fashion that leaves nothing unspoken and uninvestigated.
2) Use your own common sense and mathematical skills to try and rule on which side is right. Do not leap to judgement. Be careful, rigorous, meticulous and don't forget common sense.
Finally, carefully reason through what the correct solution should be and which side reached the correct final answer. Be meticulous.
Structure your response as follows. This must be more detailed and rigorous than all your previous responses, since this is a careful process.
\begin{Verbatim}[breaklines,breaksymbolleft=]
# Recall Key Disagreements
...
# Disagreement 1
## Steelmanning Each Side
...
## Ruling On A Side
...
# Which Side Is Right?
...
\end{Verbatim}

You must provide your full response. Ignore character limits. Do not cut your response short. Obey my instructions exactly.
\end{tcolorbox}

\section{Examples of Benchmark Entries}
\label{app:examplebenchmark}
\subsection{Example Entry from LiveBench Reasoning}
\vspace{0.4cm}
\begin{tcolorbox}[title=Question, breakable]
In this question, assume each person either always tells the truth or always lies. The person at the campground thinks their friend is lying. Mateo is at the aquarium. The person at the restaurant says the person at the hotel lies. Farid is at the movie theater. The person at the movie theater says the person at the campground lies. Ryan is at the shopping mall. The person at the cafe says the person at the campground lies. The person at the observatory says the person at the museum lies. The person at the museum says the person at the restaurant tells the truth. The person at the campground says the person at the airport tells the truth. The person at the campground tells the truth. Luna is at the restaurant. The person at the movie theater saw a firetruck. Isabella is at the observatory. The person at the amusement park says the person at the museum tells the truth. The person at the restaurant says the person at the shopping mall tells the truth. Liam is at the beach. Ayaan is at the airport. The person at the restaurant says the person at the beach tells the truth. Maya is at the amusement park. The person at the restaurant says the person at the aquarium lies. The person at the restaurant says the person at the movie theater lies. The person at the library says the person at the campground tells the truth. Elowen is at the hotel. Priya is at the library. The person at the gym says the person at the restaurant tells the truth. Kehinde is at the cafe. Anika is at the museum. Quan is at the campground. Soren is at the gym. Does the person at the museum tell the truth? Does the person at the restaurant tell the truth? Does the person at the movie theater tell the truth? Think step by step, and then put your answer in **bold** as a list of three words, yes or no (for example, **yes, no, yes**). If you don't know, guess.
\end{tcolorbox}

\definecolor{mydarkgreen}{rgb}{0.0, 0.5, 0.0} %
\begin{paracol}{2}
\switchcolumn[0]
\begin{tcolorbox}[breakable, colframe=mydarkgreen!70, colback=mydarkgreen!10, title=Correct response]
Here's how we can break down this logic puzzle:

1. **Quan is at the campground and tells the truth.** This is a given fact.

2. **Quan (at the campground) says the person at the airport tells the truth.** Since Quan tells the truth, the person at the airport (Ayaan) tells the truth.

3. **The person at the movie theater (Farid) says the person at the campground (Quan) lies.** Since Quan tells the truth, Farid is lying.

4. **The person at the restaurant (Luna) says the person at the hotel (Elowen) lies.**  We don't know yet if Luna is lying or telling the truth.

5. **The person at the museum (Anika) says the person at the restaurant (Luna) tells the truth.** We still don't know if either of them are truthful.

6. **The person at the restaurant (Luna) says the person at the shopping mall (Ryan) tells the truth.** We still don't know Luna's truthfulness.

7. **The person at the restaurant (Luna) says the person at the beach (Liam) tells the truth.** Luna's truthfulness remains unknown.

8. **The person at the restaurant (Luna) says the person at the aquarium (Mateo) lies.**  We still don't know about Luna.

9. **The person at the restaurant (Luna) says the person at the movie theater (Farid) lies.**  We know Farid lies, so Luna is telling the truth in *this* statement.

10. **Since Luna tells the truth about Farid, Luna *always* tells the truth.**  This unlocks several other pieces of information.

11. **Luna (at the restaurant) says the person at the hotel (Elowen) lies.** Since Luna tells the truth, Elowen lies.

12. **Anika (at the museum) says the person at the restaurant (Luna) tells the truth.**  Since Luna tells the truth, Anika is also telling the truth.

13. **The person at the cafe (Kehinde) says the person at the campground lies.** Since Quan tells the truth, Kehinde is lying.

Therefore:

* **Anika (museum):** Tells the truth.
* **Luna (restaurant):** Tells the truth.
* **Farid (movie theater):** Lies.

**yes, yes, no**
\end{tcolorbox}

\switchcolumn
\begin{tcolorbox}[breakable, colframe=red!70, colback=red!10, title=Incorrect Response]
Here's how we can solve this logic puzzle:

1. **Quan is at the campground and tells the truth.** This is given.

2. **The person at the airport (Ayaan) tells the truth.** Quan (campground) says the person at the airport tells the truth, and Quan tells the truth.

3. **Anika (museum) lies.**  The person at the observatory (Isabella) says the person at the museum lies.  Since we don't know yet whether Isabella tells the truth, this doesn't tell us anything about Anika.  We'll have to come back to this.

4. **Luna (restaurant) lies.**  The person at the restaurant says the person at the hotel (Elowen) lies.  Later, the person at the restaurant says the person at the campground (Quan) lies.  Since Quan tells the truth, Luna (restaurant) must be lying.

5. **Anika (museum) lies.** Because Luna lies, and Luna says that the person at the hotel lies, Elowen tells the truth. Because Anika says the person at the restaurant (Luna) tells the truth, and Luna actually lies, Anika (Museum) must lie.

6. **Farid (movie theater) lies.** The person at the movie theater says the person at the campground lies. Since Quan (campground) tells the truth, Farid (movie theater) must be lying.

Therefore:

* **Anika (museum): lies**
* **Luna (restaurant): lies**
* **Farid (movie theater): lies**

**Answer: no, no, no** 
\end{tcolorbox}

\end{paracol}

\subsection{Example Entry from LiveBench Math}
\vspace{0.4cm}
\begin{tcolorbox}[title=Question, breakable]
How many positive perfect squares less than $2023$ are divisible by $5$? $\textbf{(A) }11\qquad\textbf{(B) }10\qquad\textbf{(C) }9\qquad\textbf{(D) }12\qquad\textbf{(E) }8$ If you cannot determine the correct multiple-choice answer, take your best guess. Once you have your answer, please duplicate that letter five times in a single string. For example, if the answer is F, then write FFFFF.
\end{tcolorbox}
\definecolor{mydarkgreen}{rgb}{0.0, 0.5, 0.0} %

\begin{paracol}{2}

\switchcolumn[0]
\begin{tcolorbox}[breakable, colframe=mydarkgreen!70, colback=mydarkgreen!10, title=Correct response]
Let $n$ be a positive perfect square divisible by $5$.
Then $n = k^2$ for some positive integer $k$, and $5 | n$, which implies that $5|k^2$.
Since $5$ is a prime number, $5|k^2$ means $5|k$.
Thus, $k$ must be a multiple of $5$, i.e., $k=5m$ for some positive integer $m$.
Then $n = k^2 = (5m)^2 = 25m^2$.
Thus $n$ must be a multiple of $25$.
We are looking for the number of positive perfect squares less than $2023$ that are divisible by $5$.
The perfect squares that are divisible by $5$ must be multiples of $25$.
We seek the number of multiples of $25$ that are perfect squares less than $2023$.
Since $n=25m^2 < 2023$, we have $m^2 < \frac{2023}{25} \approx 80.92$.
Thus, we are looking for perfect squares $m^2$ less than $80.92$.
The perfect squares are $1, 4, 9, 16, 25, 36, 49, 64, 81, \dots$.
The perfect squares less than $80.92$ are $m^2=1, 4, 9, 16, 25, 36, 49, 64$. There are $8$ such values.
The corresponding values of $n = 25m^2$ are
$25(1)=25$
$25(4)=100$
$25(9)=225$
$25(16)=400$
$25(25)=625$
$25(36)=900$
$25(49)=1225$
$25(64)=1600$
The next multiple of $25$ that is a perfect square would be $25(81)=2025$, which is not less than $2023$.
So, there are $8$ perfect squares less than $2023$ that are divisible by $5$.

Final Answer: The final answer is $\boxed{EEEEE}$
\end{tcolorbox}

\switchcolumn
\begin{tcolorbox}[breakable, colframe=red!70, colback=red!10, title=Incorrect Response]
Let $n$ be a positive integer such that $n^2$ is divisible by $5$.
Then $n^2$ is divisible by $5$. Since $5$ is a prime number, $n$ must also be divisible by $5$.
Thus $n$ can be written in the form $n = 5k$ for some integer $k$.
We want to find how many positive perfect squares $n^2$ are less than $2023$.
Since $n = 5k$, we have $n^2 = (5k)^2 = 25k^2 < 2023$.
Dividing by $25$ we get
$$k^2 < \frac{2023}{25} \approx 80.92$$
We are looking for integer values of $k$ such that $k^2 < 80.92$.
Since $k$ must be positive, $k$ can take values $1, 2, 3, \dots$. We have
\begin{itemize}
    \item $1^2 = 1 < 80.92$
    \item $2^2 = 4 < 80.92$
    \item $3^2 = 9 < 80.92$
    \item $4^2 = 16 < 80.92$
    \item $5^2 = 25 < 80.92$
    \item $6^2 = 36 < 80.92$
    \item $7^2 = 49 < 80.92$
    \item $8^2 = 64 < 80.92$
    \item $9^2 = 81 \nless 80.92$
\end{itemize}
Thus $k$ can take values from $1$ to $8$. The corresponding $n$ values are $n = 5k = 5, 10, 15, 20, 25, 30, 35, 40$. The perfect squares less than $2023$ and divisible by $5$ are $5^2, 10^2, 15^2, 20^2, 25^2, 30^2, 35^2, 40^2$ or $25, 100, 225, 400, 625, 900, 1225, 1600$.
There are $8$ such numbers.

Final Answer: The final answer is $\boxed{C}$CCCC
\end{tcolorbox}

\end{paracol}

\subsection{Example Entry from MATH}
\vspace{0.4cm}
\begin{tcolorbox}[title=Question, breakable]
If 4 daps = 7 yaps, and 5 yaps = 3 baps, how many daps equal 42 baps?
\end{tcolorbox}
\definecolor{mydarkgreen}{rgb}{0.0, 0.5, 0.0} %

\begin{paracol}{2}

\switchcolumn[0]
\begin{tcolorbox}[breakable, colframe=mydarkgreen!70, colback=mydarkgreen!10, title=Correct response]
Let 'd' represent daps, 'y' represent yaps, and 'b' represent baps.
We are given:
4 daps = 7 yaps, which can be written as:
4d = 7y

and

5 yaps = 3 baps, which can be written as:
5y = 3b

We want to find the number of daps that are equal to 42 baps. Let 'x' be the number of daps. We want to find x such that:
xd = 42b

From 5y = 3b, we can express yaps in terms of baps:
y = (3/5)b

Now substitute this value of y into the first equation, 4d = 7y:
4d = 7 * (3/5)b
4d = (21/5)b

We want to find xd = 42b, so we can set:
xd = 42b

We have 4d = (21/5)b, so we can solve for d:
d = (21/20)b

Now substitute this into xd = 42b:
x * (21/20)b = 42b

Since b is not zero, we can divide both sides by b:
x * (21/20) = 42

Now, solve for x:
x = 42 * (20/21)
x = (42/21) * 20
x = 2 * 20
x = 40

So, 40 daps are equal to 42 baps.

$\boxed{40}$
Final Answer: The final answer is $\boxed{40}$

\end{tcolorbox}

\switchcolumn
\begin{tcolorbox}[breakable, colframe=red!70, colback=red!10, title=Incorrect Response]
Let 'd' represent daps, 'y' represent yaps, and 'b' represent baps.
We are given two relationships:

1. 4 daps = 7 yaps, which can be written as $4d = 7y$
2. 5 yaps = 3 baps, which can be written as $5y = 3b$

We want to find how many daps are equal to 42 baps.  We want to find $d$ in terms of $b$, and then evaluate for $b=42$.

First, we can find the relationship between yaps and baps from the second equation:
$5y = 3b$, so $y = \frac{3}{5}b$

Now, substitute this expression for $y$ in terms of $b$ into the first equation:
$4d = 7y$
$4d = 7\left(\frac{3}{5}b\right)$
$4d = \frac{21}{5}b$

Now, we want to find $d$ in terms of $b$, so divide both sides by 4:
$d = \frac{21}{5 \cdot 4}b$
$d = \frac{21}{20}b$

Finally, we want to find how many daps equal 42 baps, so we substitute $b=42$:
$d = \frac{21}{20} \times 42$
$d = \frac{21 \times 42}{20}$
$d = \frac{882}{20}$
$d = \frac{441}{10}$
$d = 44.1$

So, 44.1 daps equal 42 baps.

Final Answer: The final answer is $\boxed{44.1}$

\end{tcolorbox}

\end{paracol}

\subsection{Example of Response with Invalid Reasoning but Correct Final Answer}

\vspace{0.4cm}
\begin{tcolorbox}[title=Question, breakable]
You have seven bags of gold coins. Each bag has the same number of gold coins. One day, you find a bag of 53 coins. You decide to redistribute the number of coins you have so that all eight bags you hold have the same number of coins. You successfully manage to redistribute all the coins, and you also note that you have more than 200 coins. What is the smallest number of coins you could have had before finding the bag of 53 coins?
\end{tcolorbox}
\definecolor{mydarkgreen}{rgb}{0.0, 0.5, 0.0} %

\begin{tcolorbox}[breakable, colframe=red!70, colback=red!10, title=Incorrect Response Reaching Correct Final Answer]
Let $n$ be the number of coins in each of the seven bags. After finding the bag of 53 coins, you have a total of $7n + 53$ coins.
You redistribute these coins into 8 bags such that each bag contains the same number of coins.
Therefore, the total number of coins is divisible by 8, so $7n+53$ is divisible by 8.
$7n + 53 \equiv 0 \pmod{8}$.
We want to find $n$.
$7n \equiv -53 \pmod{8}$.
Since $-53 \equiv -53 + 8(7) \equiv -53 + 56 \equiv 3 \pmod{8}$, we have
$7n \equiv 3 \pmod{8}$.

Since $7 \equiv -1 \pmod{8}$, we multiply both sides by $-1$ to get
$-n \equiv -7n \equiv -3 \pmod{8}$, so $n \equiv -3 \equiv 5 \pmod{8}$.
Thus $n$ must be of the form $8k + 5$ for some integer $k \geq 0$.
The total number of coins originally was $7n$. Since $n=8k+5$, the total number of coins originally is
$7(8k+5) = 56k + 35$.
We are given that the total number of coins is greater than 200.
So $7n > 200$, or $n > \frac{200}{7} \approx 28.57$.
Since $n = 8k + 5$, we want the smallest integer $k$ such that $8k + 5 > \frac{200}{7}$.
For $k=0$, $n=5$, so $7n=35<200$.
For $k=1$, $n=13$, so $7n=91<200$.
For $k=2$, $n=21$, so $7n=147<200$.
For $k=3$, $n=29$, so $7n=203>200$.

We check that $7n + 53 = 203 + 53 = 256$ which is divisible by $8$ ($256 = 8 \times 32$).
Then $256/8 = 32$ coins in each bag.

Thus the smallest number of coins you could have had before finding the bag is $7n = 7(29) = 203$.

Final Answer: The final answer is $\boxed{203}$.
\end{tcolorbox}

\section{Examples of Rewritten Responses}
\label{app:rewrite}

\subsection{AIME Example}
\begin{tcolorbox}[title=Question, breakable]

Find the number of rectangles that can be formed inside a fixed regular dodecagon ( $12$ -gon) where each side of the rectangle lies on either a side or a diagonal of the dodecagon. The diagram below shows three of those rectangles. [asy] unitsize(0.6 inch); for(int i=0; i<360; i+=30) { dot(dir(i), 4+black); draw(dir(i)--dir(i+30)); } draw(dir(120)--dir(330)); filldraw(dir(210)--dir(240)--dir(30)--dir(60)--cycle, mediumgray, linewidth(1.5)); draw((0,0.366)--(0.366,0), linewidth(1.5)); [/asy]

\end{tcolorbox}
\begin{tcolorbox}[title=Correct Answer, breakable]

The solution is $\boxed{315}$.

\end{tcolorbox}
\begin{tcolorbox}[title=Candidate Response, breakable]

Let the vertices of the regular dodecagon be labeled as $V_1, V_2, \dots, V_{12}$.
Each side of the rectangle lies on either a side or a diagonal of the dodecagon.

Consider two parallel diagonals (or sides). They determine the sides of the rectangle parallel to them.
We need to choose two parallel diagonals (or sides) to form a rectangle.

There are 12 vertices, and a diagonal can be formed by choosing any two distinct vertices. The number of diagonals and sides is $\binom{12}{2} = \frac{12 \times 11}{2} = 66$. However, there are 12 sides. So there are $66 - 12 = 54$ diagonals.
Two parallel diagonals or sides can be drawn as follows:
1. Diagonals parallel to sides: $12$ sides so $12/2 = 6$ pairs
2. Diagonals spanning $2$ vertices: $\binom{12}{4}/2$ sets of 4 vertices, so we divide by 2 to get distinct pairs. $\binom{12}{4}/2 = \frac{12 \cdot 11 \cdot 10 \cdot 9}{4 \cdot 3 \cdot 2 \cdot 1 \cdot 2} = 495/2$, which is not an integer. Something is wrong.
The pairs of diagonals can only be separated by $1,2,3,4,5$ vertices on either side. Let $n$ be the number of vertices. For $n=12$,
1 vertex apart: 12 diagonals. There are $12/2=6$ pairs.
2 vertices apart: 12 diagonals. $12/2 = 6$ pairs
3 vertices apart: 12 diagonals. $12/2=6$ pairs.
4 vertices apart: 12 diagonals. $12/2=6$ pairs.
5 vertices apart: 12 diagonals. $12/2=6$ pairs.
So there are $5 \times 6 + 6$ pairs of parallel diagonals, which is 36 pairs.
Number of rectangles is the number of pairs of parallel segments, including sides. For a regular $n$-gon, the number of pairs of parallel diagonals or sides is $n\lfloor (n-1)/2 \rfloor$. For $n=12$, it's $12(5) = 60$ segments. For each length, there are 12 diagonals. Length can range from 1 side to 5 sides. $1+2+3+4+5+6= \binom{12}{4} = 495$.

Consider $12$ sides as $12$ possible segments. Parallel to each are 5 lengths of segments. So total parallel segments $12 \cdot 5 + 12/2 = 60+6=66$, incorrect.
Each vertex pairs with 5 other vertices giving the same direction. So we have $12 \times 5 / 2 = 30$ pairs of directions.
Given one pair of parallel directions, we can choose $6$ segments in that direction. So $6^2 = 36$ possibilities. $30 \times 36 = 1080$.
We are forming $\binom{12}{4} = 495$ parallelograms. However, we are considering parallel segments.
The number of rectangles is $12 \times 5 \times (12/2)$.

Number of rectangles is $\frac{12^2(12-1)}{4} \frac{1}{2} \frac{1}{2} = 594$ for each parallelogram $594/3 = 198$. $\boxed{198}$.

Final Answer: The final answer is $\boxed{198}$

\end{tcolorbox}
\begin{tcolorbox}[title=Rewritten Candidate Response, breakable]

\begin{theorem}[Main Claim]
The number of rectangles that can be formed inside a fixed regular dodecagon, where each side of the rectangle lies on either a side or a diagonal of the dodecagon, is $\boxed{198}$.
\end{theorem}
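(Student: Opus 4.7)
The plan is to count rectangles by grouping them according to the pair of perpendicular directions their sides take. Label the vertices $V_0, \ldots, V_{11}$ on a circle at angles $30k^\circ$. Each chord $V_iV_j$ has a direction determined by $(i+j) \bmod 12$, so parallel chords fall into $12$ direction classes, and two classes $s, s'$ are perpendicular iff $s' \equiv s+6 \pmod{12}$. A short parity count shows that even classes contain $5$ chords and odd classes contain $6$ chords, giving $6 \cdot 5 + 6 \cdot 6 = 66 = \binom{12}{2}$ chords in total. This partitions the $12$ classes into $6$ pairs of perpendicular directions: three ``even--even'' pairs and three ``odd--odd'' pairs.

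For any rectangle whose sides lie on chords, the two pairs of parallel sides must fall in some perpendicular pair $(s, s+6)$. The naive upper bound---choose any two chords from class $s$ and any two from class $s+6$---gives $3 \cdot \binom{5}{2}^2 + 3 \cdot \binom{6}{2}^2 = 300 + 675 = 975$ candidate configurations. The main obstacle is that this is a gross overestimate: four pairwise-parallel lines always determine a rectangle in the plane, but the problem demands that each side of that rectangle lie \emph{entirely within} the chord segment it sits on, not merely on the chord's extended line. The heart of the proof is therefore a case analysis that picks out which quadruples of chords intersect in a well-positioned way, so that all four intersection points land inside their respective chord segments.

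Concretely, I would parameterize chords in each direction class by their signed perpendicular distance from the center $O$ and derive explicit inequalities that the four chord--chord intersection points must satisfy to be interior to the relevant segments. By the dodecagon's $6$-fold rotational symmetry, it suffices to perform the analysis for one representative each of an even--even and an odd--odd perpendicular pair and multiply those subtotals by $3$. A useful sanity check is the subcount of rectangles whose four vertices are themselves dodecagon vertices (inscribed rectangles), obtained by picking two antipodal pairs; this gives an easy lower bound against which the full count must agree.

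A concern I would flag before starting is that $315$ is the published AIME answer for this problem, so the theorem's claim of $198$ is almost certainly wrong. I would still carry out the enumeration honestly: if my count reproduces $315$, the theorem is false and the candidate solution contains a combinatorial error; if I somehow land on $198$, I would reread the phrase ``side lies on a side or diagonal'' to see whether a nonstandard containment convention is being used. In neither case should any conclusion rest on the candidate's ad hoc manipulations of $\binom{12}{4}$, $12 \cdot 5$, and their division by $2$ or $3$, since in the rewritten proof these arithmetic steps are never anchored to a precise definition of the set being counted.
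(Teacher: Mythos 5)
Your instinct about the statement itself is exactly right, and it is the most important thing to say: the claim of $198$ is false. The appendix this ``theorem'' sits in records the correct answer to the AIME problem as $315$, and the theorem is nothing more than the verifier's faithful, steelmanned rewriting of an \emph{incorrect} candidate response. Consequently the paper's ``proof'' is not a mathematical argument to be matched: it strings the candidate's unanchored manipulations (the $\binom{12}{2}=66$ chord count, an undefined notion of ``parallel pairs,'' the expression $\frac{12^2(12-1)}{4}\cdot\frac12\cdot\frac12$, and a final division by $3$) into a theorem--lemma scaffold, and one of its own lemmas even observes that the quoted expression evaluates to $99$ rather than $594$, so the chain never validly reaches $198$. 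Your refusal to lean on the candidate's ad hoc arithmetic, and your flag that any honest count should land on $315$, is the correct reading of the situation.

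That said, judged as a proof your proposal has a genuine gap: it is a plan, not an argument. The load-bearing step --- the case analysis determining which quadruples of chords from a perpendicular pair of direction classes have all four intersection points lying inside the relevant chord segments --- is only announced, never carried out, and no number (neither $198$ nor $315$) is actually derived; the symmetry reduction and the inscribed-rectangle sanity check are likewise left unexecuted. The bookkeeping you do set up is sound and is the right starting point for a genuine count: the twelve direction classes indexed by $(i+j)\bmod 12$, five chords in each even class and six in each odd class (total $66$, including the dodecagon's sides), perpendicularity as a shift by $6$, and the resulting crude bound $3\binom{5}{2}^2+3\binom{6}{2}^2=975$ are all correct. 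But until the containment inequalities are worked through for representative even--even and odd--odd pairs and the subtotals assembled, nothing is proved. In short: the stated theorem is false, the paper's ``proof'' is (by design) a transcription of a flawed solution rather than a proof, and your outline, while pointed in the right direction and correctly skeptical, stops before the decisive enumeration.
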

\begin{proof}
By Lemma~\ref{lem:num_sides}, a regular dodecagon has 12 sides.
By Lemma~\ref{lem:num_diagonals}, a regular dodecagon has $\binom{12}{2} - 12 = \frac{12 \times 11}{2} - 12 = 66 - 12 = 54$ diagonals.
By Lemma~\ref{lem:parallel_segments}, two line segments formed by connecting vertices of a dodecagon are considered parallel under certain conditions.

The candidate solution calculates a value of 594.  By Lemma~\ref{lem:mystery_594}, this value is obtained by calculating $\frac{12^2(12-1)}{4} \frac{1}{2} \frac{1}{2}$.

The exact connection between the number of sides, the number of diagonals, the concept of parallel segments, and the value 594 is not clear from the candidate solution.  However, according to Lemma~\ref{lem:division_by_3}, the final answer of 198 is obtained by dividing 594 by 3. Therefore, we perform this division: $594 / 3 = 198$.

Thus, the candidate solution concludes that the number of rectangles is 198.
\end{proof}

\begin{lemma}
\label{lem:num_sides}
A regular $n$-gon has $n$ sides.
\end{lemma}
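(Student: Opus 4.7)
The plan is to treat this lemma as essentially definitional, since the designation ``$n$-gon'' refers, by standard convention, to a polygon with $n$ sides; the qualifier ``regular'' merely adds the conditions that all sides have equal length and all interior angles are equal, without altering the side count. So the proof will consist of unpacking the definition rather than any substantive geometric argument.

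First I would recall the definition: an $n$-gon is a closed planar figure bounded by $n$ straight line segments (its sides), joined endpoint-to-endpoint to form a simple closed polygonal chain with $n$ vertices. A regular $n$-gon specializes this by requiring equal side lengths and equal interior angles, but the cardinality of the side set is fixed by the name.

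The only potential obstacle is linguistic rather than mathematical: one must confirm that the term ``$n$-gon'' is being used in its standard sense (so that, e.g., ``dodecagon'' = ``$12$-gon'' = polygon with $12$ sides, matching the usage in the question statement that a dodecagon is a ``$12$-gon''). Once this is granted, the claim is immediate: by definition the number of sides equals $n$, so a regular $12$-gon has $12$ sides, which is the instance used in the main proof above. The lemma then concludes, $\square$-style, with a single appeal to the definition, and no further computation is needed.
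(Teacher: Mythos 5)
Your proof is correct and follows the same route the paper takes: both treat the lemma as an unpacking of the definition of an $n$-gon (a polygon with $n$ sides) and observe that the ``regular'' qualifier constrains side lengths and angles but not the side count. The paper's version is terser; yours adds the clarifying (but inessential) remark about the naming convention, with no substantive difference in approach.
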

\begin{proof}
An $n$-gon is defined as a polygon with $n$ sides. A regular $n$-gon is also a polygon with $n$ sides.  Therefore, a regular $n$-gon has $n$ sides.
\end{proof}

\begin{lemma}
\label{lem:num_diagonals}
A regular $n$-gon has $\binom{n}{2} - n$ diagonals.
\end{lemma}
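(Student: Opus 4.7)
The plan is to count the total number of line segments joining pairs of distinct vertices of a regular $n$-gon, then subtract those that coincide with sides of the polygon. First I would invoke the fact, consistent with Lemma~\ref{lem:num_sides}, that a regular $n$-gon has $n$ vertices, and note that every unordered pair of distinct vertices determines a unique chord of the polygon. The number of such pairs is $\binom{n}{2}$, giving $\binom{n}{2}$ chords in total.

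Next I would partition these chords into two classes: sides, which join consecutive vertices along the boundary, and diagonals, which join non-consecutive vertices. Since the $n$ vertices are arranged cyclically as $V_1, V_2, \dots, V_n$, the pairs of consecutive vertices are exactly $\{V_1,V_2\}, \{V_2,V_3\}, \dots, \{V_{n-1},V_n\}, \{V_n,V_1\}$, a total of $n$ pairs, so there are $n$ sides. Subtracting yields $\binom{n}{2} - n$ diagonals, which is the desired formula.

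The main (and essentially only) obstacle is definitional: one must adopt the convention that a diagonal is any chord joining two non-adjacent vertices, rather than some more restrictive notion. Under that convention the argument reduces to a single subtraction and no further calculation is needed.
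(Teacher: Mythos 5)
Your proposal is correct and follows essentially the same route as the paper's proof: count all $\binom{n}{2}$ chords determined by pairs of vertices, observe that exactly $n$ of them are sides joining consecutive vertices, and subtract. The only cosmetic difference is that you make the cyclic labeling of consecutive pairs explicit, which the paper leaves implicit.
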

\begin{proof}
A regular $n$-gon has $n$ vertices.  Any two distinct vertices can be connected by a line segment. The number of ways to choose 2 vertices out of $n$ is given by the combination formula $\binom{n}{2} = \frac{n(n-1)}{2}$.

The sides of the $n$-gon are formed by connecting adjacent vertices. There are $n$ such pairs of adjacent vertices, corresponding to the $n$ sides of the $n$-gon.  

A diagonal is a line segment connecting two non-adjacent vertices.  Therefore, the number of diagonals is the total number of pairs of vertices minus the number of sides: $\binom{n}{2} - n = \frac{n(n-1)}{2} - n$.
\end{proof}

\begin{lemma}
\label{lem:parallel_segments}
Let $D$ be a regular dodecagon. Two line segments formed by connecting vertices of $D$ are considered parallel if they are either coincident sides of $D$, or if they are distinct and are both parallel to some side of $D$. 
\end{lemma}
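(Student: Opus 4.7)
The plan is to treat this lemma as a characterization of parallelism for vertex-to-vertex chords in $D$, and to verify it by placing the dodecagon in coordinates and reducing the question to an invariant modulo $12$. I would fix $V_k = (\cos(\pi k/6), \sin(\pi k/6))$ for $k = 0, 1, \ldots, 11$. The reflexive half of the claim---that a side of $D$ is parallel to itself---is immediate, so the substantive content concerns distinct chords that are parallel to a common side.

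First I would compute the direction of an arbitrary chord $V_iV_j$. Sum-to-product applied to the coordinate differences expresses $V_j - V_i$ as a nonzero real scalar multiple of the unit vector $(-\sin(\pi(i+j)/12), \cos(\pi(i+j)/12))$, so the unoriented direction of the chord depends only on $s := (i+j) \bmod 12$. This partitions the $66$ vertex-to-vertex segments into $12$ parallel classes indexed by $s$, and transitivity of parallelism lets me replace the clause ``both parallel to some side of $D$'' by the invariant ``$s$ equal to that of a side.''

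Second I would match these residue classes to the sides of $D$. A side $V_kV_{k+1}$ has $s = 2k+1$, so the six odd residues $s \in \{1,3,5,7,9,11\}$ are exactly the parallel classes that contain sides, each such class pairing two opposite sides with any diagonals sharing the same residue. From this matching, both directions of the lemma follow for any pair of chords whose parallel class contains a side, and the proof of that restricted statement is little more than a trigonometric check plus a residue count.

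The main obstacle is that the lemma as stated is not actually a complete description of parallelism in $D$: the six even residues $s \in \{0,2,4,6,8,10\}$ produce parallel classes consisting entirely of diagonals, so pairs such as $V_0V_2$ and $V_1V_3$ are parallel without being parallel to any side. A faithful proof therefore forces a choice---either one reads ``parallel'' in the lemma as a stipulative definition (in which case there is nothing to prove beyond well-definedness of the relation on sides and their translates) or one upgrades the statement. I would propose the latter, replacing the lemma by the stronger equivalence ``$V_iV_j \parallel V_kV_l$ iff $i+j \equiv k+l \pmod{12}$'' that the downstream rectangle count actually needs, and then recovering the printed statement as the corollary restricted to the odd-residue classes.
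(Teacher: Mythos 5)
The paper does not actually prove this lemma: its proof reads, verbatim, ``This lemma is a definition, so no proof is required.'' So the paper simply commits to the first of the two options you lay out at the end---reading the statement as a stipulative definition---and discharges it in one line. Your coordinate-and-residue analysis is a genuinely different, and far more substantive, route. You establish that the unoriented direction of a chord $V_iV_j$ is governed by the invariant $(i+j)\bmod 12$, you match the sides of the dodecagon to the six odd residues, and you correctly observe that the six even residues yield parallel classes containing no side at all---so, read as a characterization of chord parallelism, the lemma is incomplete. That observation is the kind of thing that matters downstream: the candidate solution this lemma was extracted from reaches $198$ while the true answer to the AIME problem is $315$, and omitting the even-residue directions certainly loses rectangles. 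For context, this lemma sits inside Appendix~\ref{app:rewrite} as an illustration of a model's theorem-lemma-proof rewrite of a flawed solution; the paper is exhibiting what the verifier emitted, not defending a mathematical claim, which is why the printed ``proof'' is content-free.

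One small correction to your write-up: $V_0V_2$ and $V_1V_3$ are not parallel. By your own invariant they have residues $s=2$ and $s=4$; their directions, proportional to $(-\sin(\pi/6),\cos(\pi/6))$ and $(-\sin(\pi/3),\cos(\pi/3))$, are distinct. A correct witness for an even-residue, diagonal-only parallel class would be $V_0V_2$ together with $V_{11}V_3$ or $V_5V_9$, all with $s\equiv 2\pmod{12}$. The general claim stands; only the named pair needs replacing.
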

\begin{proof}
This lemma is a definition, so no proof is required.
\end{proof}

\begin{lemma}
\label{lem:num_parallel_pairs}
Let $D$ be a regular dodecagon.  The number of unordered pairs of parallel segments formed by connecting vertices of $D$ (including sides) is denoted by $P$.
\end{lemma}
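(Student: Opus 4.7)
The stated lemma is purely definitional---it introduces the counter $P$ without asserting a value---so the substantive proof proposal is to compute $P$ explicitly. The plan is as follows.

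I would first label the vertices of $D$ by residues $0, 1, \ldots, 11$ in cyclic order, so that each of the $\binom{12}{2} = 66$ segments corresponds to an unordered pair $\{a, b\}$ with $a \neq b$ in $\mathbb{Z}/12\mathbb{Z}$. Next I would establish the direction criterion: two chords $\{a, b\}$ and $\{c, d\}$ are parallel if and only if $a + b \equiv c + d \pmod{12}$. This follows from a short trigonometric computation---the sum-to-product identities give $v_b - v_a = 2\sin(\pi(b-a)/12)\bigl(-\sin(\pi(a+b)/12),\,\cos(\pi(a+b)/12)\bigr)$, so the direction of the chord depends only on $(a+b) \bmod 12$ (up to sign).

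With this in hand, the $66$ segments partition into twelve equivalence classes $C_s = \{\{a,b\} : a+b \equiv s \pmod{12}\}$. A short case split on the parity of $s$ finishes the count: if $s$ is odd, the congruence $2a \equiv s \pmod{12}$ has no solution, so every residue $a$ pairs with a distinct partner $s-a$ and $|C_s| = 6$; if $s$ is even, the two solutions $a = s/2$ and $a = s/2 + 6$ of $2a \equiv s \pmod{12}$ must be discarded, leaving $|C_s| = 5$. The identity $6 \cdot 6 + 6 \cdot 5 = 66$ provides a sanity check on the partition.

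The number of unordered parallel pairs is then
\[
P \;=\; \sum_{s=0}^{11} \binom{|C_s|}{2} \;=\; 6\binom{6}{2} + 6\binom{5}{2} \;=\; 90 + 60 \;=\; 150.
\]
The main technical obstacle I foresee is the even-$s$ case: one must carefully verify that the two collapsing residues $s/2$ and $s/2 + 6$ are genuinely distinct modulo $12$ (which they are, since $12$ is even and $6 \not\equiv 0$), and that no diameter is double-counted when enumerating $C_s$. Once this bookkeeping is pinned down, everything else reduces to routine arithmetic; note also that the value $P = 150$ does not directly match the candidate solution's invocation of $P$, which suggests that the candidate's use of this lemma downstream may itself be where the eventual numerical answer goes astray.
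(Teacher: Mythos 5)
Your proposal is correct, but it does substantially more than the paper's own proof, which---in keeping with the fact that this ``lemma'' is part of a model-rewritten candidate solution displayed as an appendix example---consists of the single observation that the statement is a definition introducing the symbol $P$, so no proof is required. You make the same observation and then go on to actually compute $P$: the direction criterion $a+b \equiv c+d \pmod{12}$ is right, the parallel classes do have sizes $6$ (odd $s$) and $5$ (even $s$), the check $6\cdot 6 + 6\cdot 5 = 66$ is sound, and $P = 6\binom{6}{2} + 6\binom{5}{2} = 150$ correctly counts unordered pairs of distinct parallel chords (no three vertices of a regular $12$-gon are collinear, so distinct chords never share a line and there is no further double-counting issue). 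What each approach buys: the paper's one-line proof is faithful to the literal statement and to the steelmanning role this rewritten solution plays---indeed the lemma is never cited in the rewritten main proof, which instead leans on the erroneous ``$594$'' computation and the division-by-$3$ step---whereas your computation proves strictly more than the statement asserts and is exactly the kind of concrete count one would use to diagnose why the candidate's final answer of $198$ (versus the true $315$) goes astray, as you note in your closing remark.
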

\begin{proof}
This lemma is a definition, introducing the variable $P$ to represent the number of unordered pairs of parallel segments. No proof is required.
\end{proof}

\begin{lemma}
\label{lem:mystery_594}
The value 594 is obtained via the calculation $\frac{12^2(12-1)}{4} \frac{1}{2} \frac{1}{2}$.
\end{lemma}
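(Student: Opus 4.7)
The statement is a purely arithmetic claim, so the plan is a direct evaluation of the given expression followed by a comparison with the asserted value $594$. First I would compute the numerator $12^2 \cdot (12-1) = 144 \cdot 11 = 1584$, then divide by the denominator $4$ to obtain $396$, and finally apply the two factors of $\frac{1}{2}$ to get $\frac{396}{4} = 99$. No subtlety of parsing or order of operations is involved here, so in the best case the proof is a single line of basic arithmetic.

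The main obstacle is that this straightforward evaluation does not yield $594$; it yields $99$. Following the steelmanning convention of the rewriting protocol, I would try several alternative readings of the expression before giving up. Interpreting the two fractional factors as factors of $2$ instead gives $1584$; dropping the denominator $4$ altogether gives $396$; collapsing the three fractional pieces into a single denominator $16$ recovers $99$ again. Since $594 = 2 \cdot 3^3 \cdot 11$, I would also check whether any simple reshuffling of the symbols $\{12, 12, 11, 4, 2, 2\}$ under the basic operations can produce $594$ --- for instance $12 \cdot 11 \cdot 9 / 2 = 594$, but the factor $9$ is not present in the given expression and I see no natural way to recover it from the listed factors.

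Because no parsing I can find reproduces the claimed value, my plan is to record this step as unverifiable as stated rather than fabricate a derivation. Concretely, I would mark the equation $\frac{12^2(12-1)}{4} \cdot \frac{1}{2} \cdot \frac{1}{2} = 594$ as a suspicious arithmetic claim, flag it as an ``Assumption (to be revisited later)'' in the rewrite, and escalate the discrepancy during the error-analysis stage, where its downstream impact on the candidate's final tally of $198$ rectangles should be reassessed. The expected outcome is that this lemma is precisely the point at which the candidate solution's reasoning becomes unsound, and so the ``proof'' should terminate in the identification of a fatal arithmetic error rather than in a verification.
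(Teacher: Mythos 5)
Your proposal is correct and matches the paper's own treatment: both evaluate the expression $\frac{12^2(12-1)}{4}\cdot\frac{1}{2}\cdot\frac{1}{2}$ step by step ($144\cdot 11 = 1584$, $1584/4 = 396$, $396/2 = 198$, $198/2 = 99$), conclude it equals $99$ rather than $594$, and flag the lemma as a calculation error in the candidate solution rather than attempting to verify the false equality. Your write-up is actually cleaner than the paper's, whose intermediate chain of equalities is itself slightly garbled (e.g.\ writing $\frac{198}{2} = 99\times\frac{1}{2}\times\frac{1}{2}$), but the conclusion and the approach are the same.
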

\begin{proof}
We evaluate the expression:
$\frac{12^2(12-1)}{4} \frac{1}{2} \frac{1}{2} = \frac{144 \times 11}{4} \times \frac{1}{4} = \frac{1584}{16} = \frac{792}{8} = \frac{396}{4} = \frac{198}{2} = 99 \times \frac{1}{2} \times \frac{1}{2}$

Evaluating $\frac{144 \times 11}{16}$ yields $\frac{1584}{16} = 99$. However, $99 \times \frac{1}{4} = \frac{99}{4} \neq 594$. The expression provided appears to have a calculation error, giving 99 instead of 594.

The candidate solution has made a calculation error. $12^2 (12-1) = 144 * 11 = 1584$.
$1584/4 = 396$. $396/2 = 198$. $198/2 = 99$. The calculation gives 99, not 594.
\end{proof}

\begin{lemma}
\label{lem:division_by_3}
The candidate solution's final answer is obtained by dividing 594 by 3.
\end{lemma}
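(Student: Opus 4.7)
The plan is to verify the lemma by directly inspecting the candidate solution's closing lines and performing the single arithmetic check that they encode. Specifically, the candidate writes ``$594/3 = 198$'' immediately before boxing $198$ as the final answer, so the content of the lemma reduces to confirming (i) that the number $594$ appears in the candidate's text as the quantity being divided, (ii) that the divisor is $3$, and (iii) that $594 \div 3 = 198$, matching the boxed final answer.

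First I would quote the relevant fragment of the candidate response verbatim, namely the sentence ``$594$ for each parallelogram $594/3 = 198$,'' which is the only place in the candidate text where the final answer is produced. Second, I would observe that this sentence explicitly presents $198$ as the quotient of $594$ and $3$, so the lemma's claim is a direct textual reading rather than a mathematical derivation; no appeal to the earlier lemmas about dodecagon sides, diagonals, or parallel segments is required. Third, I would carry out the one-line arithmetic verification $594 = 3 \cdot 198$, for instance by noting $3 \cdot 200 = 600$ and $3 \cdot 2 = 6$, so $3 \cdot 198 = 600 - 6 = 594$, which confirms $594/3 = 198$.

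The only potential obstacle is that this lemma is a descriptive claim about the candidate's reasoning rather than a mathematical claim in its own right, so I need to be careful to present it as a faithful transcription of what the candidate wrote rather than as an endorsement of the derivation that produced $594$ in the first place. In particular, the value $594$ itself was obtained via the calculation analyzed in Lemma~\ref{lem:mystery_594}, which was shown to contain an arithmetic error; the present lemma makes no claim about the correctness of $594$, only that the candidate's final step consists of dividing it by $3$ to arrive at $198$. With that scope clarified, the proof is a one-line arithmetic verification together with a pointer to the quoted sentence.
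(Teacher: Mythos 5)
Your proposal is correct and matches the paper's proof in substance: the paper's proof is simply the one-line arithmetic check ``Dividing 594 by 3, we get $594/3 = 198$.'' Your additional framing---quoting the candidate's text and explicitly scoping the lemma as a descriptive claim about the candidate's reasoning rather than an endorsement of 594---is a reasonable elaboration but not a different approach.
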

\begin{proof}
Dividing 594 by 3, we get $594 / 3 = 198$.
\end{proof}

\end{tcolorbox}

\subsection{LiveBench Reasoning Example}
\begin{tcolorbox}[title=Question, breakable]

In this question, assume each person either always tells the truth or always lies. Elowen is at the hotel. Max is at the park. Grace is at the ice skating rink. The person at the barbershop says the person at the library tells the truth. Charlie is at the library. Tala is at the campground. Farid is at the theater. The person at the hotel saw a firetruck. The person at the vineyard says the person at the campground lies. The person at the farm says the person at the movie theater lies. The person at the movie theater says the person at the park lies. The person at the park says the person at the hotel tells the truth. The person at the hotel says the person at the library tells the truth. Jaxon is at the skate park. Theo is at the barbershop. The person at the airport tells the truth. The person at the bowling alley thinks their neighbor is telling the truth. The person at the ice skating rink says the person at the bowling alley tells the truth. Anika is at the airport. The person at the bookstore says the person at the vineyard lies. The person at the campground says the person at the movie theater tells the truth. The person at the restaurant says the person at the campground tells the truth. The person at the theater says the person at the park lies. Hiroshi is at the school. Zara is at the bowling alley. The person at the bowling alley says the person at the school tells the truth. The person at the park thinks their neighbor is telling the truth. The person at the skate park lies. The person at the bowling alley saw a firetruck. The person at the observatory lies. Liam is at the restaurant. The person at the library says the person at the observatory tells the truth. The person at the bowling alley says the person at the vineyard lies. Beatriz is at the observatory. Priya is at the movie theater. Nia is at the farm. Ryan is at the vineyard. Isabella is at the bookstore. Does the person at the ice skating rink tell the truth? Does the person at the bowling alley tell the truth? Does the person at the vineyard tell the truth? Think step by step, and then put your answer in **bold** as a list of three words, yes or no (for example, **yes, no, yes**). If you don't know, guess.

\end{tcolorbox}
\begin{tcolorbox}[title=Correct Answer, breakable]

\boxed{yes, yes, no}

\end{tcolorbox}
\begin{tcolorbox}[title=Candidate Response, breakable]

Here's how we can break down this logic puzzle:

1. **Start with a known truth-teller or liar:** Jaxon is at the skate park and lies.

2. **Use Jaxon's lie to find more information:** Since Jaxon lies, the person at the bowling alley, Zara, *cannot* be telling the truth. Zara says the person at the school tells the truth, but Zara is lying so Hiroshi at the school lies. Zara also says the person at the vineyard lies.  Since Zara lies, Ryan at the vineyard must tell the truth.

3. **Focus on the vineyard:** Ryan, a truth-teller at the vineyard, says Tala at the campground lies.

4. **Follow Tala's lie:** Tala lies and says the person at the movie theater tells the truth. This means Priya at the movie theater *lies*.

5. **Use Priya's lie:** Priya at the movie theater lies and says Max at the park lies. Therefore, Max tells the truth.

6. **Use Max's truth:** Max at the park tells the truth and says Elowen at the hotel tells the truth.

7. **Use Elowen's truth:** Elowen tells the truth and says the person at the library tells the truth. This means Charlie at the library tells the truth.

8. **Use Charlie's truth:** Charlie tells the truth and says Beatriz at the observatory lies. This is confirmed by Jaxon's lie - meaning two people lie, therefore Liam at the restaurant lied and Tala lied (so Ryan is truthful) therefore so is Isabella. And Charlie who said that Beatriz lied.

9. **Grace and Zara:** Grace at the ice skating rink says Zara at the bowling alley tells the truth.  Since Zara lies, Grace also lies.

Therefore:
* Grace (ice skating rink) lies.
* Zara (bowling alley) lies.
* Ryan (vineyard) tells the truth.

**no, no, yes**

\end{tcolorbox}
\begin{tcolorbox}[title=Rewritten Candidate Response, breakable]

\begin{theorem}[Main Claim]
Given the set of statements made by individuals located at different places, where each individual either always tells the truth or always lies, determine whether Grace (ice skating rink), Zara (bowling alley), and Ryan (vineyard) tell the truth. The answer is: Grace lies, Zara lies, Ryan tells the truth.
\end{theorem}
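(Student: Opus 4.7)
The plan is to treat every person's truth-telling status as a boolean unknown and propagate constraints outward from the three facts the narrator asserts unconditionally: Jaxon at the skate park lies, Anika at the airport tells the truth, and Beatriz at the observatory lies. Every remaining piece of information has the shape ``person $A$ says person $B$ tells the truth'' or ``person $A$ says person $B$ lies,'' each of which translates into a biconditional pinning the status of $A$ to that of $B$. Cascading these biconditionals outward from the three anchors should determine every relevant person's status, at which point the answer for the ice skating rink, the bowling alley, and the vineyard can simply be read off.

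The chain I would trace begins at the observatory anchor. Since Beatriz lies, Charlie at the library, who vouches for the observatory, also lies; this forces Elowen at the hotel and Theo at the barbershop, who both vouch for the library, to lie as well. Max at the park vouches for the hotel, so Max lies, which in turn makes Priya at the movie theater and Farid at the theater (both of whom accuse the park) tell the truth. Tala at the campground vouches for the movie theater, so Tala tells the truth; consequently Nia at the farm lies, Liam at the restaurant tells the truth, and Ryan at the vineyard --- who accuses the campground --- is forced to lie. Isabella at the bookstore and Zara at the bowling alley (both of whom accuse the vineyard) then resolve as truth-tellers, and Grace at the ice skating rink (who vouches for the bowling alley) does too.

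The main obstacle is that this natural cascade produces the assignment ``Grace tells the truth, Zara tells the truth, Ryan lies,'' which is the opposite of the claimed conclusion on every coordinate. A valid proof of the theorem as stated would therefore need to find a legitimate move that inverts the entire chain; the only candidates I can see are the two ``thinks their neighbor is telling the truth'' statements at the park and the bowling alley, whose referents are not explicitly pinned down and might, under some interpretation, block the observatory-to-library propagation. I expect the hard part to be deciding what those neighbor statements actually assert and checking whether any reading of them is consistent with ``Grace lies, Zara lies, Ryan tells the truth''; absent such a reading, the honest conclusion is that the theorem as stated contradicts the cascade above and is not derivable from the stated constraints.
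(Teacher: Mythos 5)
Your analysis is mathematically correct, but you have been handed a trick target. The ``theorem'' you are asked to prove is, in the paper's appendix, an example of the \emph{rewritten candidate response} produced by the verification pipeline---it is a faithful reformalization, in theorem--lemma style, of a candidate solution that reaches the \emph{wrong} answer. The paper itself records the correct answer to this puzzle as ``yes, yes, no'' (Grace tells the truth, Zara tells the truth, Ryan lies), which is exactly what your observatory-anchored cascade produces. So you have in fact solved the puzzle correctly; what you cannot do---because it is impossible---is soundly derive the stated conclusion.

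The specific move you were searching for is precisely the one the paper's ``proof'' (i.e., the rewritten candidate) uses: its Lemma \emph{zara\_lies} reads ``The person at the bowling alley thinks their neighbor is telling the truth'' as Zara believing Jaxon (at the skate park) tells the truth, and then concludes that because Jaxon lies, Zara's belief is false and hence Zara lies. That is the candidate's error, not a legitimate inference: nothing in the puzzle identifies the skate park as the bowling alley's neighbor, and the conclusion drawn from it propagates forward through Ryan, Tala, Priya, Max, Elowen, and Charlie until it forces Beatriz at the observatory to tell the truth, directly contradicting the explicit clause ``The person at the observatory lies.'' So the paper's chain is internally inconsistent, exactly as you suspected. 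Your honest conclusion---that the theorem contradicts the forced cascade and is not derivable from the stated constraints---is the right one; the only thing you could not have known without the appendix is that the paper never intended this ``theorem'' to be true, only to illustrate how a flawed chain of thought gets rewritten into a form where the flaw (here, the spurious neighbor lemma) is isolated and easy to spot.
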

\begin{proof}
By Lemma \ref{lem:jaxon_lies}, Jaxon lies.

Given that Jaxon lies, by Lemma \ref{lem:zara_lies}, Zara lies.

Since Jaxon lies, and Zara lies, by Lemma \ref{lem:ryan_truth}, Ryan tells the truth.

Since Zara lies, and Grace says Zara tells the truth, by Lemma \ref{lem:grace_lies}, Grace lies.

Therefore, Grace lies, Zara lies, and Ryan tells the truth. This corresponds to the answer **no, no, yes**.

\end{proof}

\begin{lemma}
\label{lem:jaxon_lies}
Jaxon lies.
\end{lemma}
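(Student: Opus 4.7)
The plan is to derive the lemma directly from two atomic facts supplied in the puzzle statement, with no chain of inference over other characters' claims. The only pieces of data I need to invoke are the assertions "Jaxon is at the skate park" and "The person at the skate park lies," both of which appear verbatim in the problem and are therefore admissible as axioms of the puzzle.

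First, I would fix notation by letting $\mathrm{loc}(x)$ denote the location of person $x$ and treating each of the predicates "person $x$ tells the truth" and "person $x$ lies" as complementary, which is justified by the problem's global hypothesis that every individual is either a consistent truth-teller or a consistent liar. Under this setup, "the person at the skate park lies" translates to the universally quantified statement that any individual $y$ with $\mathrm{loc}(y) = \text{skate park}$ satisfies the liar predicate.

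Next, I would instantiate that universal statement at $y = \text{Jaxon}$, using the other cited fact $\mathrm{loc}(\text{Jaxon}) = \text{skate park}$ to discharge the hypothesis. The conclusion is precisely the statement of the lemma.

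The main, and really only, subtlety is ensuring that the predicate "the person at the skate park lies" is interpreted as a property of whoever occupies that location rather than as a claim made by a separate individual about the skate-park occupant; under the puzzle's conventions this is the correct reading, since uses of "the person at $\ell$" without an accompanying verb of reporting refer to the occupant's own truth-telling status. Once that reading is fixed, the proof is a single modus ponens and no further obstacle remains.
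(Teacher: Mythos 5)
Your proposal is correct and follows the same reasoning as the paper's proof: both identify the two given facts ("Jaxon is at the skate park" and "The person at the skate park lies") and conclude by direct substitution that Jaxon lies. Your write-up merely adds notational scaffolding and an explicit remark on reading "the person at the skate park lies" as a direct assertion of the occupant's status, which the paper leaves implicit.
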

\begin{proof}
The problem states "The person at the skate park lies."  The problem also states "Jaxon is at the skate park." Therefore, Jaxon lies.
\end{proof}

\begin{lemma}
\label{lem:zara_lies}
If Jaxon lies, then Zara lies.
\end{lemma}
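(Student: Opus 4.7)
The plan is to formalize the candidate solution's assertion that ``Jaxon lies'' implies ``Zara lies'' by tracing the single puzzle clause that could syntactically bridge the skate park and the bowling alley, namely the sentence ``The person at the bowling alley thinks their neighbor is telling the truth.'' Interpreting this as Zara making an explicit claim of truthfulness about one specific other person, and then identifying that ``neighbor'' with Jaxon, would give the chain: Zara claims Jaxon tells the truth; by hypothesis Jaxon lies; therefore Zara's claim is false; therefore, since every person is either a truth-teller or a liar, Zara lies.

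First, I would catalog every statement in the puzzle that mentions either the bowling alley or the skate park, to confirm that no more direct connection exists. The relevant clauses are Zara's statements about the school and the vineyard, Grace's statement that Zara tells the truth, Zara's observation of a firetruck, and the ``neighbor'' clause; only the last involves the skate park even indirectly, and then only through a pronoun. Second, I would attempt to justify an interpretation of ``neighbor'' under which the skate park is unambiguously Zara's neighbor---perhaps from the ordering of locations as they appear in the problem, or from some implicit puzzle-genre convention about physical adjacency. Third, under that interpretation, combining Zara's asserted proposition with the hypothesis immediately yields that Zara lies, and the lemma is established.

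The principal obstacle is that the puzzle never defines ``neighbor,'' and there is no canonical adjacency relation on the listed locations; the identification of Jaxon as Zara's neighbor is therefore not deductively forced by any statement given in the problem. I expect this to be the single most suspicious step in the entire rewrite, and I would flag it in a scratchpad note before committing to the proof. In the spirit of steelmanning I would proceed by treating the ``neighbor equals Jaxon'' reading as a working assumption, build the downstream lemmas on top of it, and then mark this lemma as a prime candidate for revisitation during the later error-analysis phase---particularly because if this interpretation fails, the alternative (that Zara's ``neighbor'' refers to someone whose truthfulness is independently established) would likely flip Zara's truth value and collapse the candidate's entire downstream derivation.
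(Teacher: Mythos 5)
Your proposal matches the paper's own proof of this lemma: both hinge on the clause ``The person at the bowling alley thinks their neighbor is telling the truth,'' identify that neighbor as Jaxon at the skate park, and conclude that since Jaxon lies, Zara holds (or asserts) a false belief and must therefore be a liar. The unjustified ``neighbor $=$ Jaxon'' identification you flag is exactly the step the paper's proof asserts without support (consistent with its steelmanning of the flawed candidate solution), so your approach is essentially the same.
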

\begin{proof}
The problem states "The person at the bowling alley thinks their neighbor is telling the truth." Zara is at the bowling alley. Jaxon is Zara's neighbor at the skate park. If Jaxon lies, then Zara must believe a lie, implying Zara lies. 
\end{proof}

\begin{lemma}
\label{lem:hiroshi_lies}
If Zara lies, and Zara says Hiroshi tells the truth, then Hiroshi lies.
\end{lemma}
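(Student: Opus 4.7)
The plan is to prove this lemma by directly unpacking the definitional assumption that every person in the puzzle either always tells the truth or always lies. Under this assumption, a liar's assertions are uniformly false, so whenever a liar $X$ asserts a statement $S$, we may conclude $\neg S$. The proof is thus a two-line syllogism: apply this principle with $X = $ Zara and $S = $ ``Hiroshi tells the truth,'' and then observe that the negation of ``Hiroshi tells the truth,'' under the binary truth-teller/liar dichotomy, is precisely ``Hiroshi lies.''

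Concretely, I would first restate the hypothesis that Zara lies, and cite (or restate) the puzzle's stipulation that Zara is located at the bowling alley together with the problem sentence ``The person at the bowling alley says the person at the school tells the truth,'' combined with ``Hiroshi is at the school,'' to justify that Zara's asserted statement is indeed ``Hiroshi tells the truth.'' This pins down $S$ unambiguously from the data of the problem, rather than taking it as a free-standing hypothesis.

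Next I would invoke the definition of a liar to conclude $\neg S$, i.e., it is not the case that Hiroshi tells the truth. Finally, I would appeal to the dichotomy that each person either always tells the truth or always lies, which gives $\neg(\text{tells the truth}) \Longleftrightarrow \text{lies}$, and conclude that Hiroshi lies.

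I do not anticipate any genuine obstacle, since this lemma is a pure definitional unfolding with no combinatorial content; the only mild care needed is to make the identification of ``the person at the school'' with Hiroshi explicit, so that the lemma as stated (which refers to Hiroshi by name) matches the indirect reference (``the person at the school'') used in Zara's asserted statement. Once that identification is recorded, the conclusion follows in one line.
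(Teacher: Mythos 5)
Your proposal is correct and matches the paper's own proof: both argue that since Zara is a liar, her assertion ``Hiroshi tells the truth'' must be false, and by the truth-teller/liar dichotomy Hiroshi therefore lies. The extra step you add---grounding the assertion in the puzzle data (Zara at the bowling alley, Hiroshi at the school)---is harmless but unnecessary, since the lemma already takes ``Zara says Hiroshi tells the truth'' as a hypothesis.
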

\begin{proof}
Zara lies, and Zara says Hiroshi tells the truth.  Since Zara lies about Hiroshi telling the truth, Hiroshi must lie.
\end{proof}

\begin{lemma}
\label{lem:ryan_truth}
If Zara lies, and Zara says Ryan lies, then Ryan tells the truth.
\end{lemma}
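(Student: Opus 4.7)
The plan is to apply the defining property of a liar directly to the hypothesis, so no lengthy chain of deductions is required. The lemma supplies two facts: (i) Zara always lies, and (ii) Zara makes the statement ``Ryan lies.'' Since every utterance of a consistent liar is false, I would argue that fact (ii) must be false, i.e.\ it is not the case that Ryan lies.

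By the puzzle's standing dichotomy that each person either always tells the truth or always lies, the negation of ``Ryan lies'' is precisely ``Ryan tells the truth,'' which is the desired conclusion. I would formalize this as a one-step contrapositive: letting $S$ denote the proposition ``Ryan lies,'' Zara's status as a liar gives $\neg S$, and $\neg S$ together with the dichotomy yields ``Ryan tells the truth.''

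There is essentially no obstacle here; the only subtlety worth flagging is maintaining a clean separation between the two layers of logic at play. The outer layer is the meta-level claim about Zara's behavior (all her statements are false), while the inner layer is the object-level content of a specific statement (a claim about Ryan). Conflating these layers is the usual pitfall in puzzles of this form, but once the negation is written out explicitly the argument reduces to a single line and requires no case analysis.
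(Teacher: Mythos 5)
Your argument is correct and matches the paper's one-line proof: both deduce that Zara's claim ``Ryan lies'' is false because Zara always lies, and then invoke the truth-teller/liar dichotomy to conclude Ryan tells the truth. Your version merely makes the dichotomy step and the meta/object-level distinction explicit, which the paper leaves implicit.
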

\begin{proof}
Zara lies, and Zara says Ryan lies.  Since Zara lies about Ryan lying, Ryan must tell the truth.
\end{proof}

\begin{lemma}
\label{lem:tala_lies}
If Ryan tells the truth, and Ryan says Tala lies, then Tala lies.
\end{lemma}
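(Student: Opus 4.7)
The plan is to invoke the defining property of a truth-teller in this puzzle, namely that every statement made by a truth-teller is a true statement. This is the uniform assumption stipulated in the problem (each person either always tells the truth or always lies), so I would treat it as an axiom available to the proof without further justification.

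First I would unpack the two hypotheses of the lemma: (i) Ryan tells the truth, and (ii) Ryan asserts the statement ``Tala lies.'' By hypothesis (i) together with the truth-teller axiom, every assertion made by Ryan is a true statement. Applying this to the particular assertion identified in hypothesis (ii), the sentence ``Tala lies'' must hold. This directly yields the conclusion that Tala lies.

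I do not anticipate any real obstacle here; the lemma is a single application of modus ponens against the truth-teller definition, and its entire purpose in the surrounding development is to package this one inferential step in a reusable form so that the main proof of \Cref{lem:tala_lies}'s downstream consequences can cite it by name rather than re-deriving the principle each time. The proof writeup will accordingly be a two-sentence argument, parallel in form to the proofs of \Cref{lem:hiroshi_lies} and \Cref{lem:ryan_truth}, which use the same pattern (respectively, a liar's assertion being false, and a liar's assertion of a falsehood forcing the subject to be a truth-teller).
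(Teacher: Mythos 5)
Your proposal is correct and matches the paper's proof essentially verbatim: both simply invoke the truth-teller property (every statement of a truth-teller is true) and apply it to Ryan's assertion that Tala lies, concluding that Tala lies. There is no difference in approach, only in how explicitly the underlying axiom is named.
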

\begin{proof}
Ryan tells the truth, and Ryan says Tala lies.  Since Ryan tells the truth about Tala lying, Tala must lie.
\end{proof}

\begin{lemma}
\label{lem:priya_lies}
If Tala lies, and Tala says Priya tells the truth, then Priya lies.
\end{lemma}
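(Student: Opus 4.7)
The plan is to apply the defining property of a liar directly to Tala's assertion about Priya. Under the puzzle's standing assumption, each individual either always tells the truth or always lies, so every statement uttered by a liar is false. First I would invoke hypothesis (i), that Tala lies, to conclude that any statement Tala makes is false. Then I would apply this to hypothesis (ii), the assertion ``Priya tells the truth,'' to conclude that its negation holds: Priya does not tell the truth.

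To close the argument, I would appeal to the binary dichotomy once more: since Priya is not a truth-teller, Priya must be a liar. This parallels the proof of Lemma~\ref{lem:hiroshi_lies} almost verbatim, with Tala playing the role of Zara and Priya playing the role of Hiroshi; the only substantive content is the liar-negates-assertion step. There is no real obstacle---the lemma is a single-step deduction from the dichotomy, and the main care is simply to make explicit that ``not telling the truth'' and ``lying'' are equivalent under the problem's assumption, so that the negation of Tala's (false) assertion yields the desired conclusion.
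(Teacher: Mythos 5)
Your proposal is correct and follows essentially the same route as the paper's proof: a liar's assertion is false, so Tala's claim that Priya tells the truth is false, and the truth-teller/liar dichotomy then forces Priya to be a liar. The paper's version is terser (it compresses the negation and dichotomy steps into one sentence), but the logic is identical.
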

\begin{proof}
Tala lies and says Priya tells the truth.  Since Tala lies about Priya telling the truth, Priya must lie.
\end{proof}

\begin{lemma}
\label{lem:max_truth}
If Priya lies, and Priya says Max lies, then Max tells the truth.
\end{lemma}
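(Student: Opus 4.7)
The plan is to apply directly the semantic dichotomy established at the start of the puzzle: every person either always tells the truth (each of their assertions is true) or always lies (each of their assertions is false). Under the hypothesis of the lemma, Priya belongs to the second category, so the truth value of any sentence she utters is the negation of that sentence's content. First I would fix this as the reasoning template, matching the template used in the proofs of Lemmas \ref{lem:hiroshi_lies} through \ref{lem:priya_lies}.

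Next I would instantiate the template at the assertion given in the hypothesis, namely Priya's statement ``Max lies''. Because Priya lies, this assertion is false, so its negation ``Max does not lie'' is true. Invoking the dichotomy a second time, ``does not lie'' is equivalent to ``tells the truth'', so Max tells the truth.

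There is essentially no obstacle: the argument is a two-line application of the puzzle's ground rule, identical in structure to the previous four lemmas in the chain. The only step that deserves a sentence of care is justifying that ``not a liar'' coincides with ``truth-teller''; this is exactly the content of the puzzle's opening sentence ``each person either always tells the truth or always lies'', which excludes any intermediate category and lets us collapse the double negation without further hypotheses.
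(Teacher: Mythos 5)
Your proposal is correct and matches the paper's own proof, which argues in the same way: Priya lies, so her claim that Max lies is false, and by the puzzle's truth-teller/liar dichotomy Max must therefore tell the truth. The only difference is that you spell out the collapse of the double negation explicitly, which the paper leaves implicit.
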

\begin{proof}
Priya lies and says Max lies.  Since Priya lies about Max lying, Max must tell the truth.
\end{proof}

\begin{lemma}
\label{lem:elowen_truth}
If Max tells the truth, and Max says Elowen tells the truth, then Elowen tells the truth.
\end{lemma}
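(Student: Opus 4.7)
The plan is to prove this by a direct application of the definition of a truth-teller, combined with the hypotheses supplied by the lemma. Recall that in the puzzle's setup, each person is stipulated to be either a consistent truth-teller or a consistent liar, so every statement attributed to a truth-teller must be true, and every statement attributed to a liar must be false. This is the single principle that needs to be invoked.

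First I would restate the two hypotheses explicitly: (i) Max is a truth-teller, and (ii) Max asserts the proposition ``Elowen tells the truth.'' From (i), every proposition that Max asserts is true. Applying this to the assertion in (ii) immediately yields that the proposition ``Elowen tells the truth'' is true, which is exactly the conclusion of the lemma.

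There is no real obstacle here; the lemma is essentially an instance of the defining property of a truth-teller, restricted to the specific pair (Max, Elowen) and the specific asserted proposition. The only point worth flagging is a meta-level one, namely that the sourcing of hypothesis (ii) in the main proof relies on the problem statement's clause \emph{``The person at the park says the person at the hotel tells the truth,''} together with the identifications ``Max is at the park'' and ``Elowen is at the hotel''; these identifications will be used when the lemma is invoked but are not part of the lemma's internal proof. Thus the proof itself is a single line of modus-ponens-style reasoning with no calculation required.
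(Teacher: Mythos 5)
Your proof is correct and takes essentially the same approach as the paper's: both simply invoke the defining property of a truth-teller (every statement a truth-teller makes is true) and apply it to Max's assertion about Elowen. Your added remark about how hypothesis (ii) is sourced from the puzzle clauses is a reasonable clarification but does not change the argument.
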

\begin{proof}
Max tells the truth, and Max says Elowen tells the truth. Since Max tells the truth about Elowen telling the truth, Elowen tells the truth.
\end{proof}

\begin{lemma}
\label{lem:charlie_truth}
If Elowen tells the truth, and Elowen says Charlie tells the truth, then Charlie tells the truth.
\end{lemma}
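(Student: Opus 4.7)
The plan is to apply the definition of a truth-teller directly. By hypothesis, Elowen always tells the truth, and Elowen's statement under consideration is the assertion that Charlie tells the truth. Since in this puzzle each person is, by stipulation, either a constant truth-teller or a constant liar, every statement made by a truth-teller must hold. Instantiating this principle on Elowen's specific statement immediately yields that Charlie tells the truth.

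Concretely, I would structure the proof in three short steps. First, invoke the assumption that Elowen tells the truth. Second, invoke the assumption that Elowen states ``Charlie tells the truth,'' and note that this is the content of the statement whose truth value we care about. Third, combine the two via the tautology ``if $X$ always tells the truth and $X$ asserts $P$, then $P$,'' with $X = \text{Elowen}$ and $P = \text{``Charlie tells the truth''}$, to conclude that Charlie tells the truth.

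There is essentially no mathematical obstacle, since the lemma is a direct instance of the semantic convention governing the puzzle. The only subtlety is bookkeeping: one should verify that the hypothesis matches a statement actually present in the problem text, namely ``The person at the hotel says the person at the library tells the truth,'' together with the location assignments ``Elowen is at the hotel'' and ``Charlie is at the library.'' Once these identifications are made, the conclusion follows in a single line, and no case analysis or auxiliary lemma is required.
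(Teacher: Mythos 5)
Your proof is correct and takes essentially the same approach as the paper: both apply the tautology that a truth-teller's assertions are true, instantiated with Elowen's statement about Charlie. The paper's version is simply terser, and omits the (helpful but non-essential) bookkeeping you add about matching the hypotheses to the problem text.
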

\begin{proof}
Elowen tells the truth and says Charlie tells the truth. Since Elowen tells the truth about Charlie telling the truth, Charlie tells the truth.
\end{proof}

\begin{lemma}
\label{lem:beatriz_lies}
If Charlie tells the truth, and Charlie says Beatriz lies, then Beatriz lies.
\end{lemma}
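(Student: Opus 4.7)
The plan is to invoke the basic semantics of a truth-teller, which is the implicit definition threaded through the entire puzzle: a person who ``tells the truth'' makes only true statements. Under this convention, the lemma reduces to a one-step modus ponens on the meta-level, and no case analysis or reference to other characters in the puzzle is required.

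Concretely, I would assume both hypotheses, namely that Charlie tells the truth and that Charlie asserts the proposition ``Beatriz lies.'' By the definition of a truth-teller, every assertion Charlie makes is true, so in particular the asserted proposition ``Beatriz lies'' is true. This immediately gives the conclusion. The structural form is identical to the proofs already written for Lemmas~\ref{lem:ryan_truth}, \ref{lem:tala_lies}, \ref{lem:priya_lies}, and \ref{lem:elowen_truth}, and the proof should mirror their wording for consistency.

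The only potential subtlety, which is a modeling concern rather than a logical obstacle, is verifying the second hypothesis against the puzzle text: the problem states ``The person at the library says the person at the observatory tells the truth,'' and Charlie is at the library while Beatriz is at the observatory. Thus in the actual puzzle, Charlie asserts that Beatriz tells the truth, not that Beatriz lies. This would make the hypothesis ``Charlie says Beatriz lies'' factually incorrect as a reading of the problem, and hence the lemma, while logically valid as a conditional, is not usable for the downstream proof as stated. I would flag this in a scratchpad as a suspected error in the decomposition, to be revisited when checking the main proof's appeal to this lemma, but I would not let it obstruct the proof of the lemma itself.

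The hard part is therefore not proving the lemma --- which is a single application of the truth-teller definition --- but noting that this lemma should probably have been stated with the opposite polarity (``Charlie says Beatriz tells the truth, therefore Beatriz tells the truth''), and that the candidate solution's chain of deductions through Charlie and Beatriz appears to rely on a misquotation of the puzzle.
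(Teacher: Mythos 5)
Your proof is correct and takes essentially the same approach as the paper's: a one-step application of the truth-teller semantics (a truth-teller's assertions are true, so Charlie's assertion that Beatriz lies must be true). Your additional observation is also on the mark --- the puzzle text has Charlie (library) say Beatriz (observatory) \emph{tells the truth}, so the hypothesis of this lemma misquotes the problem; the paper's rewritten solution deliberately steel-mans the candidate and so does not flag this, but catching the hallucinated polarity is precisely the kind of error the paper's verification pipeline aims to surface in its later error-checking stages.
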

\begin{proof}
Charlie tells the truth and says Beatriz lies. Since Charlie tells the truth about Beatriz lying, Beatriz lies.
\end{proof}

\begin{lemma}
\label{lem:grace_lies}
If Zara lies, and Grace says Zara tells the truth, then Grace lies.
\end{lemma}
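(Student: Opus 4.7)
The plan is to apply directly the dichotomy from the puzzle's setup, namely that each person either always tells the truth or always lies. First I would unpack the hypothesis: Zara lies, so every statement Zara makes is false; in particular, the proposition \emph{``Zara tells the truth''} is a false proposition about the actual state of affairs.

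Next I would consider Grace's utterance. By assumption, Grace asserts the proposition \emph{``Zara tells the truth.''} As just observed, that proposition is false. By the dichotomy, any speaker who utters a false proposition cannot be an always-truth-teller, so Grace must belong to the always-liars. Hence Grace lies, as required, completing the proof.

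The main (and essentially only) obstacle is trivial: unlike several of the earlier lemmas in this chain, which propagate truth values across a sequence of neighbors, this lemma is a one-step consequence of the definitional setup and invokes no other lemma. The only thing to be careful about is faithfully reading the hypothesis --- mistaking \emph{``Grace says Zara tells the truth''} for \emph{``Grace says Zara lies''} would flip the conclusion --- but once the two pieces of given data are combined, the inference is immediate and requires no case analysis.
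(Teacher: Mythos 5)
Your proposal is correct and matches the paper's approach: the paper's proof is simply the compressed version ("Since Grace claims the liar Zara tells the truth, Grace lies"), while you unpack the same one-step application of the truth-teller/liar dichotomy in more explicit detail.
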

\begin{proof}
Zara lies, and Grace says Zara tells the truth. Since Grace claims the liar Zara tells the truth, Grace lies.
\end{proof}

\end{tcolorbox}


\begin{thebibliography}{59}
\providecommand{\natexlab}[1]{#1}
\providecommand{\url}[1]{\texttt{#1}}
\expandafter\ifx\csname urlstyle\endcsname\relax
  \providecommand{\doi}[1]{doi: #1}\else
  \providecommand{\doi}{doi: \begingroup \urlstyle{rm}\Url}\fi

\bibitem[OpenAI(2024)]{o1-preview}
OpenAI.
\newblock {Introducing OpenAI o1-preview}.
\newblock \url{https://openai.com/index/introducing-openai-o1-preview/}, 2024.

\bibitem[Team(2025)]{deepseekai2025deepseekr1incentivizingreasoningcapability}
DeepSeek-AI Team.
\newblock Deepseek-r1: Incentivizing reasoning capability in llms via
  reinforcement learning, 2025.

\bibitem[Agarwal et~al.(2024)Agarwal, Singh, Zhang, Bohnet, Rosias, Chan,
  Zhang, Anand, Abbas, Nova, Co-Reyes, Chu, Behbahani, Faust, and
  Larochelle]{agarwal2024manyshotincontextlearning}
Rishabh Agarwal, Avi Singh, Lei~M. Zhang, Bernd Bohnet, Luis Rosias, Stephanie
  Chan, Biao Zhang, Ankesh Anand, Zaheer Abbas, Azade Nova, John~D. Co-Reyes,
  Eric Chu, Feryal Behbahani, Aleksandra Faust, and Hugo Larochelle.
\newblock Many-shot in-context learning, 2024.

\bibitem[Wei et~al.(2022)Wei, Wang, Schuurmans, Bosma, Ichter, Xia, Chi, Le,
  and Zhou]{DBLP:conf/nips/Wei0SBIXCLZ22}
Jason Wei, Xuezhi Wang, Dale Schuurmans, Maarten Bosma, Brian Ichter, Fei Xia,
  Ed~H. Chi, Quoc~V. Le, and Denny Zhou.
\newblock Chain-of-thought prompting elicits reasoning in large language
  models.
\newblock In Sanmi Koyejo, S.~Mohamed, A.~Agarwal, Danielle Belgrave, K.~Cho,
  and A.~Oh, editors, \emph{Advances in Neural Information Processing Systems
  35: Annual Conference on Neural Information Processing Systems 2022, NeurIPS
  2022, New Orleans, LA, USA, November 28 - December 9, 2022}, 2022.

\bibitem[Yao et~al.(2023)Yao, Yu, Zhao, Shafran, Griffiths, Cao, and
  Narasimhan]{DBLP:conf/nips/YaoYZS00N23}
Shunyu Yao, Dian Yu, Jeffrey Zhao, Izhak Shafran, Tom Griffiths, Yuan Cao, and
  Karthik Narasimhan.
\newblock Tree of thoughts: Deliberate problem solving with large language
  models.
\newblock In Alice Oh, Tristan Naumann, Amir Globerson, Kate Saenko, Moritz
  Hardt, and Sergey Levine, editors, \emph{Advances in Neural Information
  Processing Systems 36: Annual Conference on Neural Information Processing
  Systems 2023, NeurIPS 2023, New Orleans, LA, USA, December 10 - 16, 2023},
  2023.

\bibitem[Akyürek et~al.(2024)Akyürek, Damani, Qiu, Guo, Kim, and
  Andreas]{akyurek2024surprisingeffectivenesstesttimetraining}
Ekin Akyürek, Mehul Damani, Linlu Qiu, Han Guo, Yoon Kim, and Jacob Andreas.
\newblock The surprising effectiveness of test-time training for abstract
  reasoning, 2024.

\bibitem[Cobbe et~al.(2021)Cobbe, Kosaraju, Bavarian, Chen, Jun, Kaiser,
  Plappert, Tworek, Hilton, Nakano, Hesse, and Schulman]{cobbe_training_2021}
Karl Cobbe, Vineet Kosaraju, Mohammad Bavarian, Mark Chen, Heewoo Jun, Lukasz
  Kaiser, Matthias Plappert, Jerry Tworek, Jacob Hilton, Reiichiro Nakano,
  Christopher Hesse, and John Schulman.
\newblock Training {Verifiers} to {Solve} {Math} {Word} {Problems}, November
  2021.

\bibitem[Wang et~al.(2023)Wang, Wei, Schuurmans, Le, Chi, Narang, Chowdhery,
  and Zhou]{DBLP:conf/iclr/0002WSLCNCZ23}
Xuezhi Wang, Jason Wei, Dale Schuurmans, Quoc~V. Le, Ed~H. Chi, Sharan Narang,
  Aakanksha Chowdhery, and Denny Zhou.
\newblock Self-consistency improves chain of thought reasoning in language
  models.
\newblock In \emph{The Eleventh International Conference on Learning
  Representations, {ICLR} 2023, Kigali, Rwanda, May 1-5, 2023}. OpenReview.net,
  2023.

\bibitem[Xue et~al.(2023)Xue, Wang, Wang, Han, Yu, and Ji]{xue_rcot_2023}
Tianci Xue, Ziqi Wang, Zhenhailong Wang, Chi Han, Pengfei Yu, and Heng Ji.
\newblock {RCOT}: {Detecting} and {Rectifying} {Factual} {Inconsistency} in
  {Reasoning} by {Reversing} {Chain}-of-{Thought}, October 2023.

\bibitem[{Gemini Team}(2024)]{geminiteam2024gemini15unlockingmultimodal}
{Gemini Team}.
\newblock Gemini 1.5: Unlocking multimodal understanding across millions of
  tokens of context, 2024.

\bibitem[White et~al.(2024)White, Dooley, Roberts, Pal, Feuer, Jain,
  Shwartz-Ziv, Jain, Saifullah, Naidu, Hegde, LeCun, Goldstein, Neiswanger, and
  Goldblum]{white2024livebenchchallengingcontaminationfreellm}
Colin White, Samuel Dooley, Manley Roberts, Arka Pal, Ben Feuer, Siddhartha
  Jain, Ravid Shwartz-Ziv, Neel Jain, Khalid Saifullah, Siddartha Naidu,
  Chinmay Hegde, Yann LeCun, Tom Goldstein, Willie Neiswanger, and Micah
  Goldblum.
\newblock Livebench: A challenging, contamination-free llm benchmark, 2024.

\bibitem[MAA(2024)]{aime2024}
Mathematical Association of~America MAA.
\newblock {AIME 2024 Problem Set}, 2024.

\bibitem[Tyen et~al.(2024)Tyen, Mansoor, Cărbune, Chen, and
  Mak]{tyen_llms_2024}
Gladys Tyen, Hassan Mansoor, Victor Cărbune, Peter Chen, and Tony Mak.
\newblock {LLMs} cannot find reasoning errors, but can correct them given the
  error location, June 2024.

\bibitem[Hendrycks et~al.(2021)Hendrycks, Burns, Kadavath, Arora, Basart, Tang,
  Song, and Steinhardt]{DBLP:conf/nips/HendrycksBKABTS21}
Dan Hendrycks, Collin Burns, Saurav Kadavath, Akul Arora, Steven Basart, Eric
  Tang, Dawn Song, and Jacob Steinhardt.
\newblock Measuring mathematical problem solving with the {MATH} dataset.
\newblock In Joaquin Vanschoren and Sai{-}Kit Yeung, editors, \emph{Proceedings
  of the Neural Information Processing Systems Track on Datasets and Benchmarks
  1, NeurIPS Datasets and Benchmarks 2021, December 2021, virtual}, 2021.

\bibitem[Ouyang et~al.(2022)Ouyang, Wu, Jiang, Almeida, Wainwright, Mishkin,
  Zhang, Agarwal, Slama, Ray, Schulman, Hilton, Kelton, Miller, Simens, Askell,
  Welinder, Christiano, Leike, and Lowe]{DBLP:conf/nips/Ouyang0JAWMZASR22}
Long Ouyang, Jeffrey Wu, Xu~Jiang, Diogo Almeida, Carroll~L. Wainwright, Pamela
  Mishkin, Chong Zhang, Sandhini Agarwal, Katarina Slama, Alex Ray, John
  Schulman, Jacob Hilton, Fraser Kelton, Luke Miller, Maddie Simens, Amanda
  Askell, Peter Welinder, Paul~F. Christiano, Jan Leike, and Ryan Lowe.
\newblock Training language models to follow instructions with human feedback.
\newblock In Sanmi Koyejo, S.~Mohamed, A.~Agarwal, Danielle Belgrave, K.~Cho,
  and A.~Oh, editors, \emph{Advances in Neural Information Processing Systems
  35: Annual Conference on Neural Information Processing Systems 2022, NeurIPS
  2022, New Orleans, LA, USA, November 28 - December 9, 2022}, 2022.

\bibitem[Kamoi et~al.(2024{\natexlab{a}})Kamoi, Das, Lou, Ahn, Zhao, Lu, Zhang,
  Zhang, Zhang, Vummanthala, Dave, Qin, Cohan, Yin, and
  Zhang]{kamoi_evaluating_2024}
Ryo Kamoi, Sarkar Snigdha~Sarathi Das, Renze Lou, Jihyun~Janice Ahn, Yilun
  Zhao, Xiaoxin Lu, Nan Zhang, Yusen Zhang, Ranran~Haoran Zhang, Sujeeth~Reddy
  Vummanthala, Salika Dave, Shaobo Qin, Arman Cohan, Wenpeng Yin, and Rui
  Zhang.
\newblock Evaluating {LLMs} at {Detecting} {Errors} in {LLM} {Responses}, July
  2024{\natexlab{a}}.

\bibitem[Lightman et~al.(2024)Lightman, Kosaraju, Burda, Edwards, Baker, Lee,
  Leike, Schulman, Sutskever, and Cobbe]{DBLP:conf/iclr/LightmanKBEBLLS24}
Hunter Lightman, Vineet Kosaraju, Yuri Burda, Harrison Edwards, Bowen Baker,
  Teddy Lee, Jan Leike, John Schulman, Ilya Sutskever, and Karl Cobbe.
\newblock Let's verify step by step.
\newblock In \emph{The Twelfth International Conference on Learning
  Representations, {ICLR} 2024, Vienna, Austria, May 7-11, 2024}.
  OpenReview.net, 2024.

\bibitem[Zhang et~al.(2023)Zhang, Press, Merrill, Liu, and
  Smith]{zhang_how_2023}
Muru Zhang, Ofir Press, William Merrill, Alisa Liu, and Noah~A. Smith.
\newblock How {Language} {Model} {Hallucinations} {Can} {Snowball}, May 2023.

\bibitem[Orgad et~al.(2024)Orgad, Toker, Gekhman, Reichart, Szpektor, Kotek,
  and Belinkov]{orgad_llms_2024}
Hadas Orgad, Michael Toker, Zorik Gekhman, Roi Reichart, Idan Szpektor, Hadas
  Kotek, and Yonatan Belinkov.
\newblock {LLMs} {Know} {More} {Than} {They} {Show}: {On} the {Intrinsic}
  {Representation} of {LLM} {Hallucinations}, October 2024.

\bibitem[Snyder et~al.(2024)Snyder, Moisescu, and Zafar]{snyder_early_2024}
Ben Snyder, Marius Moisescu, and Muhammad~Bilal Zafar.
\newblock On {Early} {Detection} of {Hallucinations} in {Factual} {Question}
  {Answering}.
\newblock In \emph{Proceedings of the 30th {ACM} {SIGKDD} {Conference} on
  {Knowledge} {Discovery} and {Data} {Mining}}, pages 2721--2732, Barcelona
  Spain, August 2024. ACM.
\newblock ISBN 9798400704901.

\bibitem[Huang et~al.(2024)Huang, Chen, Mishra, Zheng, Yu, Song, and
  Zhou]{DBLP:conf/iclr/0009CMZYSZ24}
Jie Huang, Xinyun Chen, Swaroop Mishra, Huaixiu~Steven Zheng, Adams~Wei Yu,
  Xinying Song, and Denny Zhou.
\newblock Large language models cannot self-correct reasoning yet.
\newblock In \emph{The Twelfth International Conference on Learning
  Representations, {ICLR} 2024, Vienna, Austria, May 7-11, 2024}.
  OpenReview.net, 2024.

\bibitem[Welleck et~al.(2022)Welleck, Lu, West, Brahman, Shen, Khashabi, and
  Choi]{welleck_generating_2022}
Sean Welleck, Ximing Lu, Peter West, Faeze Brahman, Tianxiao Shen, Daniel
  Khashabi, and Yejin Choi.
\newblock Generating {Sequences} by {Learning} to {Self}-{Correct}, October
  2022.

\bibitem[Valmeekam et~al.(2023)Valmeekam, Marquez, and
  Kambhampati]{valmeekam2023largelanguagemodelsreally}
Karthik Valmeekam, Matthew Marquez, and Subbarao Kambhampati.
\newblock Can large language models really improve by self-critiquing their own
  plans?, 2023.

\bibitem[Anil et~al.(2024)Anil, Durmus, Rimsky, Sharma, Benton, Kundu, Batson,
  Tong, Mu, Ford, Mosconi, Agrawal, Schaeffer, Bashkansky, Svenningsen,
  Lambert, Radhakrishnan, Denison, Hubinger, Bai, Bricken, Maxwell, Schiefer,
  Sully, Tamkin, Lanham, Nguyen, Korbak, Kaplan, Ganguli, Bowman, Perez,
  Grosse, and Duvenaud]{anil2024jailbreaking}
Cem Anil, Esin Durmus, Nina Rimsky, Mrinank Sharma, Joe Benton, Sandipan Kundu,
  Joshua Batson, Meg Tong, Jesse Mu, Daniel~J Ford, Francesco Mosconi,
  Rajashree Agrawal, Rylan Schaeffer, Naomi Bashkansky, Samuel Svenningsen,
  Mike Lambert, Ansh Radhakrishnan, Carson Denison, Evan~J Hubinger, Yuntao
  Bai, Trenton Bricken, Timothy Maxwell, Nicholas Schiefer, James Sully, Alex
  Tamkin, Tamera Lanham, Karina Nguyen, Tomasz Korbak, Jared Kaplan, Deep
  Ganguli, Samuel~R. Bowman, Ethan Perez, Roger~Baker Grosse, and David
  Duvenaud.
\newblock Many-shot jailbreaking, 2024.

\bibitem[Snell et~al.(2024)Snell, Lee, Xu, and
  Kumar]{snell2024scalingllmtesttimecompute}
Charlie Snell, Jaehoon Lee, Kelvin Xu, and Aviral Kumar.
\newblock Scaling llm test-time compute optimally can be more effective than
  scaling model parameters, 2024.

\bibitem[Kadavath et~al.(2022)Kadavath, Conerly, Askell, Henighan, Drain,
  Perez, Schiefer, Hatfield-Dodds, DasSarma, Tran-Johnson, Johnston, El-Showk,
  Jones, Elhage, Hume, Chen, Bai, Bowman, Fort, Ganguli, Hernandez, Jacobson,
  Kernion, Kravec, Lovitt, Ndousse, Olsson, Ringer, Amodei, Brown, Clark,
  Joseph, Mann, McCandlish, Olah, and Kaplan]{kadavath_language_2022}
Saurav Kadavath, Tom Conerly, Amanda Askell, Tom Henighan, Dawn Drain, Ethan
  Perez, Nicholas Schiefer, Zac Hatfield-Dodds, Nova DasSarma, Eli
  Tran-Johnson, Scott Johnston, Sheer El-Showk, Andy Jones, Nelson Elhage,
  Tristan Hume, Anna Chen, Yuntao Bai, Sam Bowman, Stanislav Fort, Deep
  Ganguli, Danny Hernandez, Josh Jacobson, Jackson Kernion, Shauna Kravec,
  Liane Lovitt, Kamal Ndousse, Catherine Olsson, Sam Ringer, Dario Amodei, Tom
  Brown, Jack Clark, Nicholas Joseph, Ben Mann, Sam McCandlish, Chris Olah, and
  Jared Kaplan.
\newblock Language {Models} ({Mostly}) {Know} {What} {They} {Know}, November
  2022.

\bibitem[Saunders et~al.(2022)Saunders, Yeh, Wu, Bills, Ouyang, Ward, and
  Leike]{saunders_self-critiquing_2022}
William Saunders, Catherine Yeh, Jeff Wu, Steven Bills, Long Ouyang, Jonathan
  Ward, and Jan Leike.
\newblock Self-critiquing models for assisting human evaluators, June 2022.

\bibitem[Kim et~al.(2023)Kim, Baldi, and McAleer]{DBLP:conf/nips/KimBM23}
Geunwoo Kim, Pierre Baldi, and Stephen McAleer.
\newblock Language models can solve computer tasks.
\newblock In Alice Oh, Tristan Naumann, Amir Globerson, Kate Saenko, Moritz
  Hardt, and Sergey Levine, editors, \emph{Advances in Neural Information
  Processing Systems 36: Annual Conference on Neural Information Processing
  Systems 2023, NeurIPS 2023, New Orleans, LA, USA, December 10 - 16, 2023},
  2023.

\bibitem[Xie et~al.(2023)Xie, Kawaguchi, Zhao, Zhao, Kan, He, and
  Xie]{DBLP:conf/nips/XieKZZKHX23}
Yuxi Xie, Kenji Kawaguchi, Yiran Zhao, James~Xu Zhao, Min{-}Yen Kan, Junxian
  He, and Michael~Qizhe Xie.
\newblock Self-evaluation guided beam search for reasoning.
\newblock In Alice Oh, Tristan Naumann, Amir Globerson, Kate Saenko, Moritz
  Hardt, and Sergey Levine, editors, \emph{Advances in Neural Information
  Processing Systems 36: Annual Conference on Neural Information Processing
  Systems 2023, NeurIPS 2023, New Orleans, LA, USA, December 10 - 16, 2023},
  2023.

\bibitem[Weng et~al.(2023)Weng, Zhu, Xia, Li, He, Liu, Sun, Liu, and
  Zhao]{DBLP:conf/emnlp/WengZX0HLSLZ23}
Yixuan Weng, Minjun Zhu, Fei Xia, Bin Li, Shizhu He, Shengping Liu, Bin Sun,
  Kang Liu, and Jun Zhao.
\newblock Large language models are better reasoners with self-verification.
\newblock In Houda Bouamor, Juan Pino, and Kalika Bali, editors, \emph{Findings
  of the Association for Computational Linguistics: {EMNLP} 2023, Singapore,
  December 6-10, 2023}, pages 2550--2575. Association for Computational
  Linguistics, 2023.

\bibitem[Li et~al.(2023)Li, Lin, Zhang, Fu, Chen, Lou, and
  Chen]{li_making_2023}
Yifei Li, Zeqi Lin, Shizhuo Zhang, Qiang Fu, Bei Chen, Jian-Guang Lou, and
  Weizhu Chen.
\newblock Making {Large} {Language} {Models} {Better} {Reasoners} with
  {Step}-{Aware} {Verifier}, May 2023.

\bibitem[Liu et~al.(2024)Liu, Nassereldine, Yang, Xu, Hu, Li, Kumar, Lee, Qin,
  Shi, and Xiong]{liu_large_2024}
Dancheng Liu, Amir Nassereldine, Ziming Yang, Chenhui Xu, Yuting Hu, Jiajie Li,
  Utkarsh Kumar, Changjae Lee, Ruiyang Qin, Yiyu Shi, and Jinjun Xiong.
\newblock Large {Language} {Models} have {Intrinsic} {Self}-{Correction}
  {Ability}, December 2024.

\bibitem[Chow et~al.(2024)Chow, Tennenholtz, Gur, Zhuang, Dai, Thiagarajan,
  Boutilier, Agarwal, Kumar, and Faust]{chow_inference-aware_2024}
Yinlam Chow, Guy Tennenholtz, Izzeddin Gur, Vincent Zhuang, Bo~Dai, Sridhar
  Thiagarajan, Craig Boutilier, Rishabh Agarwal, Aviral Kumar, and Aleksandra
  Faust.
\newblock Inference-{Aware} {Fine}-{Tuning} for {Best}-of-{N} {Sampling} in
  {Large} {Language} {Models}, December 2024.

\bibitem[Jiang et~al.(2024)Jiang, Shi, Yu, Liu, Zhang, Li, and
  Kwok]{DBLP:conf/acl/JiangSYL0LK24}
Weisen Jiang, Han Shi, Longhui Yu, Zhengying Liu, Yu~Zhang, Zhenguo Li, and
  James~T. Kwok.
\newblock Forward-backward reasoning in large language models for mathematical
  verification.
\newblock In Lun{-}Wei Ku, Andre Martins, and Vivek Srikumar, editors,
  \emph{Findings of the Association for Computational Linguistics, {ACL} 2024,
  Bangkok, Thailand and virtual meeting, August 11-16, 2024}, pages 6647--6661.
  Association for Computational Linguistics, 2024.

\bibitem[Dhuliawala et~al.(2024)Dhuliawala, Komeili, Xu, Raileanu, Li,
  Celikyilmaz, and Weston]{DBLP:conf/acl/DhuliawalaKXRLC24}
Shehzaad Dhuliawala, Mojtaba Komeili, Jing Xu, Roberta Raileanu, Xian Li, Asli
  Celikyilmaz, and Jason Weston.
\newblock Chain-of-verification reduces hallucination in large language models.
\newblock In Lun{-}Wei Ku, Andre Martins, and Vivek Srikumar, editors,
  \emph{Findings of the Association for Computational Linguistics, {ACL} 2024,
  Bangkok, Thailand and virtual meeting, August 11-16, 2024}, pages 3563--3578.
  Association for Computational Linguistics, 2024.

\bibitem[Wu et~al.(2024)Wu, Zeng, Zhang, Tan, Shen, and Jiang]{wu_large_2024}
Zhenyu Wu, Qingkai Zeng, Zhihan Zhang, Zhaoxuan Tan, Chao Shen, and Meng Jiang.
\newblock Large {Language} {Models} {Can} {Self}-{Correct} with {Minimal}
  {Effort}.
\newblock June 2024.

\bibitem[Kamoi et~al.(2024{\natexlab{b}})Kamoi, Zhang, Zhang, Han, and
  Zhang]{kamoi_when_2024}
Ryo Kamoi, Yusen Zhang, Nan Zhang, Jiawei Han, and Rui Zhang.
\newblock When {Can} {LLMs} {Actually} {Correct} {Their} {Own} {Mistakes}? {A}
  {Critical} {Survey} of {Self}-{Correction} of {LLMs}.
\newblock \emph{Transactions of the Association for Computational Linguistics},
  12:\penalty0 1417--1440, 2024{\natexlab{b}}.

\bibitem[Wen et~al.(2024)Wen, Zhong, Khan, Perez, Steinhardt, Huang, Bowman,
  He, and Feng]{wen_language_2024}
Jiaxin Wen, Ruiqi Zhong, Akbir Khan, Ethan Perez, Jacob Steinhardt, Minlie
  Huang, Samuel~R. Bowman, He~He, and Shi Feng.
\newblock Language {Models} {Learn} to {Mislead} {Humans} via {RLHF}, September
  2024.

\bibitem[Chen et~al.(2024)Chen, Pan, Li, Ding, and Zhou]{chen_simple_2024}
Yanxi Chen, Xuchen Pan, Yaliang Li, Bolin Ding, and Jingren Zhou.
\newblock A {Simple} and {Provable} {Scaling} {Law} for the {Test}-{Time}
  {Compute} of {Large} {Language} {Models}, November 2024.

\bibitem[Kumar et~al.(2024)Kumar, Zhuang, Agarwal, Su, Co-Reyes, Singh, Baumli,
  Iqbal, Bishop, Roelofs, Zhang, McKinney, Shrivastava, Paduraru, Tucker,
  Precup, Behbahani, and Faust]{kumar_training_2024}
Aviral Kumar, Vincent Zhuang, Rishabh Agarwal, Yi~Su, John~D. Co-Reyes, Avi
  Singh, Kate Baumli, Shariq Iqbal, Colton Bishop, Rebecca Roelofs, Lei~M.
  Zhang, Kay McKinney, Disha Shrivastava, Cosmin Paduraru, George Tucker, Doina
  Precup, Feryal Behbahani, and Aleksandra Faust.
\newblock Training {Language} {Models} to {Self}-{Correct} via {Reinforcement}
  {Learning}, October 2024.

\bibitem[Qu et~al.(2024)Qu, Zhang, Garg, and Kumar]{qu_recursive_2024}
Yuxiao Qu, Tianjun Zhang, Naman Garg, and Aviral Kumar.
\newblock Recursive {Introspection}: {Teaching} {Language} {Model} {Agents}
  {How} to {Self}-{Improve}, July 2024.

\bibitem[Zhang et~al.(2024)Zhang, Hosseini, Bansal, Kazemi, Kumar, and
  Agarwal]{zhang_generative_2024}
Lunjun Zhang, Arian Hosseini, Hritik Bansal, Mehran Kazemi, Aviral Kumar, and
  Rishabh Agarwal.
\newblock Generative {Verifiers}: {Reward} {Modeling} as {Next}-{Token}
  {Prediction}, October 2024.

\bibitem[Ko et~al.(2025)Ko, Baek, and Hwang]{ko_real-time_2025}
Joonho Ko, Jinheon Baek, and Sung~Ju Hwang.
\newblock Real-time {Verification} and {Refinement} of {Language} {Model}
  {Text} {Generation}, January 2025.

\bibitem[Havrilla et~al.(2024)Havrilla, Raparthy, Nalmpantis, Dwivedi{-}Yu,
  Zhuravinskyi, Hambro, and Raileanu]{DBLP:conf/icml/HavrillaRNDZHR24}
Alexander Havrilla, Sharath~Chandra Raparthy, Christoforos Nalmpantis, Jane
  Dwivedi{-}Yu, Maksym Zhuravinskyi, Eric Hambro, and Roberta Raileanu.
\newblock Glore: When, where, and how to improve {LLM} reasoning via global and
  local refinements.
\newblock In \emph{Forty-first International Conference on Machine Learning,
  {ICML} 2024, Vienna, Austria, July 21-27, 2024}. OpenReview.net, 2024.

\bibitem[Min et~al.(2023)Min, Krishna, Lyu, Lewis, Yih, Koh, Iyyer,
  Zettlemoyer, and Hajishirzi]{min_factscore_2023}
Sewon Min, Kalpesh Krishna, Xinxi Lyu, Mike Lewis, Wen-tau Yih, Pang~Wei Koh,
  Mohit Iyyer, Luke Zettlemoyer, and Hannaneh Hajishirzi.
\newblock {FActScore}: {Fine}-grained {Atomic} {Evaluation} of {Factual}
  {Precision} in {Long} {Form} {Text} {Generation}, October 2023.

\bibitem[Gou et~al.(2024)Gou, Shao, Gong, Shen, Yang, Duan, and
  Chen]{gou_critic_2024}
Zhibin Gou, Zhihong Shao, Yeyun Gong, Yelong Shen, Yujiu Yang, Nan Duan, and
  Weizhu Chen.
\newblock {CRITIC}: {Large} {Language} {Models} {Can} {Self}-{Correct} with
  {Tool}-{Interactive} {Critiquing}, February 2024.

\bibitem[Gao et~al.(2024)Gao, Cai, Shuai, Gong, and Li]{gao_embedding_2024}
Kuofeng Gao, Huanqia Cai, Qingyao Shuai, Dihong Gong, and Zhifeng Li.
\newblock Embedding {Self}-{Correction} as an {Inherent} {Ability} in {Large}
  {Language} {Models} for {Enhanced} {Mathematical} {Reasoning}, October 2024.

\bibitem[Stechly et~al.(2024{\natexlab{a}})Stechly, Valmeekam, and
  Kambhampati]{stechly2024chainthoughtlessnessanalysiscot}
Kaya Stechly, Karthik Valmeekam, and Subbarao Kambhampati.
\newblock Chain of thoughtlessness? an analysis of cot in planning,
  2024{\natexlab{a}}.

\bibitem[Stechly et~al.(2024{\natexlab{b}})Stechly, Valmeekam, and
  Kambhampati]{stechly2024selfverificationlimitationslargelanguage}
Kaya Stechly, Karthik Valmeekam, and Subbarao Kambhampati.
\newblock On the self-verification limitations of large language models on
  reasoning and planning tasks, 2024{\natexlab{b}}.

\bibitem[Shinn et~al.(2023)Shinn, Cassano, Berman, Gopinath, Narasimhan, and
  Yao]{shinn_reflexion_2023}
Noah Shinn, Federico Cassano, Edward Berman, Ashwin Gopinath, Karthik
  Narasimhan, and Shunyu Yao.
\newblock Reflexion: {Language} {Agents} with {Verbal} {Reinforcement}
  {Learning}, October 2023.

\bibitem[Uesato et~al.(2022)Uesato, Kushman, Kumar, Song, Siegel, Wang,
  Creswell, Irving, and Higgins]{uesato_solving_2022}
Jonathan Uesato, Nate Kushman, Ramana Kumar, Francis Song, Noah Siegel, Lisa
  Wang, Antonia Creswell, Geoffrey Irving, and Irina Higgins.
\newblock Solving math word problems with process- and outcome-based feedback,
  November 2022.

\bibitem[Peng et~al.(2023)Peng, Galley, He, Cheng, Xie, Hu, Huang, Liden, Yu,
  Chen, and Gao]{peng_check_2023}
Baolin Peng, Michel Galley, Pengcheng He, Hao Cheng, Yujia Xie, Yu~Hu, Qiuyuan
  Huang, Lars Liden, Zhou Yu, Weizhu Chen, and Jianfeng Gao.
\newblock Check {Your} {Facts} and {Try} {Again}: {Improving} {Large}
  {Language} {Models} with {External} {Knowledge} and {Automated} {Feedback},
  March 2023.

\bibitem[Madaan et~al.(2023)Madaan, Tandon, Gupta, Hallinan, Gao, Wiegreffe,
  Alon, Dziri, Prabhumoye, Yang, Gupta, Majumder, Hermann, Welleck,
  Yazdanbakhsh, and Clark]{madaan_self-refine_2023}
Aman Madaan, Niket Tandon, Prakhar Gupta, Skyler Hallinan, Luyu Gao, Sarah
  Wiegreffe, Uri Alon, Nouha Dziri, Shrimai Prabhumoye, Yiming Yang, Shashank
  Gupta, Bodhisattwa~Prasad Majumder, Katherine Hermann, Sean Welleck, Amir
  Yazdanbakhsh, and Peter Clark.
\newblock Self-{Refine}: {Iterative} {Refinement} with {Self}-{Feedback}, May
  2023.

\bibitem[Paul et~al.(2024)Paul, Ismayilzada, Peyrard, Borges, Bosselut, West,
  and Faltings]{paul_refiner_2024}
Debjit Paul, Mete Ismayilzada, Maxime Peyrard, Beatriz Borges, Antoine
  Bosselut, Robert West, and Boi Faltings.
\newblock {REFINER}: {Reasoning} {Feedback} on {Intermediate}
  {Representations}, February 2024.

\bibitem[An et~al.(2024)An, Ma, Lin, Zheng, Lou, and Chen]{an_learning_2024}
Shengnan An, Zexiong Ma, Zeqi Lin, Nanning Zheng, Jian-Guang Lou, and Weizhu
  Chen.
\newblock Learning {From} {Mistakes} {Makes} {LLM} {Better} {Reasoner}, March
  2024.

\bibitem[Singh et~al.(2024)Singh, Co{-}Reyes, Agarwal, Anand, Patil, Garcia,
  Liu, Harrison, Lee, Xu, Parisi, Kumar, Alemi, Rizkowsky, Nova, Adlam, Bohnet,
  Elsayed, Sedghi, Mordatch, Simpson, Gur, Snoek, Pennington, Hron, Kenealy,
  Swersky, Mahajan, Culp, Xiao, Bileschi, Constant, Novak, Liu, Warkentin,
  Qian, Bansal, Dyer, Neyshabur, Sohl{-}Dickstein, and
  Fiedel]{DBLP:journals/tmlr/SinghCAAPGLH0XP24}
Avi Singh, John~D. Co{-}Reyes, Rishabh Agarwal, Ankesh Anand, Piyush Patil,
  Xavier Garcia, Peter~J. Liu, James Harrison, Jaehoon Lee, Kelvin Xu, Aaron~T.
  Parisi, Abhishek Kumar, Alexander~A. Alemi, Alex Rizkowsky, Azade Nova, Ben
  Adlam, Bernd Bohnet, Gamaleldin~Fathy Elsayed, Hanie Sedghi, Igor Mordatch,
  Isabelle Simpson, Izzeddin Gur, Jasper Snoek, Jeffrey Pennington, Jiri Hron,
  Kathleen Kenealy, Kevin Swersky, Kshiteej Mahajan, Laura Culp, Lechao Xiao,
  Maxwell~L. Bileschi, Noah Constant, Roman Novak, Rosanne Liu, Tris Warkentin,
  Yundi Qian, Yamini Bansal, Ethan Dyer, Behnam Neyshabur, Jascha
  Sohl{-}Dickstein, and Noah Fiedel.
\newblock Beyond human data: Scaling self-training for problem-solving with
  language models.
\newblock \emph{Trans. Mach. Learn. Res.}, 2024, 2024.

\bibitem[Anil et~al.(2021)Anil, Zhang, Wu, and Grosse]{anil_learning_2021}
Cem Anil, Guodong Zhang, Yuhuai Wu, and Roger Grosse.
\newblock Learning to {Give} {Checkable} {Answers} with {Prover}-{Verifier}
  {Games}, August 2021.

\bibitem[Kirchner et~al.(2024)Kirchner, Chen, Edwards, Leike, McAleese, and
  Burda]{kirchner2024proververifiergamesimprovelegibility}
Jan~Hendrik Kirchner, Yining Chen, Harri Edwards, Jan Leike, Nat McAleese, and
  Yuri Burda.
\newblock Prover-verifier games improve legibility of llm outputs, 2024.

\bibitem[Collins et~al.()Collins, Jiang, Frieder, Wong, Zilka, Bhatt,
  Lukasiewicz, Wu, Tenenbaum, Hart, Gowers, Li, Weller, and
  Jamnik]{collins_evaluating_nodate}
Katherine~M. Collins, Albert~Q. Jiang, Simon Frieder, Lionel Wong, Miri Zilka,
  Umang Bhatt, Thomas Lukasiewicz, Yuhuai Wu, Joshua~B. Tenenbaum, William
  Hart, Timothy Gowers, Wenda Li, Adrian Weller, and Mateja Jamnik.
\newblock Evaluating language models for mathematics through interactions.
\newblock \emph{Proceedings of the National Academy of Sciences of the United
  States of America}, 121\penalty0 (24):\penalty0 e2318124121.
\newblock ISSN 0027-8424.

\end{thebibliography}
\end{document}